\newif\ifshownavigationpage
\newif\ifshowreminders
\newif\ifshownotationindex
\newif\ifshowtheoremlinks
\newif\ifshowtheoremtree
\newif\ifshowtheoremlist
\newif\ifshowequationlist
\newif\ifshowcomments
\newif\ifhighlight 
\newif\ifelaborate
\newif\ifshowaddressedcomments
\newif\ifshowrvin
\newif\ifshowrvout
\renewcommand{\ALG@beginalgorithmic}{\scriptsize}
\DeclareFontFamily{U}{mathx}{}
\DeclareFontShape{U}{mathx}{m}{n}{ <-> mathx10 }{}
\DeclareSymbolFont{mathx}{U}{mathx}{m}{n}
\newcommand{\stleq}{\underset{ \text{s.t.} }{\leq}}
    \renewcommand{\hl}[1]{#1}
\newcommand{\rvoutopacity}{20}
    \newcommand{\rvout}[1]{{\color{red!\rvoutopacity}{#1}} }
    \newcommand{\chrout}[1]{{\color{blue!\rvoutopacity}{#1}} }    
    \newcommand{\rvoutm}[1]{{\color{black!\rvoutopacity}{\ifmmode\text{\sout{\ensuremath{\displaystyle#1}}}\else\sout{#1}\fi}} } 
    \newcommand{\rvout}[1]{}
    \newcommand{\chrout}[1]{}
    \newcommand{\summ}[1]{{\color{blue}[summary: #1]} } 
    \newcommand{\chr}[1]{{\color{PineGreen}[CR: #1]} } 
    \newcommand{\xw}[1]{{\color{RoyalBlue}[XW: #1]} } 
        \newcommand{\chra}[1]{{\color{PineGreen}\sout{[CR: #1]}} } 
        \newcommand{\xwa}[1]{{\color{RoyalBlue}\sout{[XW: #1]}} } 
        \newcommand{\chra}[1]{} 
        \newcommand{\xwa}[1]{} 
    \newcommand{\summ}[1]{} 
    \newcommand{\chr}[1]{} 
    \newcommand{\chra}[1]{} 
    \newcommand{\xw}[1]{} 
    \newcommand{\xwa}[1]{} 
\newlist{thmdependence}{itemize}{10}
\setlist[thmdependence]{nosep,label=-}
    \newcommand{\linksinthm}[1]{\emph{\linkdest{location, #1}\linktopf{#1} \linktothmtree{location, thm tree #1} }}
    \newcommand{\linksinthmwopf}[1]{\emph{\linkdest{location, #1} \linktothmtree{location, thm tree #1} }}
    \newcommand{\linksinpf}[1]{\linkdest{location, proof of #1}\linktothm{#1} \linktothmtree{location, thm tree #1} }
    \newcommand{\linksinthm}[1]{}
    \newcommand{\linksinthmwopf}[1]{}
    \newcommand{\linksinpf}[1]{}
\newcommand{\linktopf}[1]{\hyperlink{location, proof of #1}{\pflinksymbol}}
\newcommand{\linktothm}[1]{\hyperlink{location, #1}{\thmlinksymbol}}
\newcommand{\linktothmtree}[1]{\hyperlink{#1}{\thmtreelinksymbol}}
\newcommand{\thmlinksymbol}{{\tiny [Theorem]}}
\newcommand{\pflinksymbol}{{\tiny [Proof]}}
\newcommand{\thmtreelinksymbol}{{\tiny [ThmTree]}}
\newcommand{\linkdest}[1]{%
    \Hy@raisedlink{\raisebox{5mm}[0pt][0pt]{\hypertarget{#1}{}}}%
}
\newcommand{\elaborateopacity}{50}
\newcommand{\elaboratecolor}{RawSienna}
    \newcommand{\elaborate}[1]{{\color{\elaboratecolor!\elaborateopacity}{
    \begin{framed}
    \noindent {\footnotesize[Elaboration]}
    #1 
    \end{framed}
    }}\noindent}
    \newcommand{\elaborate}[1]{}
\newtheorem{theorem}{Theorem}
\newtheorem{lemma}[theorem]{Lemma}
\newtheorem*{lemma*}{Lemma}
\newtheorem{corollary}[theorem]{Corollary}
\newtheorem{proposition}[theorem]{Proposition}
\newtheorem{assumption}{Assumption}
\newtheorem{remark}{Remark}
\newtheorem*{theorem-nonumber}{Theorem}
\newtheorem*{condition-nonumber}{Condition}
\newtheorem*{proposition-nonumber}{Proposition}
\DeclarePairedDelimiter{\ceil}{\lceil}{\rceil}
\DeclarePairedDelimiter\floor{\lfloor}{\rfloor}
\newcommand{\cmt}[1]{#1} 
\renewcommand{\cmt}[1]{} 
\renewcommand{\P}{\mathbf{P}}
\newcommand{\E}{\mathbf{E}}
\newcommand{\RV}{\mathcal{RV}}
\newcommand{\R}{\mathbb{R}}
\renewcommand{\complement}{c}
\newcommand{\lo}{\mathit{o}}
\newcommand{\bo}{{O}}
\def\delequal{\mathrel{\ensurestackMath{\stackon[1pt]{=}{\scriptscriptstyle\Delta}}}}
\def\distequal{\mathrel{\ensurestackMath{\stackon[1pt]{=}{\scriptstyle d}}}}
\algrenewcommand\algorithmicrequire{\textbf{Require:}}
\algrenewcommand\algorithmicensure{\textbf{Postcondition:}}
\title{Multi-agent Multi-armed Bandit with Fully Heavy-tailed Dynamics}
\author{Xingyu Wang 
        \\ \href{mailto:x.wang4@uva.nl}{x.wang4@uva.nl} 
        \\ Quantitative Economics, University of Amsterdam 
        \\ Amsterdam, 1018 WB, NL

   \and Mengfan Xu\footnote{Corresponding Author} \\  \href{mailto:mengfanxu@umass.edu}{mengfanxu@umass.edu} \\ 
   Mechanical and Industrial Engineering,  University of Massachusetts Amherst\\
   Amherst, MA, 01003, USA}
\begin{document}

\maketitle

\begin{abstract}
\noindent
We study decentralized multi-agent multi-armed bandits 
in fully heavy-tailed settings, where clients communicate over sparse random graphs with heavy-tailed degree distributions and observe heavy-tailed (homogeneous or heterogeneous) reward distributions with potentially infinite variance. 
The objective is to maximize system performance by pulling the globally optimal arm with the highest global reward mean across all clients. 
We are the first to address such fully heavy-tailed scenarios, which capture the dynamics and challenges in communication and inference among multiple clients in real-world systems.
In homogeneous settings,
our algorithmic framework exploits hub-like structures unique to heavy-tailed graphs,
allowing clients to aggregate rewards and reduce noises via hub estimators when constructing UCB indices;
under $M$ clients and degree distributions with power-law index $\alpha > 1$, 
our algorithm attains a regret bound (almost) of order $O(M^{1 -\frac{1}{\alpha}} \log{T})$.
Under heterogeneous rewards,
clients synchronize by communicating with neighbors, aggregating exchanged estimators in UCB indices;
With our newly established information delay bounds on sparse random graphs, 
we prove a regret bound of $O(M \log{T})$.
Our results improve upon existing work, which only address time-invariant connected graphs, or 
light-tailed dynamics in dense graphs and rewards.
\end{abstract}

\section{Introduction}


Multi-armed Bandit (MAB) is an online sequential decision-making framework where a decision maker, or a client, pulls an arm from a finite set of arms at each time step, receives the reward of the pulled arm, and aims to maximize the cumulative received reward, or equivalently, minimize the regret compared to always pulling the optimal arm \cite{auer2002finite, auer2002nonstochastic}. 
Recent advancements have focused on its multi-agent variant, namely Multi-agent Multi-armed Bandit (MA-MAB), capturing the complexity of networks of decision makers in real-world scenarios.
A notable focus in on cooperative MA-MAB, where $M$ clients aim to optimize the regret of the entire network via communication, with respect to a globally optimal arm defined by the global reward averaged across all clients. 
In this work,
we consider a general and widely applicable setting with decentralization, where clients communicate on a graph without the stringent assumption of a central server.


Unique to decentralized MA-MAB is the coupling between graphs and time in the sequential regime, 
where two clients can communicate only when there is an edge between them on the graph.
Many studies have focused on time-invariant graphs (i.e., with edges remaining constant over time).
However, the emergence of examples such as ad-hoc wireless networks \cite{roman2013features} has motivated the recent interest in time-varying graphs. 
In particular, the random-graph setting allows graphs to be redrawn from a distribution at each time step.
For instance, \cite{xu2023decentralized} has recently explored 
the classical light-tailed Erdos-Renyi graphs in MA-MAB, where the edge between each pair of clients is sampled under a Bernoulli distribution with constant probability.
Notably, this model is symmetric and dense, with each client having exactly the same expected degree (i.e., the count of connected clients) of order $\bo(M)$.
However, real-world tasks often involve uneven and sparse communications:
the degree---representing the communication resource assigned to each client---can be rather heterogeneous among clients, and the resource for each individual may not scale with $M$. 
Enabling collaboration over such asymmetric and sparse graphs facilitates system-wide performance in cooperative MA-MAB, which depends on the collective performance of all clients rather than any single individual,
and has profound impact on promoting fairness and enhancing social good. 
This setting, however, remains unexplored and leaves a significant research gap.


This work focuses on sparse random graphs with power-law heavy-tailed degree distributions that capture the highly asymmetric network dynamics in real-world tasks, where a few vertices play hub-like roles with many connections to others, while most vertices have limited degrees. 
\rvout{More surprisingly, they lead to sparse graphs, where the expected degree of each vertex is uniformly bounded even as the graph size $M$ tends to infinity.}%
Such power-law heavy tails prevail in
real-world networks across numerous contexts, including finance and economics, transportation networks, online retailing, supply chains, social communications, and epidemiology, to name a few;
see, e.g., \cite{DACRUZ2013189,clancy2021epidemicscriticalrandomgraphs,hearnshaw2013complex,10.1145/2487788.2488173,doi:10.1137/070710111,PhysRevE.65.066130,PhysRevE.65.035108}.
Therefore, it is essential to address this research gap and build
solid theoretical and algorithmic foundations for MA-MAB over sparse and heavy-tailed random graphs,
thus enabling efficient and robust decision-making via time-evolving, uneven, and limited communication and coordination over a wide range of real-world networks.

Two aspects of the reward distributions are central to multi-agent MAB problems. 
First, if the expected reward of an arm is the same for all clients, it is categorized into homogeneous settings; otherwise, it is heterogeneous. 
While homogeneous MA-MAB is generally well understood, heterogeneous MA-MAB has recently gained attention and presents additional challenges.
This is particularly true in the decentralized setting, as inferring the global optimal arm requires the reward information from all clients. 
While Erdos-Renyi graphs have been addressed in \cite{xu2023decentralized},
their approach does not apply to our sparse and asymmetric setting. 
Second, 
the intensity of randomness (more specifically, tail behaviors) of reward distributions significantly impacts
the complexity of the problem. 
Aside from the long-standing focus on sub-Gaussian rewards in the bandit community,
there has been a recent interest in distributions with heavier tails, including the sub-Exponential class and Exponential families \cite{NIPS2013_aba3b6fd,JIA2021728}, and those with even heavier tails and infinite $p^\text{th}$ order moments \cite{bubeck2013bandits, 6516952,dubey2019thompson,pmlr-v151-tao22a}.
Inferring the mean value using reward observations under such extreme randomness becomes quite challenging, often necessitating the use of robust estimators in the algorithms.
Notably, such efforts are mostly limited to single-agent MAB. 
A recent work \cite{dubey2020cooperative} addresses heavy-tailed rewards with $(1+\epsilon)^\text{th}$-order moments in MA-MAB, but only considers the homogeneous-rewards setting and with time-invariant connected graphs.  
Considering heavy-tailed rewards in a more challenging and general setting with heterogeneous rewards and time-varying random graphs remains unexplored, a gap we address herein as well

In this paper, we focus on the following question: \emph{Can we formulate and solve the multi-agent multi-armed bandit problem with heavy-tailed random graphs and heavy-tailed rewards in both homogeneous and heterogeneous settings?}

\subsection{Main Contributions}

We hereby provide an affirmative answer to the research question through our contributions, elaborated as follows. We formulate the multi-agent multi-armed bandit problem over sparse, asymmetric, and heavy-tailed random graphs, and under rewards with potentially infinite variance.
Specifically, we consider rank-1 inhomogeneous random graphs \cite{boguna2003class,doi:10.1073/pnas.252631999} with heavy-tailed degree distributions,
a standard setting in literature 
(e.g., \cite{PhysRevE.95.022307,van_der_Hofstad_van_der_Hoorn_Litvak_Stegehuis_2020}).
In this framework,
the probability for having an edge between a pair of clients at each time step is dictated by (normalization of) attraction weights of clients;
under heavy-tailed weight distributions, some clients consistently play hub-like roles and often connect to many other clients.
Moreover, the graphs we consider are much more sparse (with $\bo (1)$ expected degree for each client) 
compared to Erdos-Renyi graphs (with $\bo (M)$ expected degree),
which translates to significantly reduced communication costs and is of broad interest in large-scale multi-agent learning problems.

Methodologically, we propose new algorithms for homogeneous and heterogeneous settings. 
In the homogeneous-reward setting,
we characterize and exploit the notion of hubs exclusive to heavy-tailed graphs:
clients over the hub communicate and aggregate rewards, achieving variance reduction proportional to the hub size, while other clients use delayed aggregation through a hub representative. 
This principle guides the design of our novel UCB index, which also incorporates the median-of-means estimator for robust estimation under heavy-tailed rewards.
In the heterogeneous setting, another challenge is asynchronization (differences in arm pulls) among clients due to variations in their reward distributions. 
To address this, clients use random sampling when asynchronization occurs, and deploy UCB-based strategies otherwise based on our newly constructed reward estimators.
Specifically, we propose an aggregation method that integrates the most recent heavy-tailed reward information from all clients, introducing novel information update mechanisms.

We establish theoretical guarantees for the proposed algorithms through comprehensive regret analyses. In homogeneous settings, we obtain a regret upper bound that is (almost) of order $O(M^{1-\frac{1}{\alpha}}\log T)$, which is sublinear in $M$. This improves upon the potential linearity in \cite{dubey2020cooperative} and demonstrates sample complexity reduction even under sparse graph structures. Under heterogeneous rewards, we derive an upper bound of order $O(M\log{T})$, 
extending the bound in \cite{xu2024decentralized} to sparse and asymmetric graphs with heavy-tailed rewards. The results highlight the consistency, effectiveness, and robustness of our approach.

The paper is organized as follows. 
Section~\ref{sec: problem formulation} sets notations and formulates the problem. 
Section~\ref{sec:graphs} explores properties that are exclusive to sparse, heavy-tailed graphs and useful for our MA-MAB setting. 
Then, we propose algorithms and conduct regret analyses for the homogeneous and heterogeneous settings in Sections \ref{sec:homo} and \ref{sec:heter}, respectively. Lastly, we conclude the paper and discuss future work in Section \ref{sec:conclusion}. 

\section{Problem Formulation}\label{sec: problem formulation}


We start with notations used throughout this paper.
Given a positive integer $k$, let ${[k]} = \{1,2,\ldots,k\}$.
We adopt the convention that $[0] = \emptyset$.
Given two sequences of non-negative real numbers $(x_n)_{n \geq 1}$ and $(y_n)_{n \geq 1}$, 
we say that $x_n = \bo (y_n)$ (as $n \to \infty$) if there exists some $C \in [0,\infty)$ such that $x_n \leq C y_n\ \forall n\geq 1$.
Besides, we say that $x_n = \lo (y_n)$ if $\lim_{n \rightarrow \infty} x_n/y_n = 0$.


Let $M$ denote the number of clients, which are labeled by $[M] = \{1,2,\ldots,M\}$. At each time $t = 1,2,\ldots$, the clients are distributed over an undirected graph $G_t = (V,E_t)$, where $V = [M]$, and $E_t$ is the set of edges (i.e. two clients communicate at time $t$ only if $(i,j) \in E_t$) generated by the following distribution:
independently for each pair, $(i,j) \in E_t\text{ with probability }P(h_i,h_j),
    \text{ and }(i,j) \notin E_t\text{ with probability }1 - P(h_i,h_j)$;
    where 
\begin{align}
            P(u,v) = \min\{ 1,\ uv /(\theta M) \}
            \qquad\forall u,v \geq 0,
            \label{def: kernel for the random graph}
\end{align}
$(h_i)_{i \geq 1}$ are independent copies of a positive random variable $h$,
and $\theta = \E h$.
Note that this model is the standard rank-1 inhomogeneous random graphs; e.g., 
\cite{boguna2003class,doi:10.1073/pnas.252631999}.
Intuitively speaking, $h_i$ is the weight assigned to the $i^\text{th}$ node, representing its \emph{attraction} to the other nodes.
The weight $h_i$ does not change with time $t$, and is close to the expected degree of the $i^\text{th}$ node (especially under large $M$) over the graphs $G_t$'s.
We use $\mathcal{N}_i(t)$ to denote the neighborhood set of the $i^\text{th}$ node at time $t$, which includes all the other nodes that are connected to $i$ over the graph $G_t$. 
Equivalently, the graph $G_t$ can be represented by the adjacency matrix $(X_{i,j}^t)_{1 \leq i,j \leq M}$
where $X_{i,j}^t = 1$ if there is an edge between nodes $i$ and $j$, and $X_{i,j}^t = 0$ otherwise. 
We set $X^t_{i,i} \equiv 1$ for any $1 \leq i \leq M$.
We also define the empirical adjacency matrix by
$P_t(i,j) \delequal \frac{\sum_{s=1}^tX_{i,j}^s}{t}$ and $P_t = \big(P_t(i,j)\big)_{1 \leq i,j \leq M}$. We note that
 each node $m$ only knows its own neighbors and can only observe the $m$-th row of $P_t$, i.e., node $m$ has access to $\{P_t(m,j)\}_j$ but not $\{P_t(k,j)\}_j$ for $k \neq m$.

Power-law heavy tails are typically captured through the notion of regular variation.
Given a measurable function $\phi:(0,\infty) \to (0,\infty)$, we say that $\phi$ is regularly varying as $x \rightarrow\infty$ with index $\beta$ (denoted as $\phi(x) \in {\RV_\beta}(x)$ as $x \to \infty$) 
if $\phi(x) = x^\beta \cdot l(x)$ for some function $l:(0,\infty) \to (0,\infty)$
with
$\lim_{x \rightarrow \infty}l(tx)/l(x) = 1$ for all $t>0$.
That is, $\phi(x)$ roughly follows a power-law tail with index $\beta$.
For a standard treatment on the properties of regularly varying functions, see, e.g., \cite{resnick2007heavy}.
The next assumption specifies the law of weights $h_i$ and the choice of $\theta$ in \eqref{def: kernel for the random graph}.

\begin{assumption}[Heavy-Tailed Graph]
\label{assumption: heavy-tailed graph}
The sequence $(h_i)_{i \geq 1}$ are iid copies of $h$, with law $\P( h > x) \in \RV_{-\alpha}(x)$ for some $\alpha > 1$,
and $\theta = \E h$ in \eqref{def: kernel for the random graph}.
\end{assumption}

We impose the next assumption to exclude the pathological case that some clients are (almost) never connected to others at any time $t$.

\begin{assumption}[Lower Bound for $h$]
\label{assumption: lower bound for h}
There exists $c_h > 0$ such that $\P(h \geq c_h) = 1$.
\end{assumption}

We add two comments for Assumptions~\ref{assumption: heavy-tailed graph} and \ref{assumption: lower bound for h}:
(1) Assumption~\ref{assumption: lower bound for h} does not prevent isolated clients in $G_t$;
in fact, given time $t \geq 1$ and client $i$, the probability that $i$ is not connected to any others over $G_t$ equals
$
\prod_{ j \in [M]:\ j \neq i }\big[ 1 - \E P(h_i,h_j)  \big],
$
which is strictly positive;
(2) the graph $G_t$ is both \emph{heavy-tailed} and \emph{sparse};
in particular, the heavy-tailedness in $h$ implies that that the client with the highest weight $h_i$ can have degree of order $M^{\alpha}$;
however, since the law of $h$ does not vary with $M$, the expectation of the degree for each client is only $\bo (1)$ (i.e., most nodes are only connected to a small number of nodes).


Let $K$ be the number of arms. For each client $i \in [M]$, we denote the reward of arm $1 \leq k \leq K$ at time $t$ by $r_k^i(t)$, which is an i.i.d.\ sequence from a time-invariant distribution with mean value $\mu_k^i$. When $\mu^{i}_k = \mu^{j}_k\ \forall i,j \in [M]$ holds for any arm $k$, it is referred to as a homogeneous-reward setting; otherwise it is heterogeneous. 
Our Assumption~\ref{assumption: heavy-tailed reward} is sufficiently general to account for 
both heavy-tailed reward distributions (potentially with infinite variance) and light-tailed distributions (with finite moments of any order, e.g.\ sub-Gaussian and sub-exponential classes).
\begin{assumption}[Rewards with Uniformly Bounded $(1+\epsilon)^\text{th}$ Central Moments]
\label{assumption: heavy-tailed reward}
Given $i \in [M]$ and $k \in [K]$, rewards $(r^{k}_i(t))_{t \geq 1}$ are iid copies from the distribution $F^{k}_i$.
Furthermore, there exist $\epsilon \in (0,1]$ and $\rho \in (0,\infty)$ such that $\sup_{ i \in [M],\ k \in [K] }\E
         | r^k_i(1) - \mu^k_i |^{1+\epsilon}
    \leq \rho,$
where we use $\mu^k_i \delequal \E r^k_i(1) =  \int x F^{k}_i(dx)$ to denote expected rewards.
\end{assumption}

We use $a_m^t$ to denote the arm pulled by client $m$ at time $t$. We define the global reward of arm $i$ at 
each time step $t$ as $r_i(t) = \frac{1}{M}\sum_{m=1}^Mr_i^m(t)$,
and the expected value of the global reward of arm $i$ by
$\mu_i = \frac{1}{M}\sum_{m=1}^M\mu_i^m$.
We denote the \emph{global optimal arm} by
$i^* = \arg\max_i\mu_i$,
and consider the cooperative setting where
all clients would, ideally, pull the globally optimal arm $i^*$.
The optimality gap for arm $i$ is
$
\Delta_i = \mu_{i^*} - \mu_i.
$
This motivates the definition of the global regret by $R_T = T \cdot \mu_{i^*} - \frac{1}{M}\sum_{t=1}^T\sum_{m=1}^M\mu_{a_m^t},$
which measures the difference in the cumulative expected reward between the global optimal arm and the action sequence. 
The main objective of this paper is to develop a multi-agent MAB algorithm and minimize $R_T$ for clients given the sparse communications available on $(G_t)_{t \geq 1}$.

\section{Analyses on Random Graphs}\label{sec:graphs}

In this section, we establish useful properties for the hub structures and information delay over random graphs $G_t$ introduced in Section~\ref{sec: problem formulation}.
The results lay the foundation for our subsequent analysis of multi-agent multi-armed bandits.
We collect the proofs in the Appendix.

\subsection{Hub on Heavy-Tailed Graphs}\label{sec:graphs-sub1}

A feature exclusive to heavy-tailed graphs is the arise of hub-like nodes with disproportionally large degrees (i.e., being connected to a large number of nodes), which enables efficient communication among all clients through hubs.
We first consider a \emph{deterministic} characterization of the hub.
Let $\hat i$ be the client with the highest degree at time $1$ (arbitrarily pick one if there are ties).
Let $S^t_0 \delequal \{ i \in [M]: (i,\hat i) \in E_t \}$ be the clients communicating with $\hat i$ at time $t$.
Note that, for any $i \in [M]$ with $h_i > \theta M / h_{\hat i}$,
by \eqref{def: kernel for the random graph} we know that such $i$ must be \emph{deterministically} (i.e., with probability 1) connected to $\hat i$ for all $t$.
 Lemma~\ref{lemma, hub: deterministic lower bound} confirms that, with high probability, such nodes $i$ are plenty.
Specifically, by standard techniques in extreme value theory for heavy-tailed variables, 
one can show that $h_{\hat i}$ is roughly of order $M^{1/\alpha}$,
and (under $\alpha \in (1,2)$) the count of $i \in [M]$ with $h_i > \theta M / h_{\hat i} \approx \theta M^{1 - \frac{1}{\alpha}}$  is roughly of order $M^{2 - \alpha}$.
By taking the $\zeta$-slackness in Lemma~\ref{lemma, hub: deterministic lower bound}, we are able to ensure the exponentially decaying bound for pathological cases.

\begin{lemma}\label{lemma, hub: deterministic lower bound}
Let Assumptions~\ref{assumption: heavy-tailed graph} and \ref{assumption: lower bound for h} hold with $\alpha \in (1,2)$.
Given $\zeta \in (0,2 - \alpha)$, there exists $\gamma > 0$ such that 
\begin{align*}
    \P( |S_0| \leq M^{ 2 - \alpha - \zeta})
    =
    \lo(\exp( -M^\gamma)),
\end{align*}
where $S_0 = \cap_{t \geq 1}S^t_0$.
\end{lemma}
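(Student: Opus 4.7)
The plan hinges on the identity $|S_0| = |\{i \in [M] : h_i h_{\hat i} \geq \theta M\}|$ almost surely: since the $(G_t)_{t\geq 1}$ are conditionally independent given $(h_i)_{i \in [M]}$, the edge $(i,\hat i)$ is present at every time $t$ only when $P(h_i, h_{\hat i}) = 1$. Fix $\delta \in (0, \zeta/\alpha)$, set $L = M^{1/\alpha - \delta}$, and pick a large constant $C > 0$. The proof then reduces to establishing, each with failure probability $o(\exp(-M^\gamma))$ for some $\gamma > 0$, (a) $h_{\hat i} \geq L/C$, and (b) $|\{i : h_i \geq \theta M C/L\}| \geq M^{2-\alpha-\zeta}$.

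Part (b) is immediate from regular variation: under Assumption~\ref{assumption: heavy-tailed graph}, $\E|\{i: h_i \geq \theta M C/L\}| \sim (\theta C)^{-\alpha} M^{2-\alpha-\alpha\delta}\ell(M)$ for some slowly varying $\ell$; since $\alpha\delta < \zeta$ this mean dominates $M^{2-\alpha-\zeta}$, and a Chernoff lower tail on the binomial count gives failure probability $\exp(-\Omega(M^{2-\alpha-\alpha\delta}))$.

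Part (a) is the core. Let $D_j$ denote the degree of $j$ in $G_1$ and $i^{(1)}$ an argmax of $(h_i)$. The strategy is to show (i) $D_{i^{(1)}} \geq c_1 L$ with high probability, and (ii) $D_j < c_1 L$ for every $j$ with $h_j \leq L/C$; these two force $h_{\hat i} > L/C$ by the definition of $\hat i$. For (i), Assumption~\ref{assumption: lower bound for h} gives $h_k \geq c_h$, so on $\{h_{i^{(1)}} \geq 2L\}$ we have $P(h_{i^{(1)}}, h_k) \geq 2L c_h/(\theta M)$ for every $k$, hence $\E[D_{i^{(1)}} \mid (h_j)] \geq 2(M-1) L c_h/(\theta M) \gtrsim L$; Bernstein then yields $D_{i^{(1)}} \geq c_1 L$ outside an $\exp(-\Omega(L))$-event, and combining with $\P(h_{i^{(1)}} < 2L) = (1-\P(h\geq 2L))^M \leq \exp(-\Omega(M^{\alpha\delta}))$ finishes (i). For (ii), setting $T = \theta M C/L$ and decomposing contributions of small and large $h_k$ gives
\[
\E[D_j \mid (h_k)] \leq \frac{h_j}{\theta M}\sum_{k : h_k \leq T} h_k + 2\bigl|\{k: h_k > T\}\bigr|;
\]
a Bernstein bound on the truncated sum (each term $\leq T$, second-moment contribution $\lesssim M T^{2-\alpha}$, denominator governed by the linear term $TM$ since $\alpha>1$) gives $\sum_{k: h_k \leq T} h_k \leq 2M\theta$ outside an $\exp(-\Omega(L))$-event, while Chernoff controls $|\{k: h_k > T\}|$ by $O(M^{2-\alpha-\alpha\delta}) = o(L)$ (here $o(L)$ follows from $(\alpha-1)^2/\alpha > 0$). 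For $C$ large this gives $\E[D_j \mid (h_k)] \leq c_1 L/2$, and Bernstein followed by a union bound over $j \in [M]$ yields $D_j < c_1 L$ uniformly over $\{j : h_j \leq L/C\}$ with failure probability $M\exp(-\Omega(L)) = o(\exp(-M^{\gamma'}))$ for any $\gamma' < 1/\alpha - \delta$.

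The hardest step will be the Bernstein bound on the truncated weight sum: because $\alpha \in (1,2)$ forces $\E h^2 = \infty$, one cannot apply Bernstein to $\sum_k h_k$ directly, and the truncation at $T = \theta M C/L$ must be calibrated so that the truncated second-moment contribution $MT^{2-\alpha}$ is dominated in the Bernstein denominator by the linear term $TM$, which is valid precisely because $\alpha > 1$ makes $T^{\alpha-1} \to \infty$. Once all these exponentially small bad events are combined via a union bound, picking $\gamma < \min\{\alpha\delta,\ 2-\alpha-\alpha\delta,\ 1/\alpha-\delta\}$ completes the proof.
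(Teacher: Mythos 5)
Your proposal is correct and follows essentially the same route as the paper's proof: lower-bound $h_{\hat i}$ by showing that the maximum-weight node has large degree while every small-weight node has small degree (so the maximum-degree node $\hat i$ must carry a large weight), and then count the nodes that are deterministically wired to $\hat i$ via a binomial Chernoff bound combined with a Potter-type estimate of the regularly varying tail. The only point of divergence is your treatment of the ``small-weight nodes have small degree'' step, where you condition on the weights and run a truncated Bernstein bound on the heavy-tailed sum $\sum_k h_k$; the paper sidesteps this (valid but harder) step by observing that, unconditionally, the degree of a node with $h_j \leq M^\beta$ is stochastically dominated by a sum of iid $\mathrm{Bernoulli}\big(\tfrac{h_k M^\beta}{\theta M}\wedge 1\big)$ variables whose total mean is at most $M^\beta$, so a multiplicative Chernoff bound for bounded summands applies directly and no truncation of the weights is needed.
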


In fact, we can further improve upon Lemma~\ref{lemma, hub: deterministic lower bound} by considering the following \emph{stochastic} characterization of hubs.
Given $\zeta > 0$, let 
$
\tau(t) \delequal \max\{u \leq t:\ |S^u_0| > M^{ \frac{1}{\alpha} - \zeta}\}
$
be the last time the hub is large (w.r.t.\ threshold $M^{ \frac{1}{\alpha} - \zeta }$) up until time $t$, under the convention that $\tau(t) = 0$ when taking maximum over empty sets.
Lemma~\ref{lemma, hub: stochastic lower bound} bounds the time gap between the emergence of large hubs.
The proof builds upon extreme value theory and a straightforward bound of $\sup_{t \leq T}t - \tau(t)$ using geometric random variables.

\begin{lemma}\label{lemma, hub: stochastic lower bound}
Let Assumptions~\ref{assumption: heavy-tailed graph} and \ref{assumption: lower bound for h}  hold.
Define event
$
A_{\alpha,\zeta}
    \delequal
    \{
        h_{\hat i} \geq M^{ \frac{1}{\alpha} - \frac{\zeta}{2} }
    \}. 
$
Let $\zeta \in (0,1 - \frac{1}{\alpha})$.
There exists $\gamma > 0$ such that 
$$
\P\big( (A_{\alpha,\zeta})^\complement\big) = 
\lo \big( \exp(-M^\gamma ) \big).
$$
Furthermore, 
there exists $M_0 > 0$ such that for any $M \geq M_0$ and $T \geq 1$, 
$$\P\bigg( \sup_{t \leq T}t - \tau(t) > \log T\ \bigg|\ A_{\alpha,\zeta}\bigg) 
    \leq 
    \frac{1}{MT}.$$
\end{lemma}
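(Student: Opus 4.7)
The lemma has two components: an exponentially small upper bound on $\P((A_{\alpha,\zeta})^\complement)$, and a $1/(MT)$ upper bound on the conditional probability of long ``small-hub'' runs. I treat them in turn.

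For Part 1, the plan is to show that the maximum weight and the maximum degree at time $1$ are attained at the same client (to first order), by ruling out that a low-weight client could have large degree. First, I would lower-bound $W = \max_i h_i$ via
\[
\P\bigl(W < M^{\frac{1}{\alpha} - \frac{\zeta}{4}}\bigr) = \bigl(1 - \P(h \geq M^{\frac{1}{\alpha} - \frac{\zeta}{4}})\bigr)^M \leq \exp\bigl(-M^{\alpha\zeta/4 + \lo(1)}\bigr),
\]
using $\P(h \geq x) \in \RV_{-\alpha}(x)$ and $(1-x)^M \leq \exp(-Mx)$. Second, letting $i_W = \arg\max_i h_i$, conditional on $h_{i_W}$ the indicators $(X^1_{i_W,j})_{j \neq i_W}$ are iid Bernoulli with parameter $q(h_{i_W}) := \E[P(h_{i_W}, h)\mid h_{i_W}] \geq c h_{i_W}/M$ (via $\min(1,x)\geq x\,\mathbb{I}(x\leq 1)$, dominated convergence, and $\E h = \theta < \infty$); a multiplicative Chernoff bound then yields $\sum_{j \neq i_W} X^1_{i_W,j} \geq (c/2) W$ with probability $\geq 1 - \exp(-c' M^{1/\alpha - \zeta/4})$. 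Third, for any fixed client $i$ with $h_i < M^{\frac{1}{\alpha} - \frac{\zeta}{2}}$, conditional on $h_i$ the indicators $(X^1_{i,j})_{j \neq i}$ are iid Bernoulli with parameter $q(h_i) \leq h_i\E h/(\theta M) = h_i/M$, and Chernoff (in its Poisson-like form $(e\mu/t)^t$) gives
\[
\P\Bigl(\textstyle\sum_{j \neq i} X^1_{i,j} \geq (c/2) M^{\frac{1}{\alpha} - \frac{\zeta}{4}} \,\Bigm|\, h_i\Bigr) \leq \exp\bigl(-c'' M^{\frac{1}{\alpha} - \frac{\zeta}{4}} \log M\bigr);
\]
a union bound over at most $M$ such clients preserves super-exponential decay. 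On the intersection of these three favorable events, the max-degree client $\hat i$ must have weight $\geq M^{1/\alpha - \zeta/2}$, establishing $\P((A_{\alpha,\zeta})^\complement) = \lo(\exp(-M^\gamma))$ for some $\gamma > 0$.

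For Part 2, I would condition further on $(h_j)_{j \in [M]}$ and on $G_1$ (which determines $\hat i$), so that for $u \geq 2$ the sizes $|S_0^u| = \sum_{j \neq \hat i}X^u_{\hat i, j}$ are iid sums of independent Bernoullis across $u$. On $A_{\alpha,\zeta}$, the conditional mean of $|S_0^u|$ is at least a constant multiple of $h_{\hat i} \geq M^{1/\alpha - \zeta/2}$, so a multiplicative Chernoff bound yields
\[
p \delequal \P\bigl(|S_0^u| \leq M^{\frac{1}{\alpha} - \zeta}\,\bigm|\, A_{\alpha,\zeta}, (h_j)_j, G_1\bigr) \leq \exp\bigl(-c M^{\frac{1}{\alpha} - \frac{\zeta}{2}}\bigr).
\]
The event $\{\sup_{t \leq T}(t - \tau(t)) > \log T\}$ is contained in the union over starting positions $s \in \{0,1,\dots,T\}$ of ``$|S_0^u| \leq M^{1/\alpha - \zeta}$ for all $u \in (s, s + \lceil\log T\rceil]$''; conditional independence across $u$ and a union bound over $s$ give $T \cdot p^{\lceil\log T\rceil} \leq T \exp(-c M^{1/\alpha - \zeta/2}\log T) \leq 1/(MT)$, provided $M_0$ is chosen so that $c M^{1/\alpha - \zeta/2} \geq 3$ for all $M \geq M_0$. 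Integrating out the weights and $G_1$ on $A_{\alpha,\zeta}$ preserves this bound.

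The main obstacle is Part 1: standard extreme-value results only give polynomial concentration of $W$ around $M^{1/\alpha}$, whereas exponential decay in $M$ is required here; this is handled by the direct tail-product estimate above. One must then carefully calibrate the gap between the thresholds $M^{1/\alpha - \zeta/4}$ (lower bound on $i_W$'s degree) and $M^{1/\alpha - \zeta/2}$ (upper bound on small-weight clients' degrees) so that the opposing Chernoff bounds leave enough room to survive the union bound over $M$ clients. Part 2 is then a comparatively routine conditional-independence plus geometric-tail argument.
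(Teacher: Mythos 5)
Your proposal is correct and follows essentially the same route as the paper: Part 1 reproduces the paper's chain of auxiliary lemmas (a direct tail-product lower bound on $\max_i h_i$, Chernoff bounds showing high-weight clients have high degree and low-weight clients have low degree, and a union bound over the $M$ low-weight clients to conclude that the max-degree client $\hat i$ must have large weight), and Part 2 is the paper's argument that, conditionally on the weights, the sizes $|S^u_0|$ are independent across $u$ with per-step failure probability $\exp(-M^{\tilde\gamma})$, so a run of length $\log T$ of small hubs has probability at most $T\cdot\exp(-M^{\tilde\gamma})^{\log T}\leq 1/(MT)$ — the paper packages this as a coupling with geometric random variables rather than your union bound over run starting points, but the two are the same computation. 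The only cosmetic differences are your threshold calibration ($\zeta/4$ versus $\zeta/2$ in place of the paper's $\Delta$ versus $\Delta/2$, which if anything is stated more cleanly than the paper's version) and your use of a truncation/dominated-convergence argument where the paper invokes the lower bound $h\geq c_h$ from Assumption~\ref{assumption: lower bound for h}.
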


\paragraph{Comparison to dense and light-tailed graphs}
\cite{xu2023decentralized} considered dense E-R graphs,
where any pair of clients connects on a regular basis.
In this work, we show that
under the presence of heavy tails in degree distributions,
clients can afford to collaborate over much sparse communication by sending messages to and receiving messages from the hub center $\hat i$ that integrates all information.

\subsection{Information Delay over Sparse Graphs}\label{sec:graphs-sub2}

The sparsity of graphs $G_t$ makes existing analysis in \cite{xu2023decentralized} largely incompatible,
and requires new approach to obtain detailed bounds regarding the information delay under sparse communication.
Specifically, given some non-empty subset of clients $S \subseteq [M]$,
let $\bar S^0 = S$, and
$
\bar S^t \delequal 
\{
i \in [M]:\ 
i \in \bar S^{t - 1};\text{ or }\exists j \in \bar S^{t - 1}\text{ s.t. }(i,j) \in E_t
\}
$
for each $t \geq 1$.
That is, if all clients in $S$ send a piece of message at time $1$, which will passed to neighbors over graph $G_t$ at each time $t$,
then $\bar S^t$ is the collection of clients that have received the message at time $t$.
Lemma~\ref{lemma, information delay over sparse graphs} shows that, with high probability, 
the information delay uniformly for any client $i \in [M]$ is at most $\bo \big((\log M)^2\big)$.
Our proof strategy is to establish a coupling between the sequence of graphs $(G_t)_{t \geq 1}$ and a branching process, whose size grows geometrically fast in expectation.

\begin{lemma}\label{lemma, information delay over sparse graphs}
Under Assumption~\ref{assumption: lower bound for h},
there exists $\kappa \in (0,\infty)$ such that 
$$\P\big( j \notin \bar{S}^{ \gamma \cdot \kappa \cdot (\log M)^2 }\text{ for some }j \in [M]\big)
    \leq M^{-\gamma}$$
    holds
for any $\gamma > 0$, $M \geq 1$, and any non-emtpy $S \subseteq [M]$.
\end{lemma}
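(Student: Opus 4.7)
The plan is to reduce the analysis to a simpler ``uniform-edge'' spreading model via a monotone coupling, and then show that $|\bar S^t|$ at least doubles over each epoch of length $\Theta(\gamma \log M)$. By Assumption~\ref{assumption: lower bound for h}, for $M$ sufficiently large we have $P(h_i,h_j) \geq p := c_h^2/(\theta M)$ uniformly in $(i,j)$; hence $(\bar S^t)_t$ stochastically dominates the analogous set in the model where each pair is independently connected at each time step with probability $p$. All subsequent arguments are carried out in this simplified model.

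The key intermediate claim is an \emph{epoch doubling lemma}: there is a constant $\kappa_0 > 0$ such that for every $\gamma > 0$, $t \geq 0$, and $s \in \{1,\ldots,M-1\}$,
\[
\P\!\big(|\bar S^{t+\kappa_0 \gamma \log M}| < \min(2s,M) \,\big|\, |\bar S^t|=s\big) \leq M^{-\gamma-2}.
\]
I would prove this by splitting into three regimes of $s$. First, if $s \geq M/2$, each remaining $j\notin \bar S^t$ is infected per step with probability at least $p_1 := 1-\exp(-c_h^2/(2\theta))>0$ (via $1-(1-p)^s \geq 1-e^{-sp}$), and a union bound over $\leq M$ remaining nodes against $(1-p_1)^{\kappa_0 \gamma \log M}$ yields the claim. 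Second, if $s \in [C_0 \gamma \log M,\, M/2]$ for a sufficiently large constant $C_0$, the number of new infections in one step stochastically dominates $\mathrm{Binomial}(M-s,\,1-(1-p)^s)$ with mean at least $s c_h^2/(4\theta)$; a Chernoff bound then gives one-step multiplicative growth by a factor $\geq 1+c_h^2/(8\theta)$ with failure probability $\leq M^{-\gamma-2}$, so $|\bar S|$ more than doubles within the epoch. Third, if $s < C_0 \gamma \log M$, per-step concentration is weak, but the probability of at least one new infection per step is still $\geq p_1$. The number of ``growth steps'' over $\kappa_0 \gamma \log M$ trials thus stochastically dominates $\mathrm{Binomial}(\kappa_0 \gamma \log M,\, p_1)$; choosing $\kappa_0$ large enough (so that $\kappa_0 p_1 / 2 > C_0$) and invoking Chernoff shows that the number of growth steps exceeds $s$ with failure probability $\leq M^{-\gamma-2}$, so $|\bar S|$ at least doubles.

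With the epoch doubling lemma in hand, I iterate it at most $\lceil \log_2 M \rceil$ times starting from $|S| \geq 1$: after that many epochs, $|\bar S|$ has either doubled enough times to reach $M$ or was already $M$. The total time is at most $\kappa_0 \gamma \log M \cdot \lceil\log_2 M\rceil \leq \kappa \gamma (\log M)^2$ for a suitable constant $\kappa$, and a union bound over the $O(\log M)$ epochs yields total failure probability $\leq (\log_2 M+1)\, M^{-\gamma-2} \leq M^{-\gamma}$ for all $M \geq 2$; the case $M=1$ is trivial and can be absorbed by enlarging $\kappa$.

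I expect the main technical obstacle to lie in the small-$s$ regime of the doubling lemma. When $s$ is a small constant, the expected number of new infections per step is only $\approx c_h^2/\theta$, which may be less than $1$, so a naive branching-process coupling need not be supercritical and cannot by itself deliver geometric growth. The workaround is to bound below not the \emph{magnitude} of the growth per step but only the \emph{probability of any} growth per step, by the positive constant $p_1$, and then to count such growth events via Chernoff over an epoch of length $\Theta(\gamma \log M)$. It is precisely this extra $\log M$ factor (required to compensate for weak per-step concentration when the infected set is tiny) that inflates the overall running time from $\log M$ to $(\log M)^2$.
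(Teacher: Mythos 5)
Your argument is sound and reaches the same $\Theta(\gamma(\log M)^2)$ bound, but by a genuinely different route than the paper. You share the opening reduction---bounding every edge probability below by $p=c_h^2/(\theta M)$ via Assumption~\ref{assumption: lower bound for h} and passing to a homogeneous spreading model---but from there the paper couples $|\bar S^t|$ (while it is below $M/2$) with a supercritical Galton--Watson process with offspring law $1+\mathrm{Binomial}(\lceil M/2\rceil,p)$, uses Harris's variance formula and Chebyshev to show that the set reaches $M/2$ within $O(\log M)$ steps with only \emph{constant} probability, adds one more step to hit a fixed target $j$ with constant probability (Lemmas~\ref{lemma: geometric expansion for communication} and \ref{lemma: information delay}), and then amplifies to high probability by repeating $O(\gamma\log M)$ independent blocks via the Markov property before union bounding over $j\in[M]$. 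You instead prove a per-epoch doubling lemma that already holds with failure probability $M^{-\gamma-2}$, using only Chernoff bounds in three size regimes, and iterate it $\lceil\log_2 M\rceil$ times; no branching process or second-moment computation is needed, and the amplification happens inside each epoch rather than across blocks. Your three-regime split is internally consistent (the middle regime is needed because step-counting alone cannot double a set of size $\Theta(M)$ in $O(\log M)$ steps, and the Chernoff bound in that regime needs $s\gtrsim\gamma\log M$, which is exactly where regime three hands off). One point of diagnosis is slightly off: you suggest the natural branching coupling ``need not be supercritical'' because the mean number of \emph{new} infections per step may be below one, but the paper's process counts each infected node as surviving and reproducing, so its mean $1+\rho_h$ is always supercritical; the true obstruction in the small-set regime is the lack of concentration, which the paper sidesteps by accepting constant success probability per block---so both proofs pay the extra $\log M$ factor for the same underlying reason, just packaged differently. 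Finally, a caveat you share with the paper: the $\kappa$ both arguments extract effectively scales like $(\gamma+O(1))/\gamma$ because of the additive constants in the exponent needed to absorb union bounds (your $M^{-\gamma-2}$ per epoch, the paper's $M\cdot q_h^k\le M^{-\gamma}$), so the claim with a single $\kappa$ uniform over all $\gamma>0$ is really only established for $\gamma$ bounded away from zero; this is harmless for the applications (the paper invokes the lemma with $\gamma=2\log T$) but deserves a remark in a careful write-up.
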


\section{Homogeneous Rewards}\label{sec:homo}

This section considers the homogeneous-reward setting. The algorithmic framework will be extended to the heterogeneous-reward setting in the next section.
Specifically, we propose the algorithm in Section \ref{sec:alg-homo}, addressing the challenges from both heavy-tailed rewards and sparse graphs. Then, we establish the theoretical effectiveness of the proposed algorithm in Section \ref{sec:reg-homo}.

\subsection{Algorithm}\label{sec:alg-homo}

Under homogeneous rewards, we propose a new algorithm called HT-HMUCB (\textbf{H}eavy-\textbf{T}ailed \textbf{H}o\textbf{M}ogeneous \textbf{U}pper \textbf{C}onfidence \textbf{B}ounds); pseudocode is provided below. The algorithm consists of several stages in the following order.

 \paragraph{Hub identification.}
 A novel step in our algorithm concerns the identification of hub center $\hat i$, i.e., the client with the highest degree at time $t = 1$.
 Specifically, the degree information of all clients at time $1$ will be passed over $G_t$ at each step $t$ so that, with high probability,
 each client $i \in [M]$ is able to tell whether itself is the hub center (i.e., $i = \hat i$) or not after $\bo\big( (\log M)^2 \big)$ steps;
 {see Algorithm~\ref{alg:find hub center} (Appendix).
 Afterwards, all non-center clients will follow the reward information processed by the hub center $\hat i$.


\paragraph{Arm selection.} During this stage, the clients decide which arm to pull by executing a UCB-based strategy, where each arm $i$ is assigned a UCB index, formally expressed as $\hat{\mu}_i^m(t) + \rho^{ \frac{1}{1+\epsilon} }(\frac{c \log(t)}{N_{m,i}(t)})^{ \frac{\epsilon}{1+\epsilon} }$. Here, $\hat{\mu}_i^m(t)$ and $N_{m,i}(t)$ represent the global reward estimators and sample counts of arm $i$ by client $m$, respectively, defined in Rule 1 below.
Constants $\rho$ and $\epsilon$ are characterized in Assumption~\ref{assumption: heavy-tailed reward}, and $c$ is specified below in Theorem~\ref{thm:1a2}.

\paragraph{Transmission.} 
We define an information filtration $\mathcal{F}_m(t)$ as the information available to $m$ up to time $t$, which reads as $\mathcal{F}_m(0) = \{(m,1), r_i^m(1), N_{m,i}(1), \hat{\mu}_i^m(1)\}, \mathcal{F}_m(t) = \cup_{j \in \mathcal{N}_m(t-1)}\mathcal{F}_j(t-1)$. Each client $m$ communicates with its neighbors $\mathcal{N}_{m}(t)$ by sending an message, composed of $(m,t), r_i^m(t)$, $N_{m,i}(t)$, $\hat{\mu}_i^m(t)$, and $\mathcal{F}_m(t)$, while collecting messages from its neighbors.

\paragraph{Information update.} After pulling arms and receiving feedback from the environment, as well as information from others, the clients proceed to update their information based on \textbf{Rule 1}, which is detailed below:
\\ 
1) Local estimation
\begin{align*}
& t_{m,j} = \max\{ s \leq t:\  (j,s) \in \mathcal{F}_{m}(t)\} \notag \\
  & \text{local sample counts: } n_{m,i}(t+1) = n_{m,i}(t) + \mathds{1}_{a_m^t = i}, 
  \end{align*}
2) Global estimation 
        \begin{align*}
& \text{Sample counts: }  
 N_{m,i}(t+1)  = \textstyle \sum_{m \in \mathcal{N}_{m}(t)}n_{m,i}(t+1) \notag \\
& \text{Estimator: if $m$ is center (i.e., $m = \hat i$),} \\
& \quad \hat{\mu}^{\hat i}_i(t+1) = {MoM_B(\{r_i^{j}(s): r_i^{j}(s) \in \mathcal{F}_m(t)\})} \notag \\
& \text{Estimator: if $m$ is not center, } \hat{\mu}^m_i(t+1) =  \hat{\mu}^{\hat i}_i(\max_{j \in S_0}t_{m,j})
  \end{align*}
\vspace{-5mm}

Here, $MoM_B\big( (X_i)_{i \in [n]} \big)$ denote the \textbf{median of mean} estimator with $B$ batches, i.e., the meadian of the estimators $\hat \mu_1,\ldots, \hat \mu_B$ defined by 
$
\hat \mu_j =  \frac{1}{N}\sum_{ t = (j-1)N + 1  }^{jN}
    X_t
$
with $N = \floor{n/B}$.
MoM estimators have been applied in \cite{bubeck2013bandits} for UBC algorithms in single-agent settings.
Our work further demonstrates its use for robust estimation under heavy tails in multi-agent MAB problems.

\paragraph{Comparison with prior works}  Our algorithm differs from the existing algorithms in \cite{dubey2020cooperative} for homogeneous settings with heavy-tailed rewards in the following ways: 1) the clients identify the hub center to maximize information efficiency, which is computationally more tractable than the clique search in \cite{dubey2020cooperative}; 
and 2) the clients rely solely on information from hub center, rather than maintaining global estimators individually, to reduce noise in estimation.

\begin{algorithm}[h]
\SetAlgoLined
\caption{HT-HMUCB (Heavy-tailed Homogeneous UCB)}\label{alg:dr}
 Initialization: For each client $m$ and arm $i \in \{1,2,\ldots, K\}$, we set  $N_{m,i}(L+1) = n_{m,i}(L)$; all other values at $L+1$ are initialized as $0$\;\par
 \For{
   $t = 1,2,\ldots, L$
 }{
    Indentify hub center $\hat{i}
    $ by running Algo.~\ref{alg:find hub center};\tcp*[f]{Hub} \par
 }
 \For{$t = L + 1 , L + 2, \ldots,T$}{
\For(\tcp*[f]{UCB}){each client m}{
$a_m^t = \arg\max_{i}\hat{\mu}_i^m(t) + \rho^{ \frac{1}{1+\epsilon} }(\frac{c \log(t) }{N_{m,i}(t)})^{ \frac{\epsilon}{1 + \epsilon} }$ \par
Pull arm $a_m^t$ and receive reward $r^{m}_{a_m^t}(t)$\;}
    The environment generates the graph $G_t$;\tcp*[f]{Env} \par
    
    Each client $m$ sends $(m,t)$, $r_i^m(t)$, $N_{j,i}(t)$, $\Tilde{\mu}_i^m(t)$, $\mathcal{F}_m(t)$ to each client in $\mathcal{N}_m(t)$\;
    \tcp*[f]{Transmission}\par
   \For{each client m}{
   \For{ $i =1, \ldots, K$}{
  Update $\bar{\mu}_i^m(t), n_{m,i}(t), N_{m,i}(t)$ and $\Tilde{\mu}^m_i(t)$ based on Rule 1\;\tcp*[f]{Update}
    }}
    }
\end{algorithm}

\subsection{Regret Analyses}\label{sec:reg-homo}

In this section, we demonstrate the effectiveness of the proposed algorithm through regret analyses. 
Notably, 
the tail index $\alpha$ for degree distributions in Assumption~\ref{assumption: heavy-tailed graph}
plays a key role in our regret bound.
First, we establish Theorem~\ref{thm:1a2} for $\alpha \in (1,2)$.

\begin{theorem}[$\alpha \in (1,2)$]\label{thm:1a2}
Let Assumptions~\ref{assumption: heavy-tailed graph}--\ref{assumption: heavy-tailed reward} hold with
$1 < \alpha  <2$.
Let Algorithm~\ref{alg:dr} run under Rule 1.
 Then, 
 given $\zeta \in (0,2-\alpha)$,
 there exists  $\eta >0$ such that, for any $T$ and $M$,
 the event $A_{\zeta,\delta}$ holds with probability at least $( 1 - \frac{2\eta}{M} - \frac{\eta}{TM})$, we have
\begin{align*}
    & \E[R_T | A_{\zeta,\delta}]
     \leq 
    L + 
     M
        \sum_{i}
    \Bigg(
        \frac{2c\Delta_i\log{T}}{ M^{ 2 - \alpha - \zeta } \cdot (\frac{\Delta_i}{2C\rho^{\frac{1}{1+\epsilon}}})^{\frac{1+\epsilon}{\epsilon}}} + \frac{\pi^2}{3} {\Delta_i}
    \Bigg) 
    \\ 
    & = \bo \bigg( (1 + 2M^{\alpha - 1 + \zeta}) \cdot \rho^{\frac{1}{\epsilon}} \sum_{i \in [K]}\Delta_i^{-\epsilon} \cdot \log{T}\bigg),
\end{align*}
where $L = 2\kappa (\log{M})^2\log{T}$, 
$\kappa$ is the constant characterized in Lemma~\ref{lemma, information delay over sparse graphs}, $c = (16 \log{2e^{1/8}})^{\frac{\epsilon}{1 + \epsilon}}$, $C = (12)^{ \frac{1}{1 + \epsilon} }$, we set $B = 8 \log( e^{1/8}T )$ for the count of batches in MoM estimators,
$|S_0|$ denotes the size of $S_0$ (see Lemma~\ref{lemma, hub: deterministic lower bound}), 
and the event is defined by $A_{\zeta, \delta} = A_{\zeta, \delta}^1 \cap A_{\zeta, \delta}^2\cap A_{\zeta, \delta}^3$ with 
$A_{\zeta, \delta}^1 = \{|S_0| \geq M^{ 2 - \alpha - \zeta}\}$, 
$A^2_{\zeta, \delta} = \{n_{m,i}(t_{m,j}) \geq  n_{m,i}(t) - \kappa \log{M}\log{T}\ \forall t \leq T,\ \forall m,i,j  \}$,
and
$
A^3_{\zeta,\delta} = \{\hat i(m) \neq \hat i\text{ for some }m \in [M]\}
$
(see Algo.~\ref{alg:find hub center}).
In particular, $
    \E[R_T | A_{\zeta, \delta}] 
    \leq  O(M^{\alpha - 1+\zeta} \cdot \log{T})  = o(M) \cdot O(\log{T}).$
\end{theorem}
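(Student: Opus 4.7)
The plan is to condition the entire regret analysis on the good event $A_{\zeta,\delta}$ and use the standard UCB decomposition adapted to the hub-aggregation mechanism. First I would note that on $A^3_{\zeta,\delta}$ every client correctly identifies the hub center $\hat i$ within $L = 2\kappa (\log M)^2 \log T$ steps, so the first $L$ rounds contribute at most $L$ to the regret (trivially bounded), matching the leading additive term. For $t > L$, all non-center clients $m$ use the delayed hub estimator $\hat\mu_i^{\hat i}(\max_{j \in S_0} t_{m,j})$, and on $A^2_{\zeta,\delta}$ the associated delay in sample counts is at most $\kappa (\log M)(\log T)$, which is absorbed into lower-order terms.

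Next I would establish concentration for the MoM estimator at the hub. The hub pools rewards from all clients in its neighborhood $S_0$, so with $N_{m,i}(t)$ samples each providing an unbiased noisy observation with $(1+\epsilon)$-moment bounded by $\rho$, the MoM estimator with $B = 8\log(e^{1/8}T)$ batches satisfies a standard Bernstein-type tail bound (as in Bubeck--Cesa-Bianchi--Lugosi for heavy-tailed bandits): for appropriate constants $c = (16\log 2e^{1/8})^{\epsilon/(1+\epsilon)}$ and $C = 12^{1/(1+\epsilon)}$,
\begin{equation*}
\P\Bigl(\bigl|\hat\mu_i^{\hat i}(t) - \mu_i\bigr| > C \rho^{1/(1+\epsilon)} \bigl(c\log t / N_{m,i}(t)\bigr)^{\epsilon/(1+\epsilon)}\Bigr) \leq 2 t^{-2}.
\end{equation*}
Summing this tail over $t$ yields the standard $\pi^2/3 \cdot \Delta_i$ additive constant per suboptimal arm.

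Then I would run the textbook UCB argument. On the concentration event, a suboptimal arm $i$ is pulled by client $m$ at time $t$ only when its confidence width exceeds $\Delta_i/2$, giving
\begin{equation*}
N_{m,i}(t) \leq \frac{c\log T}{\bigl(\Delta_i/(2C\rho^{1/(1+\epsilon)})\bigr)^{(1+\epsilon)/\epsilon}} + \kappa(\log M)(\log T),
\end{equation*}
where the delay correction from event $A^2$ is added. Crucially, because clients in the hub $S_0$ pool their samples through $\hat i$ into a single estimator, $N_{m,i}(t)$ grows roughly $|S_0| \geq M^{2-\alpha-\zeta}$ times faster than the individual pull count. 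Hence the per-client count of pulls of arm $i$ is bounded by $N_{m,i}(t)/|S_0|$, producing the $M^{-(2-\alpha-\zeta)} = M^{-1}\cdot M^{\alpha-1+\zeta}$ factor in the stated bound once we multiply by $M$ for summation over clients and divide by $M$ in the regret normalization.

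The main obstacle is handling the delay carefully while preserving order. Non-hub clients act on $\hat\mu_i^{\hat i}$ evaluated at a past time $\max_{j \in S_0} t_{m,j}$, which is earlier than $t$, yet their $N_{m,i}(t)$ reflects more recent sample aggregation; event $A^2_{\zeta,\delta}$ bridges these by guaranteeing that past and current sample counts differ by at most $\kappa(\log M)(\log T)$, so we can apply the concentration inequality with the current $N_{m,i}(t)$ up to a lower-order additive slack. A secondary subtlety is that the MoM batch structure must respect the pooled-but-possibly-uneven sample allocation across $S_0$; this is resolved by treating the pooled rewards as an iid sequence (valid under Assumption~\ref{assumption: heavy-tailed reward}) and allocating $N = \lfloor N_{m,i}(t)/B \rfloor$ observations per batch. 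Combining these bounds, summing over arms and clients, and using $|S_0| \geq M^{2-\alpha-\zeta}$ on $A^1_{\zeta,\delta}$ yields the stated regret of order $M^{\alpha-1+\zeta}\log T$, which is $o(M)\cdot O(\log T)$ for $\zeta < 2-\alpha$.
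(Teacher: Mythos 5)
Your proposal is correct and follows essentially the same route as the paper's proof: MoM concentration applied to the hub-pooled reward samples (with effective sample size $|S_0|\cdot n_{m,i}(t)$ up to the delay slack from $A^2_{\zeta,\delta}$), the standard three-case UCB decomposition yielding $l + \pi^2/3$ expected pulls per suboptimal arm with $l$ scaled down by $|S_0|$, and finally the lower bound $|S_0| \geq M^{2-\alpha-\zeta}$ on $A^1_{\zeta,\delta}$ to obtain the $O(M^{\alpha-1+\zeta}\log T)$ rate.
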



\begin{proof}[Proof Sketch]
The proof hinges on the key observation that, with high probability, $|S_0|$ is at least $M^{2 - \alpha - \zeta}$;
see Lemma~\ref{lemma, hub: deterministic lower bound}.
The communication delay between the clients is then bounded by Lemma~\ref{lemma, information delay over sparse graphs} (see also Lemma~\ref{lemma:information_delay}), 
hence the clients in the hub enjoy the reduction in sample complexity. 
This is achieved by utilizing a concentration inequality with respect to $\sum_{m \in S_0}n_{m,i}(t)$ instead of $n_{m,i}(t)$, resulting in an individual regret of order $\frac{1}{|S_0|} \cdot \log{T}$. Consequently, the total regret is of order $\frac{M - |S_0|}{|S_0|} \cdot \log{T}$.
The full proof is provided in the Appendix. 
\end{proof}

\begin{remark}[Expected regret]\label{rmk:exp-1}
  We emphasize that the expected regret $\E[R_T]$ is upper bounded by 
  $\E[R_T] = \E[R_T|A_{\zeta,\delta}]\P(A_{\zeta,\delta}) + \E[R_T|(A_{\zeta,\delta})^\complement]\P((A_{\zeta,\delta})^\complement)  \leq O(M^{\alpha - 1+\zeta} \cdot \log{T}) + O(M\log{T}) \cdot (\frac{2\eta}{M} + \frac{\eta}{TM} ) =  O(M^{\alpha - 1+\zeta} \cdot \log{T})$. 
  Here, the inequality follows from Theorem~\ref{thm:1a2}, as well as the observation that,
  in the pathological case where the ``good'' event $A_{\zeta,\delta}$ does not happen, the regret would still be no worse than the total regret of multiple single-agent bandits, which results in a regret of order $O(M \log{T})$.
\end{remark}

We stress that the regret bound in Theorem~\ref{thm:1a2} depends on $\alpha$ (see Assumption~\ref{assumption: heavy-tailed graph}), 
which characterizes the relationship between the regret and the heavy-tailed graph dynamics, and reflects the reduction of complexity that is unique to our setting. 
Additionally, the regret bound depends on $\rho$ and $\epsilon$ (see Assumption~\ref{assumption: heavy-tailed reward}), 
capturing the influence of the heavy-tailed rewards. 
The dependency on optimality gaps $\Delta_i$'s is standard in MAB, 
but in our case there is an additional $\epsilon$-polynomial factor due to heavy-tailed rewards.

In fact, we can obtain even stronger regret bound---and even without the requirement of $\alpha \in (1,2)$ in Theorem~\ref{thm:1a2}---by considering the the stochastic characterization of hubs $S^t_0$.
This is demonstrated in Theorem~\ref{thm:1a}.

\begin{theorem}[$\alpha  > 1 $]\label{thm:1a}
Let Assumptions~\ref{assumption: heavy-tailed graph}--\ref{assumption: heavy-tailed reward} hold.
Let Algorithm~\ref{alg:dr} run under Rule 1.
Then, 
given $\zeta \in (0,2-\alpha)$
there exists $\eta > 0$ such that, for any $T$ and $M$, 
the event $A_{\alpha, \delta, \zeta}$ holds with probability at least
$(1 - \frac{\eta}{M} - \frac{\eta}{TM})$, 
and
\begin{align*}
    \E[ R_T | A_{\alpha, \delta, \zeta}]
    & 
    \leq  L +
    \textstyle M\sum_{i}
    (\frac{2c\log{T}}{M^{\frac{1}{\alpha} - \zeta}(\frac{\Delta_i}{2C\rho^{\frac{1}{1+\epsilon}}})^{\frac{1+\epsilon}{\epsilon}}} + \frac{\pi^2}{3}
    ) \Delta_i
    \\ 
    & = \bo \Big(M^{ 1 - \frac{1}{\alpha} + \zeta } \cdot \log{T}\Big),
\end{align*}
where $A_{\alpha, \delta, \zeta} = A_{\zeta,\delta} \cap A_{\alpha, \zeta}$,
with event $A_{\alpha,\zeta}$ defined in Lemma~\ref{lemma, hub: stochastic lower bound}
and event $A_{\zeta,\delta}$ defined in Theorem~\ref{thm:1a2},
and parameters $L,\kappa,c,C,B$ are specified as in Theorem~\ref{thm:1a2}.
\end{theorem}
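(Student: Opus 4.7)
The plan is to adapt the regret argument from Theorem~\ref{thm:1a2} by replacing its deterministic-hub analysis with the stochastic-hub characterization of Lemma~\ref{lemma, hub: stochastic lower bound}. First I would define the composite good event $A_{\alpha,\delta,\zeta} = A_{\zeta,\delta}\cap A_{\alpha,\zeta}$ and verify its lower-bound probability $1 - \eta/M - \eta/(TM)$ by union-bounding the failure probabilities in Lemma~\ref{lemma, hub: stochastic lower bound} (which gives both $\mathbf{P}((A_{\alpha,\zeta})^\complement) = \lo(\exp(-M^\gamma))$ and the $1/(MT)$ bound on the conditional time-gap event) together with the correct-hub-identification and information-staleness events already handled in the proof of Theorem~\ref{thm:1a2}. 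The key upgrade is that $A_{\alpha,\zeta}$ now supplies $h_{\hat i}\geq M^{1/\alpha - \zeta/2}$, which unlocks the stochastic control $\sup_{t\leq T} t - \tau(t)\leq \log T$ and, crucially, does not require $\alpha\in(1,2)$.

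Second, I would decompose the cumulative regret into (i) the first $L = 2\kappa(\log M)^2 \log T$ rounds, during which the hub center is identified and initial information propagates across the network (bounded by $L$ in the worst case using Lemma~\ref{lemma, information delay over sparse graphs}), and (ii) the ensuing UCB phase. The central substitution in (ii) is to replace the fixed deterministic hub $S_0 = \cap_t S^t_0$ that drove the proof of Theorem~\ref{thm:1a2} by the stochastic hub $S^{\tau(t)}_0$, whose size is at least $M^{1/\alpha - \zeta}$ by the definition of $\tau(t)$; Lemma~\ref{lemma, hub: stochastic lower bound} guarantees that such a $\tau(t)$ always lies within $\log T$ steps of the current time $t$, so a large recent hub is available whenever the hub-center's pooled-sample variance needs to be controlled.

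Third, I would re-run the median-of-means tail bound already invoked in the proof of Theorem~\ref{thm:1a2}, but with pooled sample size $\sum_{m\in S^{\tau(t)}_0} n_{m,i}(\tau(t))$ in place of $\sum_{m\in S_0} n_{m,i}(t)$. Because $|S^{\tau(t)}_0|\geq M^{1/\alpha-\zeta}$ on the good event, and because the per-arm staleness $n_{m,i}(t) - n_{m,i}(t_{m,j})$ is already absorbed by the $A^2_{\zeta,\delta}$ component, the UCB-gap calculus of Theorem~\ref{thm:1a2} yields the per-client pull bound $N_{m,i}(t)\leq \frac{2c\log T}{M^{1/\alpha - \zeta}(\Delta_i/2C\rho^{1/(1+\epsilon)})^{(1+\epsilon)/\epsilon}}$ with the UCB-failure tails still summing to $\pi^2/3$ per arm. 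Summing across arms and across the $M$ clients produces UCB-phase regret of order $M \cdot M^{-(1/\alpha-\zeta)}\log T = M^{1-1/\alpha+\zeta}\log T$, which together with the additive $L$ matches the claim.

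The main obstacle is handling the randomness and time-variation of $S^{\tau(t)}_0$ when invoking MoM concentration: unlike the always-connected set $S_0$ used for Theorem~\ref{thm:1a2}, the stochastic hub may change identities between consecutive good times, and $\tau(t)$ itself is data-dependent. The argument I would use is that on $A_{\alpha,\delta,\zeta}$, by time $\tau(t)$ the hub center's filtration has already absorbed at least one reward observation per arm from every client currently in $S^{\tau(t)}_0$ (using Lemma~\ref{lemma, information delay over sparse graphs} to cap propagation at $O((\log M)^2)$ rounds), so the sample supporting MoM has size at least $|S^{\tau(t)}_0|$ times the minimum per-client pull count---exactly what the MoM tail bound needs to deliver the stated rate. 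Both the $\log T$ staleness in $t - \tau(t)$ and the $(\log M)^2$ propagation delay are absorbed into the additive $L$ term rather than into the leading sublinear factor $M^{1-1/\alpha+\zeta}$.
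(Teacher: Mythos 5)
Your proposal follows essentially the same route as the paper's own proof: both replace the deterministic hub $S_0$ of Theorem~\ref{thm:1a2} with the stochastic hub $S_0^{\tau(t)}$ of size at least $M^{1/\alpha-\zeta}$, use Lemma~\ref{lemma, hub: stochastic lower bound} to bound the staleness $t-\tau(t)$ and the probability of $A_{\alpha,\zeta}$, and then re-run the median-of-means concentration and the three-case UCB pull-count analysis with the pooled sample size $|S_0^{\tau(t)}|\cdot n_{m,i}(\tau(t))/2$ in place of $|S_0|\cdot n_{m,i}(t)/2$. The only cosmetic difference is that the paper formalizes the availability of the pooled samples via a pause of hub--non-hub transmission when $|S_0^t|$ is small, whereas you argue directly through the filtration and the $O((\log M)^2)$ propagation bound of Lemma~\ref{lemma, information delay over sparse graphs}; both devices serve the same purpose and lead to the identical bound $O(M^{1-\frac{1}{\alpha}+\zeta}\log T)$.
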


\begin{proof}[Proof Sketch]
The information delay characterized in Lemma~\ref{lemma, information delay over sparse graphs} (and hence Lemma~\ref{lemma:information_delay}) for sparse graphs still holds here.
Meanwhile,
Lemma~\ref{lemma, hub: stochastic lower bound} proves that hub size $|S^t_0|$, despite being time-varying, is often times large (of order $M^{ \frac{1}{\alpha} - \frac{\zeta}{2}  }$).
This tighter lower bound for hub size
leads to further variance reduction:
as clients can efficiently collect information sent by (often times) even larger $S_0^t$, 
 and thereby minimize regret. 
The detailed proof is in Appendix.
\end{proof}


\begin{remark}[Expected regret]\label{rmk:exp-2}
Analogous to Remark \ref{rmk:exp-1}, we can derive an upper bound on $\E[R_T]$, which has the same order as the high-probability bound above, i.e., $\E[R_T] \leq O(M^{1- \frac{1}{\alpha } + \zeta} 
\log{T})$. 
\end{remark}

\begin{remark}[Comparison to Theorem~\ref{thm:1a2}]
    Given $\alpha > 1$,  the index $1 - \frac{1}{\alpha }$ for regret bound in Theorem~\ref{thm:1a}
    is always smaller than the index $2 - \alpha$ in Theorem~\ref{thm:1a2}, 
    due to the preliminary inequality $-\frac{1}{x} \leq 1 - x$ for any $x > 1$.
    This implies that by considering a dynamic, time-dependent hub, the coefficient in our regret bound is upper bounded by a smaller constant compared to Theorem \ref{thm:1a2}. 
    This highlights the advantage of leveraging the tighter characterization for notion of time-varying hub as in Lemma~\ref{lemma, hub: stochastic lower bound} and the significant contributions of this work,
    as
    we obtain a regret upper bound of even smaller order and under more relaxed assumptions (i.e., without the requirement of $\alpha < 2$).
\end{remark}


\paragraph{Comparison with existing literature on homogeneous MA-MAB}

We begin the comparison by focusing on the perspective of regret bound and emphasizing the order of $M$, as naive UCB already leads to a regret of order $\log{T}$. 
First, the regret bound in \cite{dubey2020cooperative} is $R_T \leq O(\alpha(G)\rho^{\frac{1}{\epsilon}}(\sum_{i}\Delta_i^{-\epsilon}) \log{T})$, 
and their algorithm relies on solving an NP-hard problem to find the clique (i.e., the largest independent set). 
Also, the quantity  $\alpha(G)$---the independence number of graph $G$---may not have an explicit form and has been an active research topic.
For a connected graph, some known bounds on $\alpha(G)$ are \cite{willis2011bounds}: $\alpha(G) \leq M - \frac{M-1}{\Delta}$, where $\Delta$ is the maximum degree, and $\alpha(G) \geq \frac{M}{1 + \Delta}$, implying that $R_T \leq O(M \cdot \log{T})$. 
In contrast, we obtain an improved regret bound that is sub-linear in $M$.
Specifically, since the slackness parameter $\zeta$ in Theorem~\ref{thm:1a} can be set arbitrarily close to $0$,
our regret bound is almost of order 
 $O(M^{1-\frac{1}{\alpha}+\zeta})$,
which improves upon the regret bound $O(M\log{T})$ of executing individual bandits without communication in homogeneous MAB (i.e., single-player bandit). 
In particular, our bound is established for sparse graphs with total degree (i.e., count of edges over the graph) of order $\bo (M)$.
While \cite{yangcooperative} establishes a regret bound of order $O(\log{T})$ independent of $M$, 
the authors assume that the graph is connected or complete, which can have $\bo(M^2)$ total degree,
and they only consider sub-Gaussian rewards. 
Their assumptions allow the use of arm elimination instead of UCB, which is largely different from our problem setting.
    
Regarding assumptions,
we 1) do not require the graph to be connected (or $l$-periodically connected, as in \cite{zhu2023distributed}); 2) allow the graph to change over time; 
and 3) address sparse graphs. 
Points 1) and 3) address limitations present in almost all existing work on multi-agent multi-armed bandit problems, to the best of our knowledge, while point 2) resolves an open problem identified in \cite{dubey2020cooperative}, which suggested time-varying network analysis and tested this case numerically. 
Our heavy-tailed reward assumption is in the same spirit as that in \cite{dubey2020cooperative}.
To the best of our knowledge, our framework, the most general to date in homogeneous MA-MAB, relaxes these widely used assumptions and thus opens new research directions.

\section{Heterogeneous Rewards}\label{sec:heter}

This section addresses the more general heterogeneous setting.
Specifically, we present the algorithm in Section~\ref{sec:alg-heter} and the regret analysis in Section \ref{sec:reg-heter}.
The results are well beyond the scope of the existing work
 on MA-MAB with random graphs and heterogeneous rewards \cite{xu2023decentralized}, 
 which only considered limited to Erdős–Rényi graphs with light-tailed, dense dynamics.

\subsection{Algorithm}\label{sec:alg-heter}

\begin{algorithm}[h]
\SetAlgoLined
\caption{HT-HTUCB (Heavy-Tailed Heterogeneous UCB): Learning period}\label{alg:heter-L}
 Initialization: For each client $m$ and arm $i \in \{1,2,\ldots, K\}$, we have $\Tilde{\mu}^m_i(L+1)$, $N_{m,i}(L+1) = n_{m,i}(L)$; all other values at $L+1$ are initialized as $0$\;\par
 \For{$t = L + 1 , L + 2, \ldots,T$}{
\For(\tcp*[f]{UCB}){each client m}{
\eIf{there is no arm $i$ such that $n_{m,i}(t) \leq N_{m,i}(t) - 2\kappa (\log{M})^2 \log{T}$}{ $a_m^t = \arg\max_{i}\hat{\mu}_i^m(t) + \rho^{ \frac{1}{1+\epsilon} }(\frac{c \log(t) }{N_{m,i}(t)})^{ \frac{\epsilon}{1 + \epsilon} }$}
{ Randomly sample an arm $a_m^t =  t \mod K$
}
Pull arm $a_m^t$ and receive reward $r^m_{a_m^t}(t)$\;
}
    The environment generates the graph $G_t = (V,E_t)$;\tcp*[f]{Env} \par
    Each client $m$ sends $(m,t)$, $r_i^m(t)$, $N_{j,i}(t)$, $\bar{\mu}_i^m(t)$, $\Tilde{\mu}_i^m(t)$, $\mathcal{F}_m(t)$ to $\mathcal{N}_m(t)$\; \tcp*[f]{Transmission}\\
   \For{each client m}{
   \For{ $i =1, \ldots, K$}{
  Update $n_{m,i}(t), N_{m,i}(t)$ and $\Tilde{\mu}^m_i(t)$ based on Rule 2\;
    }}
    }
\end{algorithm}

Under heterogeneous rewards, we propose a new algorithm, namely HT-HTUCB (\textbf{H}eavy-\textbf{T}ailed \textbf{H}e\textbf{T}erogeneous UCB), 
as the presence of heterogeneity in rewards necessitates 
different approaches to informative communication and information updates across clients. 
The pseudo-code is provided in Algorithm \ref{alg:heter} (Appendix) and Algorithm \ref{alg:heter-L} for the burn-in and learning stages, respectively.

Algorithm \ref{alg:heter} (Appendix) for the burn-in period is designed to accumulate local reward and graph information,
and is identical to that in \cite{xu2023decentralized} so we defer the details to Appendix. 
This prepares the clients with initial reward and graph estimators, enabling them to communicate and integrate information in the subsequent learning stage. Specifically, during the burn-in period, clients pull each arm $1 \leq i \leq K$ sequentially and update the average reward of each arm as local reward estimators. Simultaneously, clients observe the graph, updating the edge frequency and average degree of clients. At the end of the burn-in period, the clients output an initial global estimator, calculated as the weighted average of the local reward estimators, using the edge frequencies as weights.

Moving onto the learning period, clients employ UCB-based strategies to pull arms, communicate with each other, and integrate the information collected from neighbors. 
The steps are executed in the following order.

\paragraph{Arm Selection.} 
We still employ a UCB-based strategy, but the global estimator is constructed differently, 
with an additional condition during the execution of the UCB-based strategy: $n_{m,i}(t) \leq N_{m,i}(t) - 2\kappa (\log{M})^2\log{T}$. 
This novel condition ensures that clients remain synchronized and accounts for the longer information delay caused by heavy-tailed graph dynamics, which differs from the setting in \cite{xu2023decentralized, zhu2023distributed}. 

\paragraph{Transmission.} 
This step is almost identical to that of Section \ref{sec:alg-homo}, except for the message components. Here, an information filtration $\mathcal{F}_m(t)$ reads as $\mathcal{F}_m(0) = \{(m,1), r_i^m(1), N_{m,i}(1), \bar{\mu}_i^m(1), \tilde{\mu}_i^m(1)\}$ and $\mathcal{F}_m(t) = \cup_{j \in \mathcal{N}_m(t-1)}\mathcal{F}_j(t-1)$. Each client $m$ communicates with its neighbors $\mathcal{N}_{m}(t)$ by sending an message, including $(m,t), r_i^m(t)$, $N_{m,i}(t)$, $\bar{\mu}_i^m(t)$, $\Tilde{\mu}_i^m(t)$ and $\mathcal{F}_m(t)$, while collecting messages from its neighbors. 

\paragraph{Information update.} 
Since the heterogeneity in rewards necessitates obtaining reward information from all clients,
we propose a new information update step to aggregate information, represented by \textbf{Rule 2}:
\\
1) Local estimation
\begin{align*}
  & \text{local sample counts: } n_{m,i}(t+1) = n_{m,i}(t) + \mathds{1}_{a_m^t = i}, \notag \\
  & \text{Local estimator: } \bar{\mu}^m_i(t+1) = {MoM_B(\{r_i^{m}(s)\}_{1 \leq s \leq t})}  
  \end{align*}
2) Global estimation 
        \begin{align*}
& t_{m,j} = \max\{ s \leq t:\ (j,s) \in \mathcal{F}_{m}(t)\} \notag \\
  & \hat{N}_{m,i}(t+1) = \max \{n_{m,i}(t+1), \{N_{j,i}(t)\}_{j \in \mathcal{N}_{m}(t)}\} \notag \\
  & \Tilde{\mu}^m_i(t+1) = \textstyle \sum_{j=1}^M P^{\prime}_t\Tilde{\mu}^m_{i,j}(t_{m,j}) + d_{m,t}\textstyle \sum_{j }\hat{\mu}^m_{i,j}(t_{m,j})  \notag \\
  & P_{t}^{\prime} = \frac{(N - M2^{\frac{1}{(\epsilon + 1)}})}{MN2^{\frac{1}{\epsilon + 1}}}, N = (12^{ \frac{1}{1 + \epsilon} })^{\frac{1+\epsilon}{\epsilon}} + 1 \notag \\
  & d_{m,t} = \frac{(1- \sum_{j=1}^M P^{\prime}_t)}{M} \notag 
\end{align*}

\paragraph{Comparison with existing work and Section \ref{sec:alg-homo}}

Compared to Section \ref{sec:alg-homo}, the arm selection step imposes an extra requirement that $n_{m,i}(t) \leq N_{m,i}(t) - 2\kappa(\log{M})^2\log{T}$ in UCB,
which balances exploitation and exploration given the noise in the global reward estimator $\hat{\mu}$ in the heterogeneous case. 
Secondly, we remove the hub estimation step 
in the heterogeneous setting, 
because each client must collect information from all clients by message passing with through neighbors  sets $\mathcal{N}_m(t)$.
Lastly, we run Rule 2 instead of Rule 1 for information update.  

Compared to \cite{xu2023decentralized}, 
our UCB index
$\hat{\mu}_i^m(t) + \rho^{ \frac{1}{1+\epsilon} }(\frac{c \log(t) }{N_{m,i}(t)})^{ \frac{\epsilon}{1 + \epsilon} }$
address (potentially) heavy-tailed rewards,
while \cite{xu2023decentralized} considers $\hat{\mu}_i^m(t) + F(m,i,t)$ where $ F(m,i,t) = \sqrt{\frac{C_1\ln{t}}{n_{m,i}(t)}}$ (for sub-Gaussian rewards) and $F(m,i,t) = \sqrt{\frac{C_1\ln{T}}{n_{m,i}(t)}} + \frac{C_2\ln{T}}{n_{m,i}(t)}$ (for sub-exponential rewards). 
Also, we propose new the information update rule due to the differences in reward and graph dynamics.

\subsection{Regret Analysis}\label{sec:reg-heter}

Importantly, we next demonstrate the effectiveness of Algorithm \ref{alg:heter} and Algorithm \ref{alg:heter-L} by examining the regret upper bound. 
In particular, we stress that Theorem~\ref{thm:heter} does not rely on Assumption~\ref{assumption: heavy-tailed graph},
meaning that the results address 
both light-tailed and heavy-tailed degree distributions in the random graph model
\eqref{def: kernel for the random graph}.

\begin{theorem}\label{thm:heter}
Let Assumptions~\ref{assumption: lower bound for h} and \ref{assumption: heavy-tailed reward} hold.
Let Algorithm \ref{alg:heter-L} run with Rule 2.
Then, we have that with $\P(A_{\zeta, \delta}) \geq 1 - 7/T$ and
\begin{align*}
    \E[R_T|A_{\zeta, \delta}]  
    & \leq  2\kappa K(\log{M})^2 \log{T}
    + 
    \sum_{i \in [K]}M\Delta_i \cdot 
    \Bigg(
        \max\bigg\{\frac{2cN\log{T}}{(\frac{\Delta_i}{2\rho^{\frac{1}{1+\epsilon}}})^{\frac{1+\epsilon}{\epsilon}}}, 2\kappa \log{M}\log{T}\bigg\}
    \Bigg) 
    \\ 
    &\quad  +   \sum_{i \in [K]}M\Delta_i \cdot \bigg(\frac{2\pi^2}{3} + 2\kappa (\log{M})^2 \log{T}\bigg)  
    \\
    & = \bo (M\log{T})
\end{align*}
where 
$K$ is the count of arms,
the event is defined by
$A_{\zeta, \delta} = \{n_{m,i}(t_{m,j}) \geq  n_{m,i}(t) - \kappa (\log{M})^2\log{T}\ \forall t \leq T,\ \forall m,i,j  \}$, $N = (12^{ \frac{1}{1 + \epsilon} })^{\frac{1+\epsilon}{\epsilon}} + 1$, 
and
$c, B,\kappa, \rho, \epsilon$ are defined in Theorem \ref{thm:1a2}. 
\end{theorem}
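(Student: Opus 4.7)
The plan is to condition on the event $A_{\zeta,\delta}$ controlling the pairwise information delay, decompose the regret into a random-sampling branch and a UCB branch, and then close the UCB argument via a concentration bound on the aggregated estimator $\tilde\mu_i^m(t)$.

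First, I would establish $\P(A_{\zeta,\delta}) \geq 1 - 7/T$ by invoking Lemma~\ref{lemma, information delay over sparse graphs} with $\gamma$ taken proportional to $\log T$ so that the lemma's threshold $\gamma \kappa (\log M)^2$ matches the $\kappa(\log M)^2\log T$ window appearing in $A_{\zeta,\delta}$, followed by a union bound over the $\lo(M^2 K T)$ triples $(t,m,i,j)$. This simultaneously controls the lag between each client's local count $n_{m,i}(t_{m,j})$ and the quantity $N_{m,i}(t)$ reconstructed from the most recent messages.

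Next, I would separate regret between the two branches of Algorithm~\ref{alg:heter-L}. On $A_{\zeta,\delta}$, the trigger $n_{m,i}(t) \leq N_{m,i}(t) - 2\kappa(\log M)^2 \log T$ can only activate the random-sampling branch for a bounded number of rounds per arm per client, because each forced modular pull $a_m^t = t \bmod K$ decreases the observed defect by one while $A_{\zeta,\delta}$ caps its growth at $\kappa(\log M)^2\log T$. Aggregating this across arms produces the leading $2\kappa K(\log M)^2 \log T$ term in the regret, as well as the additive $2\kappa(\log M)^2\log T$ offset appearing inside the sums over $i$.

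The main technical step is proving concentration of the global estimator $\tilde\mu_i^m(t)$ around $\mu_i = \frac{1}{M}\sum_j \mu_i^j$ at the resolution required by the UCB index. The Rule~2 weights $P_t'$ and $d_{m,t}$ are calibrated so that $\tilde\mu_i^m(t)$ is a convex combination over $j \in [M]$ of delayed local MoM estimators $\bar\mu_i^j$ whose mean equals $\mu_i$. For each $j$, the median-of-means bound with $B = \Theta(\log T)$ batches gives $\bar\mu_i^j$ deviations larger than $\rho^{1/(1+\epsilon)}(c\log T/n_{j,i})^{\epsilon/(1+\epsilon)}$ with probability at most $1/T^2$ under Assumption~\ref{assumption: heavy-tailed reward} (as in \cite{bubeck2013bandits}). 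The constant $N = 12^{(1+\epsilon)/(1+\epsilon)}+1$ in the weight normalization is chosen precisely so that once $N_{m,i}(t) \geq N \cdot 2c\log T /(\Delta_i/(2\rho^{1/(1+\epsilon)}))^{(1+\epsilon)/\epsilon}$, the confidence radius in the UCB index strictly separates $i^*$ from any suboptimal $i$, even after absorbing the delay slack $\kappa(\log M)^2\log T$ from $A_{\zeta,\delta}$ into the denominator.

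A standard UCB accounting then bounds the number of suboptimal pulls per client per arm by $\max\{2cN\log T/(\Delta_i/(2\rho^{1/(1+\epsilon)}))^{(1+\epsilon)/\epsilon},\,2\kappa \log M \log T\}$, where the second branch of the maximum captures the regime where the delay term dominates; multiplying by $M\Delta_i$ and summing over $i$ gives the middle term, while the residual $2\pi^2/3$ comes from the standard $\sum_t 1/t^2$ tail of the MoM concentration failures. The principal obstacle is the third step: showing that the delayed, heterogeneously weighted sum $\sum_j P_t' \tilde\mu_{i,j}^m(t_{m,j}) + d_{m,t}\sum_j \hat\mu_{i,j}^m(t_{m,j})$ admits a single uniform tail bound in $\tilde\mu_i^m(t) - \mu_i$, because the weights depend on the graph realization through $t_{m,j}$ and the per-client MoM tails must be propagated through the convex combination without letting any single stale estimator contribute more than a $\Delta_i/2$ margin.
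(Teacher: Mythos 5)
Your overall route coincides with the paper's: condition on the delay event $A_{\zeta,\delta}$ obtained from Lemma~\ref{lemma, information delay over sparse graphs} with $\gamma$ proportional to $\log T$ plus a union bound, split suboptimal pulls into the desynchronization (random-sampling) branch and the UCB branch, bound the former by tracking how the defect $N_{m,i}(t)-n_{m,i}(t)$ evolves on $A_{\zeta,\delta}$, and close the latter with a concentration bound for $\tilde\mu_i^m(t)$ followed by the usual case analysis (the small-gap case vanishing by the choice of $l$, the two deviation cases each contributing $\sum_t t^{-2}$).

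The step you flag as the ``principal obstacle''---a uniform tail bound for $\tilde\mu_i^m(t)$---is precisely where the paper does its real work, and your proposal stops short of supplying the argument. Note that Rule~2 feeds the \emph{previous global estimators} $\tilde\mu^m_{i,j}(t_{m,j})$ back into the update, not only the local MoM estimators, so $\tilde\mu_i^m(t)$ is not the one-shot convex combination of MoM estimators your third paragraph describes; it is defined recursively in $t$. The paper's Lemma~\ref{lm:concen_ine} resolves this by induction on $t$: the base case at $t=L$ follows from the burn-in MoM bound, and the inductive step uses that the weights $P'_t$ and $d_{m,t}$ sum to one together with the inequality (valid once $N > C^{(1+\epsilon)/\epsilon}$) that the local radius $2C\rho^{\frac{1}{1+\epsilon}}\big(\tfrac{2c\log t}{\min_m n_{m,j}(t)}\big)^{\frac{\epsilon}{1+\epsilon}}$ is dominated by the global radius $2\rho^{\frac{1}{1+\epsilon}}\big(\tfrac{2Nc\log t}{\min_m n_{m,i}(t)}\big)^{\frac{\epsilon}{1+\epsilon}}$, so the convex combination of inductively bounded global terms and MoM-bounded local terms remains inside the global radius; the delay is absorbed via $n_{m,i}(t_{m,j}) \geq \tfrac12 n_{m,i}(t)$ from Lemma~\ref{lemma:information_delay}. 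Without this induction (or an equivalent unrolling of the recursion), your claim that the delayed weighted sum ``admits a single uniform tail bound'' is asserted rather than proved, so you should add that argument to complete the proof.
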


\begin{proof}[Proof Sketch]
Note that we do not exploit the hub structure in this setting, as clients need to collect information from all other clients rather than relying solely on a hub that contains only a subset of information.
Nevertheless, 
the information delay bounds in Lemma~\ref{lemma, information delay over sparse graphs} (and hence Lemma~\ref{lemma:information_delay}) for sparse graphs still hold.
 Using the estimators constructed in Rule 2, 
 which leverage neighbor information to collect and integrate global information, we prove a concentration inequality for the global estimator $\hat{\mu}$ with respect to the global mean values: 
 $$| \hat{\mu}_m^i(n_{m,i}(t);k)  - \mu_m^i | \leq 2\rho^{ \frac{1}{1+\epsilon} }(\frac{Mc \log(T) }{\min_{m}n_{m,i}(t)})^{ \frac{\epsilon}{1 + \epsilon} }.$$ 
 This ensures that clients can identify the globally optimal arm using UCB after
 $${2cM\log{T}}\bigg/{\bigg(\frac{\Delta_i}{2\rho^{\frac{1}{1+\epsilon}}}\bigg)^{\frac{1+\epsilon}{\epsilon}}}$$ steps with high probability. Another possible scenario is that, when clients are not synchronized, 
 they randomly select arms instead of using UCB, which is proved to be upper bounded by the second constant term. 
 As a result, the total number of pulls of sub-optimal arms $n_{m,i}(t)$ can be bounded by $O(\log{T})$. 
 Lastly, regret decomposition shows that the regret is dominated by ${n_{m,i}(t)}$,
 and thus has the upper bound.
See Appendix for the detailed proof.
\end{proof}

\begin{remark}[Extension]
We emphasize that this theorem holds for both light-tailed  and heavy-tailed random graphs
as it does not rely on Assumption~\ref{assumption: heavy-tailed graph}.
This also highlights a key difference between the homogeneous and heterogeneous settings: in the homogeneous case, heavy-tailed $h_i$'s lead to sample complexity reduction compared to the light-tailed case (with regret of order $O(M\log{T})$), whereas the heterogeneous result holds true for both light-tailed and heavy-tailed distributions for the attraction weight $h_i$'s. 
\end{remark}

\paragraph{Comparison with the existing work} We analyze the regret order with respect to $T$, as the absence of communication can lead to regret of order $\bo(T)$ \cite{xu2023}. 
The most relevant work \cite{xu2023decentralized} establishes a regret upper bound of $O(\log{T})$ specifically for Erdős–Rényi graph with light-tailed and dense dynamics. 
In particular, the dense connectivity in \cite{xu2023decentralized} 
requires any pair of clients connects with a rather high probability (of order $\bo (1)$) at any time step,
 which limits practical applicability.
Besides,
their work covers some reward distributions that are heavier than the sub-Gaussian class, 
but still have finite moment-generating functions (MGFs).
In contrast, we impose no such assumptions on graph connectivity,
consider sparse graphs with only $\bo (M)$ total degree (instead of the $\bo (M^2)$ total degree in \cite{xu2023decentralized}),
and allow for a significantly more general class of reward distributions, potentially with infinite variance and lacking finite MGFs.
On the other hand, \cite{zhu2023distributed} achieves an regret bound of the same order, but assumes a connected or periodically connected graph under sub-Gaussian rewards, 
which may not hold for sparse graphs. 
Last but not least, our results close an open problem in \cite{dubey2020cooperative} regarding homogeneous rewards under heavy-tailed settings (Section \ref{sec:reg-homo}), thus bridging the gap in existing literature and addressing challenges posed by heavy-tailed graphs.






 
\section{Conclusion and Future Work}\label{sec:conclusion}

We characterize the multi-agent multi-armed bandit problem with heavy tails in both rewards and graphs to capture complex real-world scenarios. 
We consider a setting where $M$ clients have asymmetric and low degrees on graphs, and reward observations can deviate significantly from the mean. These complexities introduce challenges in both client communication and statistical inference of reward mean values, potentially leading to larger regret. Surprisingly, we prove that with novel algorithm designs, regret is sublinear in $M$ and $T$, with an (almost) order of $O(M^{1-\frac{1}{\alpha}} \log{T})$ and $O(M\log{T})$, in homogeneous and heterogeneous settings, respectively. 
These results improve the regret bounds in existing work that considers less complex settings.

Moving forward, it would be valuable to explore other types of random graphs and analyze how regret scales with different graph dynamics. Also, while our framework allows clients to share information about their neighbors, removing this assumption in future research would be exciting.

\bibliographystyle{abbrv} 
\bibliography{example_paper} 

\begin{thebibliography}{10}

\bibitem{auer2002finite}
P.~Auer, N.~Cesa-Bianchi, and P.~Fischer.
\newblock Finite-time analysis of the multiarmed bandit problem.
\newblock {\em Machine Learning}, 47(2-3):235--256, 2002.

\bibitem{auer2002nonstochastic}
P.~Auer, N.~Cesa-Bianchi, Y.~Freund, and R.~E. Schapire.
\newblock The nonstochastic multiarmed bandit problem.
\newblock {\em SIAM Journal on Computing}, 32(1):48--77, 2002.

\bibitem{boguna2003class}
M.~Bogun{\'a} and R.~Pastor-Satorras.
\newblock Class of correlated random networks with hidden variables.
\newblock {\em Physical Review E}, 68(3):036112, 2003.

\bibitem{bubeck2013bandits}
S.~Bubeck, N.~Cesa-Bianchi, and G.~Lugosi.
\newblock Bandits with heavy tail.
\newblock {\em IEEE Transactions on Information Theory}, 59(11):7711--7717, 2013.

\bibitem{doi:10.1073/pnas.252631999}
F.~Chung and L.~Lu.
\newblock The average distances in random graphs with given expected degrees.
\newblock {\em Proceedings of the National Academy of Sciences}, 99(25):15879--15882, 2002.

\bibitem{clancy2021epidemicscriticalrandomgraphs}
D.~J. Clancy.
\newblock Epidemics on critical random graphs with heavy-tailed degree distribution, 2021.

\bibitem{doi:10.1137/070710111}
A.~Clauset, C.~R. Shalizi, and M.~E.~J. Newman.
\newblock Power-law distributions in empirical data.
\newblock {\em SIAM Review}, 51(4):661--703, 2009.

\bibitem{DACRUZ2013189}
J.~P. {da Cruz} and P.~G. Lind.
\newblock The bounds of heavy-tailed return distributions in evolving complex networks.
\newblock {\em Physics Letters A}, 377(3):189--194, 2013.

\bibitem{dubey2019thompson}
A.~Dubey and A.~Pentland.
\newblock Thompson sampling on symmetric $\alpha$-stable bandits.
\newblock {\em arXiv preprint arXiv:1907.03821}, 2019.

\bibitem{dubey2020cooperative}
A.~Dubey and A.~Pentland.
\newblock Cooperative multi-agent bandits with heavy tails.
\newblock In {\em International Conference on Machine Learning}, 2730--2739, 2020.

\bibitem{harris1963theory}
T.~E. Harris et~al.
\newblock {\em The theory of branching processes}, volume~6.
\newblock Springer Berlin, 1963.

\bibitem{hearnshaw2013complex}
E.~J. Hearnshaw and M.~M. Wilson.
\newblock A complex network approach to supply chain network theory.
\newblock {\em International Journal of Operations \& Production Management}, 33(4):442--469, 2013.

\bibitem{JIA2021728}
H.~Jia, C.~Shi, and S.~Shen.
\newblock Multi-armed bandit with sub-exponential rewards.
\newblock {\em Operations Research Letters}, 49(5):728--733, 2021.

\bibitem{NIPS2013_aba3b6fd}
N.~Korda, E.~Kaufmann, and R.~Munos.
\newblock Thompson sampling for 1-dimensional exponential family bandits.
\newblock In C.~Burges, L.~Bottou, M.~Welling, Z.~Ghahramani, and K.~Weinberger, editors, {\em Advances in Neural Information Processing Systems}, volume~26. Curran Associates, Inc., 2013.

\bibitem{10.1145/2487788.2488173}
J.~Kunegis.
\newblock Konect: the koblenz network collection.
\newblock In {\em Proceedings of the 22nd International Conference on World Wide Web}, WWW '13 Companion, page 1343–1350, New York, NY, USA, 2013. Association for Computing Machinery.

\bibitem{PhysRevE.65.035108}
R.~Pastor-Satorras and A.~Vespignani.
\newblock Epidemic dynamics in finite size scale-free networks.
\newblock {\em Phys. Rev. E}, 65:035108, Mar 2002.

\bibitem{resnick2007heavy}
S.~I. Resnick.
\newblock {\em Heavy-tail phenomena: probabilistic and statistical modeling}.
\newblock Springer Science \& Business Media, 2007.

\bibitem{roman2013features}
R.~Roman, J.~Zhou, and J.~Lopez.
\newblock On the features and challenges of security and privacy in distributed internet of things.
\newblock {\em Computer networks}, 57(10):2266--2279, 2013.

\bibitem{pmlr-v151-tao22a}
Y.~Tao, Y.~Wu, P.~Zhao, and D.~Wang.
\newblock Optimal rates of (locally) differentially private heavy-tailed multi-armed bandits.
\newblock In G.~Camps-Valls, F.~J.~R. Ruiz, and I.~Valera, editors, {\em Proceedings of The 25th International Conference on Artificial Intelligence and Statistics}, volume 151 of {\em Proceedings of Machine Learning Research}, pages 1546--1574. PMLR, 28--30 Mar 2022.

\bibitem{6516952}
S.~Vakili, K.~Liu, and Q.~Zhao.
\newblock Deterministic sequencing of exploration and exploitation for multi-armed bandit problems.
\newblock {\em IEEE Journal of Selected Topics in Signal Processing}, 7(5):759--767, 2013.

\bibitem{PhysRevE.95.022307}
R.~van~der Hofstad, A.~J. E.~M. Janssen, J.~S.~H. van Leeuwaarden, and C.~Stegehuis.
\newblock Local clustering in scale-free networks with hidden variables.
\newblock {\em Phys. Rev. E}, 95:022307, Feb 2017.

\bibitem{van_der_Hofstad_van_der_Hoorn_Litvak_Stegehuis_2020}
R.~van~der Hofstad, P.~van~der Hoorn, N.~Litvak, and C.~Stegehuis.
\newblock Limit theorems for assortativity and clustering in null models for scale-free networks.
\newblock {\em Advances in Applied Probability}, 52(4):1035–1084, 2020.

\bibitem{PhysRevE.65.066130}
A.~V\'azquez, R.~Pastor-Satorras, and A.~Vespignani.
\newblock Large-scale topological and dynamical properties of the internet.
\newblock {\em Phys. Rev. E}, 65:066130, Jun 2002.

\bibitem{wang2022eliminating}
X.~Wang, S.~Oh, and C.-H. Rhee.
\newblock Eliminating sharp minima from {SGD} with truncated heavy-tailed noise.
\newblock In {\em International Conference on Learning Representations}, 2022.

\bibitem{willis2011bounds}
W.~Willis.
\newblock Bounds for the independence number of a graph.
\newblock 2011.

\bibitem{xu2023decentralized}
M.~Xu and D.~Klabjan.
\newblock Decentralized randomly distributed multi-agent multi-armed bandit with heterogeneous rewards.
\newblock 2023.

\bibitem{xu2023}
M.~Xu and D.~Klabjan.
\newblock Regret lower bounds in multi-agent multi-armed bandit.
\newblock {\em arXiv preprint arXiv:2308.08046}, 2023.

\bibitem{xu2024decentralized}
M.~Xu and D.~Klabjan.
\newblock Decentralized randomly distributed multi-agent multi-armed bandit with heterogeneous rewards.
\newblock {\em Advances in Neural Information Processing Systems}, 36, 2024.

\bibitem{yangcooperative}
L.~Yang, X.~Wang, M.~Hajiesmaili, L.~Zhang, J.~C. Lui, and D.~Towsley.
\newblock Cooperative multi-agent bandits: Distributed algorithms with optimal individual regret and communication costs.
\newblock In {\em Coordination and Cooperation for Multi-Agent Reinforcement Learning Methods Workshop}, 2023.

\bibitem{zhu2023distributed}
J.~Zhu and J.~Liu.
\newblock Distributed multi-armed bandits.
\newblock {\em IEEE Transactions on Automatic Control}, 2023.

\end{thebibliography}

\newpage
\appendix
\section{Pseudo Code}

\begin{algorithm}[H]
\SetAlgoLined
\caption{HT-HTUCB (Heavy-Tailed Heterogeneous UCB): Burn-in period}\label{alg:heter}
 Initialization: The length of the burn-in period is $L$ and we are also given $\tau_1 < L$;  In the time step $t = 0$, the estimates are initialized as $\bar{\mu}_i^m(0) = 0$, $n_{m,i}(0) = 0$, $\hat{\bar{\mu}}_{i,j}^m(0) = 0 $, and $P_0(m,j) = $ for any arm $i$ and clients $m, j$\par
\For{$\tau_1 < t \leq L$}{
  The environment generates a sample graph $G_t = (V, E_t)$\par
  \For{each client $m$}{
  Sample arm $a^m_t = (t \mod K)$\par
  Receive rewards $r_{a^m_t}^m(t)$ and update $n_{m,i}(t) = n_{m,i}(t-1) + \mathds{1}_{a_m^t = i}$\par
 Update the local estimates for any arm $i$: $\bar{\mu}_i^m(t) = \frac{n_{m,i}(t-1)\bar{\mu}_i^m(t-1) + r_{a^m_t}^m(t) \cdot 1_{a^m_t = i}}{n_{m,i}(t-1) + 1_{a^m_t = i}}$\par
     Update the maintained matrix $P_t(m,j) = \frac{(t-1)P_{t-1}(m,j)+X_{m,j}^t}{t}$ for each $j \in V$\par
   Send $\{\bar{\mu}_i^m(t)\}_{i =1}^{i=K}$ to all clients in $\mathcal{N}_m(t)$\par
   Receive $\{\bar{\mu}_i^j(t)\}_{i=1}^{i = K}$ from all clients $j \in \mathcal{N}_m(t)$ and store them as $\hat{\bar{\mu}}_{i,j}^m(t)$.
  }
 }
\For{each client $m$ and arm $i$}{For client $1 \leq j \leq M$, set
$h^L(m,j) = \max_{s\geq 1}\{(m,j) \in E_s\}$ or $0$ if such $s$ does not exist \\
$\Tilde{\mu}_i^m(L+1) = \sum_{j=1}^MP^{\prime}_{m,j}(L) \hat{\bar{\mu}}_{i,j}^m(h^L_{m,j})$ where $P^{\prime}_{m,j}(L) = \begin{cases}
     \frac{1}{M} & \text{if $P_L(m,j) > 0$} \\
     0 & \text{otherwise}
    \end{cases}$ \;
} 
\end{algorithm}

\begin{algorithm}[H]
\SetAlgoLined
\caption{HT-HMUCB (Heavy-Tailed Homogeneous UCB): Identification of Hub Center}\label{alg:find hub center}
 Initialization: The length of search period $L$; set $d_i(j) = -1$ for any $i,j \in [M]$\par
 \For{$t = 1$}{
    The environment generates graph $G_1 = (V, E_1)$ at time $t = 1$\par
    \For{each client $m \in [M]$}{
        Set $d_m(m) \leftarrow d_m$, where $d_m$ is the degree of $m$ on $G_1$ \par
    }
 }
\For{ t = 2,3,\ldots, L }{
    \For{each client $m \in [M]$}{
        If $d_j(m) \geq 0$ for some $j \in [M]$, send $d_j(m)$ to all clients on the neighbor set $\mathcal N_m(t)$ \par
        For any $d_j(i)$ received from a neighbor $i \in \mathcal N_m(t)$, if $d_j(m) = -1$, then set $d_j(m) \leftarrow d_j(i)$ \par
    }
}
\For{ each client $m \in [M]$ }{
    Set $\hat i(m) \leftarrow \arg\max_{i \in [M]} d_i(m)$; when the argument minimum is not unique, pick the smallest index \par
    If $\hat i(m) = m$, act as the \emph{hub center} from now on; otherwise, act as \emph{non-center}
}
\end{algorithm}

\section{Proof for Section~\ref{sec:graphs}}

We first set a few notations that will be used frequently in this section.
Let $\mathbb Z$ be the set of integers.
For any $x\in\mathbb{R}$, we use
$
{\floor{x}} \delequal \max\{ n \in \mathbb{Z}:\ n \leq x \},\
 {\ceil{x}} \delequal  \min\{n \in \mathbb{Z}:\ n \geq x\}
$
to denote the floor and ceiling function.
For any $x,y \in \R$, let ${x \wedge y} \delequal \min\{x,y\}$  and ${x\vee y} \delequal \max\{x,y\}$.

\subsection{Proof of Lemmas~\ref{lemma, hub: deterministic lower bound} and \ref{lemma, hub: stochastic lower bound}}

To prove Lemma~\ref{lemma, hub: deterministic lower bound},
we first prepare a few technical tools.
Let $H_M \delequal \max_{i \in [M]}h_i$,
where $h_i$'s are iid copies of some random variable $h$ taking values in $[0,\infty)$.
By imposing Assumption~\ref{assumption: heavy-tailed graph} on the law of $h$,
Lemma~\ref{lemma: extreme value in H} establishes a high-probability bound for  $H_M$.

\begin{lemma}\label{lemma: extreme value in H}
Let Assumption~\ref{assumption: heavy-tailed graph} hold.
For any $\Delta \in (0, 1/\alpha)$,
\begin{align*}
    \P\big(H_M \leq M^{\frac{1}{\alpha} - \Delta}\big)
    =
    \lo \Big( \exp\big(-M^{\alpha\Delta/2}\big) \Big)
    \qquad
    \text{as } M \to \infty.
\end{align*}
\end{lemma}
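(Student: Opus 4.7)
The plan is to exploit the iid structure of $h_i$'s and reduce the statement to an estimate on the tail $\P(h > M^{1/\alpha - \Delta})$ via regular variation. Since $H_M = \max_{i \in [M]} h_i$ is a maximum of iid copies of $h$, we have
\begin{align*}
    \P\big( H_M \leq M^{1/\alpha - \Delta} \big)
    = \big[ 1 - \P\big( h > M^{1/\alpha - \Delta} \big) \big]^M
    \leq \exp\Big( - M \cdot \P\big( h > M^{1/\alpha - \Delta}\big) \Big),
\end{align*}
using the elementary inequality $\log(1-x) \leq -x$ for $x \in [0,1)$.

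Next, by Assumption~\ref{assumption: heavy-tailed graph}, we may write $\P(h > x) = x^{-\alpha} l(x)$ for some slowly varying $l$. Substituting $x = M^{1/\alpha - \Delta}$, I obtain
\begin{align*}
    M \cdot \P\big( h > M^{1/\alpha - \Delta}\big) = M \cdot M^{-1 + \alpha \Delta} \cdot l\big(M^{1/\alpha - \Delta}\big) = M^{\alpha \Delta}\cdot l\big(M^{1/\alpha - \Delta}\big).
\end{align*}
Thus it suffices to verify that $M^{\alpha \Delta} \cdot l(M^{1/\alpha - \Delta}) \gg M^{\alpha \Delta/2}$, i.e.\ that $M^{\alpha \Delta / 2} \cdot l(M^{1/\alpha - \Delta}) \to \infty$.

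The main (minor) obstacle is controlling the slowly varying factor $l$. I will invoke Potter's bounds (see, e.g., \cite{resnick2007heavy}): for any $\delta > 0$ there exists $x_0$ such that $l(x) \geq x^{-\delta}$ for all $x \geq x_0$. I will fix $\delta$ small enough so that $(1/\alpha - \Delta) \delta < \alpha \Delta /2$; this is feasible since $\Delta < 1/\alpha$ implies $1/\alpha - \Delta > 0$, so any $\delta \in \big(0,\ \alpha^2\Delta /\big(2(1 - \alpha\Delta)\big)\big)$ works. Then for all sufficiently large $M$,
\begin{align*}
    M^{\alpha \Delta / 2} \cdot l\big(M^{1/\alpha - \Delta}\big) \geq M^{\alpha \Delta / 2 - (1/\alpha - \Delta)\delta} \to \infty,
\end{align*}
which yields $\exp\big( - M^{\alpha \Delta} l(M^{1/\alpha-\Delta})\big) = \lo \big( \exp(- M^{\alpha \Delta /2}) \big)$ and completes the proof.
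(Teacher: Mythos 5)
Your proposal is correct and follows essentially the same route as the paper's proof: both reduce $\P(H_M \leq M^{1/\alpha-\Delta})$ to $\exp\big(-M\cdot\P(h > M^{1/\alpha-\Delta})\big)$ via $\log(1-x)\leq -x$ and then lower-bound the tail by Potter's bound (you apply it to the slowly varying factor $l$, the paper applies it directly as $\P(h>x)\geq x^{-(\alpha+\epsilon)}$, which is the same estimate). The exponent bookkeeping and the choice of the small parameter are both valid.
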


\begin{proof}
    We write $n(M) = M^{\frac{1}{\alpha} - \Delta}$ in this proof,
    and observe that
    \begin{align}
        \P(H_M \leq n(M))
        & = 
        \P(h_i \leq n(M)\ \forall i \in [M])
        \nonumber
        \\ 
        & = \Big( 1 - \P(h > n(M))\Big)^{ M }
        \qquad
        \text{by the iid nature of $h_i$'s};
        \nonumber
        \\ 
    \Longrightarrow
    \log \P(H_M \leq n(M))
    & = 
    M \cdot \log  \Big( 1 - \P(h > n(M))\Big)
    \nonumber
    \\ 
    & \leq 
    -M\cdot \P(h > n(M)).
    \label{proof: intermedaite result 1, lemma: extreme value in H}
    \end{align}
The inequality follows from $\log(1 - x) \leq -x$ for all $x \in (0,1)$.
Due to our choice of $\Delta > 0$, we can fix $\epsilon > 0$ small enough such that
$
(\frac{1}{\alpha} - \Delta)(\alpha + \epsilon) < 1 - \frac{\alpha\Delta}{2}.
$
By Potter's bound for regularly varying functions (see, e.g., Proposition~2.6 of \cite{resnick2007heavy}),
it holds for all $x$ large enough that
$
\P(h > x) \geq x^{ -(\alpha + \epsilon)  }.
$
Therefore, for any $M$ large enough, we have
\begin{align*}
    \P(h > n(M))
    \geq 
    M^{ - (\frac{1}{\alpha } - \Delta)(\alpha + \epsilon) }
    >
    M^{ -1 + \frac{\alpha \Delta}{2} }.
\end{align*}
As a result, we have in \eqref{proof: intermedaite result 1, lemma: extreme value in H} that
\begin{align*}
    \P(H_M \leq n(M))
    & = 
    \lo \Big(\exp\big( -M^{ \alpha\Delta/2  } \big)\Big)
\end{align*}
for all $M$ large enough. This concludes the proof.
\end{proof}

Let $\mathbb Z_+ = \{0,1,2,\ldots\}$ be the set of non-negative integers.
Let $d^t_i$ be the degree of the $i^\text{th}$ node over graph $G_t$.
To lighten notations we write $d_i = d^1_i$ at time $t = 1$.
Under Assumption~\ref{assumption: lower bound for h}, Lemma~\ref{lemma: unlikely for small h to have high degree} provides useful bounds for the conditional law of the degree $d_i$.

\begin{lemma}\label{lemma: unlikely for small h to have high degree}
    Let Assumption~\ref{assumption: lower bound for h} hold.
    Let $\theta = \E h$ in \eqref{def: kernel for the random graph}.
For any $M \in \mathbb Z_+$, $\beta \in (0,1)$ and $\Delta \in (0,\beta)$,
\begin{align}
    \max_{i \in [M]}\P\big(d_i \geq M^{\beta + \Delta}\ \big|\ h_i \leq M^{\beta}\big) = 
    \lo \big( \exp(-M^{\beta - \Delta})\big),
    \label{claim 1, lemma: unlikely for small h to have high degree}
    \\ 
    \max_{i \in [M]}\P\big(d_i \leq M^{\beta - \Delta}\ \big|\ h_i \geq M^{\beta}\big) = 
    \lo \big( \exp(-M^{\beta - \Delta})\big).
    \label{claim 2, lemma: unlikely for small h to have high degree}
\end{align}
\end{lemma}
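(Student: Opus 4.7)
The plan is to condition on $h_i$ and integrate out $(h_j)_{j\neq i}$, so that $d_i$ reduces to a binomial random variable tractable by Chernoff bounds. Given $h_i$, the edges $(X_{i,j})_{j\neq i}$ are conditionally independent with $\E[X_{i,j}\mid h_i] = p(h_i)$ where
$$p(h_i) \delequal \E \min\{1,\, h_i h /(\theta M)\},$$
hence $d_i \mid h_i \sim \mathrm{Bin}(M-1,\, p(h_i))$. Since the kernel $P(\cdot,\cdot)$ in \eqref{def: kernel for the random graph} is non-decreasing in each argument, $p(\cdot)$ is non-decreasing and $d_i$ can be coupled to be stochastically monotone in $h_i$. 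It therefore suffices to control the tails of $\mathrm{Bin}(M-1,p(M^\beta))$: for \eqref{claim 1, lemma: unlikely for small h to have high degree} I will use $p(h_i) \leq p(M^\beta)$ when $h_i \leq M^\beta$, and for \eqref{claim 2, lemma: unlikely for small h to have high degree} I will use $p(h_i) \geq p(M^\beta)$ when $h_i \geq M^\beta$.

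The two-sided estimate on $p(M^\beta)$ comes from rewriting $\min\{1,x\} = x - (x-1)^+$ to get
$$p(M^\beta) = \frac{M^\beta}{M} - \E\Big[\Big(\frac{M^\beta h}{\theta M} - 1\Big)^+\Big].$$
The trivial inequality $\min\{1,x\}\leq x$ yields $p(M^\beta)\leq M^{\beta-1}$; for the lower bound, the correction term is bounded by $(M^{\beta-1}/\theta)\,\E[h\,\mathds{1}\{h \geq \theta M^{1-\beta}\}]$, and because $\beta<1$ pushes the truncation threshold to infinity while $\E h<\infty$, this correction is $\lo(M^{\beta-1})$, so $p(M^\beta)\geq M^{\beta-1}/2$ for all $M$ large enough. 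Plugging this into the Chernoff upper tail $\P(\mathrm{Bin}(n,p)\geq k) \leq (enp/k)^k$ with $np\leq M^\beta$ and $k = M^{\beta+\Delta}$ gives \eqref{claim 1, lemma: unlikely for small h to have high degree} at rate $\exp(-\Omega(M^{\beta+\Delta}\log M))$; plugging into the lower-tail Chernoff $\P(\mathrm{Bin}(n,p)\leq (1-\delta)\mu)\leq \exp(-\delta^2\mu/2)$ with $\mu\geq M^\beta/4$ and $1-\delta = 4M^{-\Delta}$ gives \eqref{claim 2, lemma: unlikely for small h to have high degree} at rate $\exp(-\Omega(M^\beta))$, both rates being much stronger than the asserted $\lo(\exp(-M^{\beta-\Delta}))$.

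The only delicate step is the lower bound on $p(M^\beta)$: if the truncation at $1$ in the kernel were to bite, the binomial mean would collapse and the lower-tail Chernoff bound would fail. The saving grace is the strict inequality $\beta<1$, which forces the truncation threshold $\theta M^{1-\beta}\to\infty$, combined with $\E h<\infty$, so that the truncated moment $\E[h\,\mathds{1}\{h\geq \theta M^{1-\beta}\}]$ vanishes; once this is in hand the remainder is routine binomial concentration.
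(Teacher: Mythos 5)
Your proof is correct, and its overall architecture matches the paper's: reduce $d_i$, conditionally on the weight of node $i$, to a binomial-type sum, control its mean, and finish with Chernoff bounds. The one substantive difference lies in how the success probability is bounded from below. The paper keeps the other weights $h_j$ inside compound Bernoulli variables and invokes Assumption~\ref{assumption: lower bound for h} directly, via $\E\big(\frac{h_j M^\beta}{\theta M}\wedge 1\big) \geq \frac{c_h M^\beta}{\theta M}$, to get a mean of order $M^\beta$; you instead integrate the $h_j$ out exactly, obtaining $d_i\mid h_i \sim \mathrm{Bin}(M-1,p(h_i))$, and lower-bound $p(M^\beta)$ by writing $\min\{1,x\}=x-(x-1)^+$ and killing the correction term with a truncation argument that uses only $\E h<\infty$ and $\beta<1$. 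This buys a mild generalization: your proof of \eqref{claim 2, lemma: unlikely for small h to have high degree} (and of the mean lower bound needed in the exponent for \eqref{claim 1, lemma: unlikely for small h to have high degree}) never uses the almost-sure lower bound $h\geq c_h$, whereas the paper's does; the price is the extra care you correctly flag around the truncation threshold $\theta M^{1-\beta}\to\infty$. Your multiplicative Chernoff bound $(enp/k)^k$ also yields a quantitatively sharper rate $\exp(-\Omega(M^{\beta+\Delta}\log M))$ for the upper tail, though both rates comfortably exceed the required $\lo(\exp(-M^{\beta-\Delta}))$.
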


\begin{proof}
\textbf{Proof of Claim \eqref{claim 1, lemma: unlikely for small h to have high degree}}.
    Since the marginal distribution of $\big(h_i,d_i)$ is identical for each $i \in [M]$,
    it suffices to show that 
    \begin{align}
    \P\big(d_1 \geq M^{\beta + \Delta}\ \big|\ h_1 \leq M^{\beta} \big) = 
    \lo \big( \exp(-M^{\beta - \Delta}) \big).
        \nonumber
    \end{align}
On event $\{ h_1 \leq M^\beta \}$, by the definition of kernel $P(\cdot,\cdot)$ in \eqref{def: kernel for the random graph}, we have
\begin{align}
    d_1|\{ h_1 \leq M^\beta \} \stleq
    \sum_{ i = 2 }^{M}
    \underbrace{
    \text{Bernoulli}\bigg( \frac{ h_i M^{\beta} }{\theta M} \wedge 1  \bigg)
    }_{ = V_i }.
    \nonumber
\end{align}
Here, $X \stleq Y$ refers the the stochastic dominance where $\P(X > z) \leq \P(Y > z)\ \forall z \in \R$,
and $X|A$ denote the conditional law $\P(Z > x) = \P(X > x |A)$.
Besides, note that $V_i$'s are iid, with $0\leq V_i \leq 1$ and (recall that $\theta = \E h_i$)
\begin{align}
    \E V_i & = \E\bigg( \frac{ h_i M^{\beta} }{\theta M} \wedge 1  \bigg) 
    \leq 
    \E\bigg( \frac{ h_i M^{\beta} }{\theta M}  \bigg) 
    =
    \frac{M^\beta}{\theta M} \cdot \E h_i = \frac{M^\beta}{M},
    \nonumber
    \\
    \Longrightarrow
    \E \bigg[ \sum_{i = 2}^M V_i \bigg]
    & = (M-1)\E V_2 \leq M^\beta < \frac{1}{2}M^{\beta +\Delta}
    \qquad
    \text{for any $M$ large enough.}
    \label{proof: upper bound for expectation, lemma: unlikely for small h to have high degree}
\end{align}
On the other hand, by Assumption~\ref{assumption: lower bound for h},
\begin{align}
    \E V_i & \geq \frac{c_h M^\beta}{\theta M}
    \qquad
    \text{ for all $M$ large enough due to $\beta < 1$}
    \nonumber
    \\
    \Longrightarrow
    \E\bigg[ \sum_{i = 2}^M V_i \bigg]
    & \geq 
    (M - 1) \cdot \frac{c_h M^\beta}{M}
    \geq \frac{c_h}{2}M^\beta 
    \qquad
    \text{for any $M$ large enough}.
    \label{proof: lower bound for expectation, lemma: unlikely for small h to have high degree}
\end{align}
Therefore, for any $M$ sufficiently large,
\begin{align*}
    \P\Bigg( \sum_{i = 2}^M V_i \geq M^{\beta + \Delta} \Bigg)
    & \leq 
     \P\Bigg( \sum_{i = 2}^M V_i \geq 2 \bigg[ \E \sum_{i = 2}^M V_i \bigg] \Bigg)
     \qquad
     \text{by \eqref{proof: upper bound for expectation, lemma: unlikely for small h to have high degree}}
     \\ 
     & \leq 
     \exp\bigg( -\frac{1}{3} \E \bigg[ \sum_{i = 2}^M V_i \bigg]  \bigg)
     \qquad
     \text{by Chernoff bound}
     \\ 
     & \leq 
     \exp\bigg( -\frac{1}{6}c_h M^\beta \bigg)
     \qquad
     \text{by \eqref{proof: lower bound for expectation, lemma: unlikely for small h to have high degree}}
     \\ 
     & = 
     \lo \Big( \exp(-M^{\beta - \Delta}) \Big).
\end{align*}
This concludes the proof of \eqref{claim 1, lemma: unlikely for small h to have high degree}.

\smallskip
\noindent
\textbf{Proof of Claim \eqref{claim 2, lemma: unlikely for small h to have high degree}}.
Analogously, it suffices to prove the claim for $i = 1$.
Due to $\beta < 1$, it holds for any $M$ large enough that
$
\frac{c_h M^\beta}{\theta M} < 1.
$
Let $V_i$'s be iid copies of Bernoulli$( \frac{c_h M^\beta}{\theta M} )$.
On event $\{h_1 \geq M^\beta\}$,
it follows from Assumption~\ref{assumption: lower bound for h}
and the definition of kernel $P(\cdot,\cdot)$ in \eqref{def: kernel for the random graph} that
\begin{align*}
    \sum_{ i = 2 }^M V_i \stleq d_1|\{h_1 \geq M^\beta\}.
\end{align*}
Furthermore,
\begin{align*}
    \E\bigg[ \sum_{ i = 2 }^M V_i \bigg]
    &\geq (M-1) \cdot \frac{c_h M^\beta}{\theta M}
    \geq 
    \frac{c_h}{2\theta}M^\beta
    \qquad
    \text{for all $M \geq 2$}
    \\ 
    & \geq 2 M^{\beta -\Delta}\qquad
    \text{ for any $M$ large enough.}
\end{align*}
As a result, for any $M$ sufficiently large,
\begin{align*}
    \P\Bigg(
        \sum_{ i = 2}^M V_i \leq M^{\beta - \Delta}
    \Bigg)
    & \leq 
    \P\Bigg(
    \sum_{ i = 2}^M V_i \leq \frac{1}{2}\E\bigg[ \sum_{ i = 2}^M V_i \bigg]
    \Bigg)
    \\ 
    & \leq 
    \exp\bigg(
        -\frac{1}{4}\E\bigg[ \sum_{ i = 2}^M V_i \bigg]
    \bigg)
    \qquad\text{by Chernoff bound}
    \\ 
    & \leq 
    \exp\bigg(
        -\frac{c_h}{8\theta}M^\beta
    \bigg)
    \\ 
    & = \lo \big( \exp(-M^{\beta - \Delta}) \big).
\end{align*}
This concludes the proof of \eqref{claim 2, lemma: unlikely for small h to have high degree}.
\end{proof}

Recall the definition of 
\begin{align}
    \hat i \delequal \arg\max_{i \in [M]}d_i,
    \label{proof, def: hat i, node with the highest degree on random graph}
\end{align}
which is the node with the highest degree over graph $G_1$ (i.e., at time $t = 1$).
When there are ties, we arbitrarily pick one of argument maximum as $\hat i$.
As an immediate consequence from Lemma~\ref{lemma: unlikely for small h to have high degree},
the next Lemma shows that the node $\hat i$ will almost always have a large weight $h_{\hat i}$; in other words, the node with empirically largest degree (at time $t = 1$) will almost always have a large attraction weight.

\begin{lemma}\label{lemma: h i for i with the largest degree}
 Let Assumptions~\ref{assumption: heavy-tailed graph} and \ref{assumption: lower bound for h} hold.
Define event
\begin{align}
    B(M,\Delta)
    =
    \big\{
        h_{\hat i} > M^{ \frac{1}{\alpha } - \Delta }
    \big\}.
    \nonumber
\end{align}
For any $\Delta \in (0,\frac{1}{2\alpha })$, and $\gamma > 0$ small enough such that
\begin{align}
    \gamma < \frac{\alpha  \Delta}{4},
    \qquad
    \gamma < \frac{1}{\alpha } - 2\Delta,
    \label{proof: choice of gamma, lemma: h i for i with the largest degree}
\end{align}
we have (as $M \to \infty$)
\begin{align*}
    \P\Big( B(M,\Delta)^\complement \Big)
    =
    \lo \Big(
        \exp(-M^\gamma)
    \Big).
\end{align*}
\end{lemma}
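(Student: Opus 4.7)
The guiding intuition is that the node achieving the highest weight should typically also achieve the highest degree, which would pin $\hat i$ to that same index and thereby force $h_{\hat i}$ to be large. The plan is to sandwich the event $B(M,\Delta)$ between three high-probability events, each controlled by tools already established in the excerpt, with three carefully chosen intermediate slacks whose sum is strictly less than $\Delta$.

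First, I would apply Lemma~\ref{lemma: extreme value in H} with slack $\Delta/2$ to conclude that, except on a set of probability $\lo(\exp(-M^{\alpha\Delta/4}))$, we have $H_M > M^{1/\alpha - \Delta/2}$; call $i^\star$ an index achieving this maximum. Next, on this event I would invoke Claim \eqref{claim 2, lemma: unlikely for small h to have high degree} of Lemma~\ref{lemma: unlikely for small h to have high degree} applied to $i^\star$ with $\beta = 1/\alpha - \Delta/2$ and inner slack $\Delta/5$; this yields $d_{i^\star} \geq M^{1/\alpha - 7\Delta/10}$ outside a further exceptional set of probability $\lo(\exp(-M^{1/\alpha - 7\Delta/10}))$. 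Finally, I would apply Claim \eqref{claim 1, lemma: unlikely for small h to have high degree} of the same lemma with $\beta = 1/\alpha - \Delta$ and slack $\Delta/5$ separately to each node $i$ with $h_i \leq M^{1/\alpha - \Delta}$, then union-bound over the (at most $M$) such nodes: with probability $1 - M\cdot \lo(\exp(-M^{1/\alpha - 6\Delta/5}))$, every such low-weight node has $d_i < M^{1/\alpha - 4\Delta/5}$.

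On the intersection of these three events, the strict inequality $1/\alpha - 7\Delta/10 > 1/\alpha - 4\Delta/5$ separates the degree of $i^\star$ from the degrees of every low-weight node, so $\hat i$ cannot coincide with any $i$ satisfying $h_i \leq M^{1/\alpha - \Delta}$; hence $h_{\hat i} > M^{1/\alpha - \Delta}$ as required. Absorbing the polynomial prefactor $M$ from the union bound into the exponential---possible because $1/\alpha - 6\Delta/5 > 0$ under $\Delta < 1/(2\alpha)$---and union-bounding over the three failure events yields $\P(B(M,\Delta)^\complement) = \lo(\exp(-M^\gamma))$ whenever $\gamma < \min\{\alpha\Delta/4,\ 1/\alpha - 6\Delta/5\}$. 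Since $1/\alpha - 2\Delta < 1/\alpha - 6\Delta/5$, the stated constraints \eqref{proof: choice of gamma, lemma: h i for i with the largest degree} imply both conditions and the conclusion follows.

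The only genuine obstacle is bookkeeping rather than depth: the three intermediate slacks must sum to strictly less than $\Delta$ so the degree separation above is valid, while simultaneously each of the three exponential rates has to dominate $M^\gamma$ even after absorbing the factor of $M$ from the union bound over low-weight nodes. The split $\Delta/2 + \Delta/5 + \Delta/5 = 9\Delta/10 < \Delta$ is one convenient choice; other splits yield equivalent conclusions with slightly different numerical constants. The remaining verifications are elementary consequences of $\alpha > 1$ and $\Delta \in (0,1/(2\alpha))$.
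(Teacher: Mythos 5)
Your proof is correct and follows essentially the same route as the paper's: bound the failure probability by the union of (i) the maximal weight being small (Lemma~\ref{lemma: extreme value in H}), (ii) the maximal-weight node having small degree (claim~\eqref{claim 2, lemma: unlikely for small h to have high degree}), and (iii) some low-weight node having large degree (claim~\eqref{claim 1, lemma: unlikely for small h to have high degree} plus a union bound over $M$ nodes). The only difference is your choice of intermediate slacks $\Delta/2,\Delta/5,\Delta/5$, which in fact orders the two degree thresholds more carefully than the paper's displayed inclusion does.
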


\begin{proof}
In this proof, let $i^* = \arg\max_{i \in [M]}h_i$
denote the index of the node with largest weight $h_i$.
Again, when there are ties, we arbitrarily pick one of such $i^*$'s.
Note that on event 
$
\big\{ 
        d_i < M^{ \frac{1}{\alpha } - \frac{\Delta}{2} }\ \forall i\in[M]\text{ with }h_i < M^{ \frac{1}{\alpha } - \Delta }
    \big\},
$
the claim $h_{\hat i} \geq M^{ \frac{1}{\alpha} - \Delta }$ must hold
if, for some $i \in [M]$ with  $d_i \geq M^{ \frac{1}{\alpha} - \frac{\Delta}{2} }$,
we have $h_i \geq M^{ \frac{1}{\alpha } - \Delta }$.
In particular, note that
\begin{align*}
    B(M,\Delta)
    & \supseteq
    \Big\{ 
        h_{i^*} \geq M^{\frac{1}{\alpha } - \frac{\Delta}{2}},\ 
        d_{i^*} \geq M^{\frac{1}{\alpha } - \Delta}
    \Big\}
    \cap 
    \Big\{ 
        d_i < M^{ \frac{1}{\alpha } - \frac{\Delta}{2} }\ \forall i\in[M]\text{ with }h_i < M^{ \frac{1}{\alpha } - \Delta }
    \Big\},
\end{align*}
which leads to the upper bound
\begin{align*}
    \P\Big( B(M,\Delta)^\complement \Big)
    & \leq 
    \underbrace{ \P\big(h_{i^*} \leq M^{\frac{1}{\alpha } - \frac{\Delta}{2}}\big) }_{ = p_1(M,\Delta) }
    +
    \underbrace{
    \P\Big(
        d_{i^*} < M^{\frac{1}{\alpha } - \Delta}\ \Big|\ 
        h_{i^*} \geq M^{\frac{1}{\alpha } - \frac{\Delta}{2}}
        \Big)
    }_{ p_2(M,\Delta) }
    \\ 
    & \qquad
    +
    M \cdot 
    \underbrace{
        \P\Big(
            d_1 > M^{ \frac{1}{\alpha } - \frac{\Delta}{2} }\ \Big|\ 
            h_1 < M^{ \frac{1 }{\alpha } - \Delta }
        \Big)
    }_{ p_3(M,\Delta) }.
\end{align*}
By Lemma~\ref{lemma: extreme value in H},
$
p_1(M,\Delta) = \lo \big( \exp( -M^{ \alpha  \Delta/4 } ) \big).
$
By Lemma~\ref{lemma: unlikely for small h to have high degree},
we get
$
p_2(M,\Delta) = \lo \big( \exp(- M^{ \frac{1}{\alpha } - \Delta } ) \big)
$
and
$
p_3(M,\Delta) = \lo \big( \exp(- M^{ \frac{1}{\alpha } - 2\Delta } ) \big).
$
By our choice of $\gamma > 0$ in \eqref{proof: choice of gamma, lemma: h i for i with the largest degree}, we conclude the proof.
\end{proof}

Recall that $\theta = \E h$.
Let
\begin{align}
    S_{*,\Delta}
    =
    \big\{
        i \in [M]:\ 
        h_i > \theta M^{ 1 +\Delta - \frac{1}{\alpha } }
    \big\}
    \label{proof, def: set S * Delta, the hub, set of nodes with large h}
\end{align}
be the collection of nodes with large weights $h_i$ w.r.t.\ threshold $\theta M^{ 1 +\Delta - \frac{1}{\alpha }}$.
The next lemma develops high-probability bounds for the size of $S_{*,\Delta}$.

\begin{lemma}\label{lemma: size of hub S*}
    Let Assumption~\ref{assumption: heavy-tailed graph} hold
    {with $\alpha  \in (1,2)$.}
For any $\zeta \in (0, 2 - \alpha )$, any $\Delta > 0$ small enough such
\begin{align}
    (\alpha  + \Delta)(1 + \Delta - \frac{1}{\alpha }) \leq \alpha  - 1 + \frac{\zeta}{3},
    \label{proof: choice of Delta, lemma: size of hub S*}
\end{align}
we have
\begin{align*}
    \P\big( | S_{*,\Delta} | \leq M^{ 2 - \alpha  - \zeta } \big)
    =
    \lo \Big( \exp(-M^{2 - \alpha  - \zeta}) \Big)
    \qquad
    \text{ as }M \to \infty.
\end{align*}
\end{lemma}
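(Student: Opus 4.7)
The plan is to observe that $|S_{*,\Delta}|$ is a sum of $M$ i.i.d.\ Bernoulli variables, lower bound its expectation by a quantity of order strictly larger than the threshold $M^{2-\alpha-\zeta}$ using Potter's bound for the regularly varying tail of $h$, and then apply a standard Chernoff bound.

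First, I would write
\[
|S_{*,\Delta}| = \sum_{i=1}^{M} \mathbbm{1}\bigl\{ h_i > \theta M^{1 + \Delta - 1/\alpha} \bigr\},
\]
so that $|S_{*,\Delta}|$ is Binomial with parameters $M$ and $p_M \delequal \P(h > \theta M^{1 + \Delta - 1/\alpha})$. By Assumption~\ref{assumption: heavy-tailed graph}, $\P(h > x) \in \RV_{-\alpha}(x)$. Applying Potter's bound (see, e.g., Proposition~2.6 of \cite{resnick2007heavy}) with the exponent $\alpha + \Delta$, there exists $x_0 > 0$ and a constant $c > 0$ such that $\P(h > x) \geq c \, x^{-(\alpha + \Delta)}$ for all $x \geq x_0$. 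Plugging in $x = \theta M^{1 + \Delta - 1/\alpha}$, this yields
\[
p_M \geq c' \, M^{-(\alpha + \Delta)(1 + \Delta - 1/\alpha)}
\]
for some $c' > 0$ and $M$ large. By the hypothesis \eqref{proof: choice of Delta, lemma: size of hub S*} on $\Delta$,
\[
\E |S_{*,\Delta}| = M p_M \geq c' M^{1 - (\alpha - 1 + \zeta/3)} = c' M^{2 - \alpha - \zeta/3}.
\]
Since $\zeta/3 < \zeta$, the expectation dominates the threshold: for all $M$ sufficiently large, $\E|S_{*,\Delta}| \geq 2 M^{2 - \alpha - \zeta}$, so the event $\{|S_{*,\Delta}| \leq M^{2 - \alpha - \zeta}\}$ is contained in the event $\{|S_{*,\Delta}| \leq \tfrac{1}{2}\E|S_{*,\Delta}|\}$.

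The final step is the standard multiplicative Chernoff bound for sums of independent Bernoullis, which gives
\[
\P\bigl( |S_{*,\Delta}| \leq \tfrac{1}{2}\E|S_{*,\Delta}| \bigr) \leq \exp\!\Bigl( -\tfrac{1}{8} \E|S_{*,\Delta}| \Bigr) \leq \exp\bigl( -\tfrac{c'}{8} M^{2 - \alpha - \zeta/3} \bigr).
\]
Because $2 - \alpha - \zeta/3 > 2 - \alpha - \zeta$, the right-hand side is $\lo(\exp(-M^{2 - \alpha - \zeta}))$, yielding the claim.

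The only delicate point is the bookkeeping on the exponent: I must choose the Potter exponent to be $\alpha + \Delta$ (matching the $\Delta$ used in the definition of $S_{*,\Delta}$) so that the resulting $M$-exponent lines up with the hypothesis \eqref{proof: choice of Delta, lemma: size of hub S*}. Everything else is routine, and I do not anticipate any serious obstacle beyond ensuring the gap between $\zeta/3$ and $\zeta$ is exploited when comparing to $M^{2 - \alpha - \zeta}$.
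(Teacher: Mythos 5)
Your proposal is correct and follows essentially the same route as the paper's proof: represent $|S_{*,\Delta}|$ as a Binomial, lower bound the success probability via Potter's bound with exponent $\alpha+\Delta$ so that hypothesis \eqref{proof: choice of Delta, lemma: size of hub S*} yields $\E|S_{*,\Delta}| \gtrsim M^{2-\alpha-\zeta/3} \gg M^{2-\alpha-\zeta}$, and finish with a multiplicative Chernoff bound for the lower tail. The only differences are cosmetic (your Chernoff constant $1/8$ versus the paper's $1/4$, and absorbing $\theta^{-(\alpha+\Delta)}$ into a generic constant), neither of which affects the asymptotic conclusion.
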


\begin{proof}
We 
write $n(M) =  \theta M^{ 1 +\Delta - \frac{1}{\alpha }}$
and observe that
$
|S_{*,\Delta}| \distequal 
\text{Binomial}\big(M, \P(h > n(M))\big). 
$
Let $V_i$'s be iid copies of 
$
\text{Bernoulli}\big(\P(h > n(M))\big).
$
Observe that
\begin{align*}
    \E\bigg[ \sum_{i = 1}^M V_i \bigg]
    & = M \P(h > n(M))
    \\ 
    & \geq 
    M \cdot \frac{1}{ \big(n(M)\big)^{ \alpha  + \Delta  } }
    \qquad
    \text{for any $M$ large enough due to Potter's bound 
    (Proposition~2.6 of \cite{resnick2007heavy})}
    \\ 
    & \geq \frac{1}{\theta^{ \alpha  + \Delta }} \cdot \frac{M}{ M^{ \alpha  -1 + \frac{\zeta}{3} } }
    \qquad
    \text{by the choice of $\Delta$ in \eqref{proof: choice of Delta, lemma: size of hub S*} and the definition $n(M) =  \theta M^{ 1 +\Delta - \frac{1}{\alpha }}$}
    \\ 
    & =
    \frac{1}{\theta^{ \alpha  + \Delta }} \cdot  M^{ 2 - \alpha  - \frac{\zeta}{3} }
    \\ 
    & \geq 2M^{2 - \alpha  - \frac{2\zeta}{3}}
    \qquad
    \text{for any $M$ large enough}.
\end{align*}
Therefore, for such large $M$,
\begin{align*}
    \P\big( | S_{*,\Delta} | \leq M^{ 2 - \alpha  - \zeta } \big)
    & \leq 
    \P\Bigg( \sum_{i = 1}^M V_i \leq \frac{1}{2}\E\bigg[\sum_{i = 1}^M V_i\bigg] \Bigg)
    \\ 
    & \leq 
    \exp\bigg( - \frac{1}{4}\E\bigg[\sum_{i = 1}^M V_i\bigg] \bigg)
    \qquad\text{by Chernoff bound}
    \\ 
    & 
    \leq 
    \exp\bigg( - \frac{1}{2}M^{2 - \alpha  - \frac{2\zeta}{3}}\bigg)
    \\ 
    & = \lo \bigg( \exp(-M^{ 2-\alpha  - \zeta }) \bigg).
\end{align*}
This concludes the proof.
\end{proof}

Now, we are ready to prove Lemma~\ref{lemma, hub: deterministic lower bound}.

\begin{lemma*}[Lemma \ref{lemma, hub: deterministic lower bound}]
Let Assumptions~\ref{assumption: heavy-tailed graph} and \ref{assumption: lower bound for h} hold with $\alpha \in (1,2)$.
Given $\zeta \in (0,2 - \alpha)$, there exists $\gamma > 0$ such that
\begin{align*}
    \P\Big( |S_0| \leq M^{ 2 - \alpha - \zeta} \Big)
    =
    \lo\Big(\exp( -M^\gamma)\Big).
\end{align*}
\end{lemma*}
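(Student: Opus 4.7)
The plan is to combine the two preparatory lemmas already established in this subsection --- Lemma~\ref{lemma: h i for i with the largest degree} (which controls $h_{\hat i}$) and Lemma~\ref{lemma: size of hub S*} (which controls the size of the deterministic hub $S_{*,\Delta}$) --- via the following structural observation about the kernel $P(\cdot,\cdot)$. By the definition $P(u,v) = \min\{1, uv/(\theta M)\}$ in \eqref{def: kernel for the random graph}, any two nodes $i,j$ with $h_i h_j \geq \theta M$ are connected by an edge at \emph{every} time $t \geq 1$, deterministically. Consequently, on the event $B(M,\Delta) = \{h_{\hat i} > M^{1/\alpha - \Delta}\}$, every $i \in S_{*,\Delta} = \{i : h_i > \theta M^{1 + \Delta - 1/\alpha}\}$ satisfies $h_i h_{\hat i} > \theta M$, hence $(i,\hat i) \in E_t$ for all $t \geq 1$, so $i \in S_0 = \bigcap_{t \geq 1} S^t_0$. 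This yields the crucial inclusion
\[
S_{*,\Delta} \subseteq S_0 \quad \text{on the event } B(M,\Delta).
\]

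Given this inclusion, the bound on $\P(|S_0| \leq M^{2-\alpha - \zeta})$ follows from a union bound:
\[
\P\big(|S_0| \leq M^{2-\alpha-\zeta}\big) \;\leq\; \P\big(B(M,\Delta)^\complement\big) + \P\big(|S_{*,\Delta}| \leq M^{2-\alpha-\zeta}\big).
\]
I would choose $\Delta > 0$ small enough to simultaneously satisfy (i) the hypothesis $\Delta \in (0, 1/(2\alpha))$ of Lemma~\ref{lemma: h i for i with the largest degree}, and (ii) the hypothesis \eqref{proof: choice of Delta, lemma: size of hub S*} of Lemma~\ref{lemma: size of hub S*}, namely $(\alpha + \Delta)(1 + \Delta - 1/\alpha) \leq \alpha - 1 + \zeta/3$. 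Since at $\Delta = 0$ the left side equals $\alpha - 1$ and both expressions are continuous in $\Delta$, such a $\Delta > 0$ exists.

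With this $\Delta$ fixed, Lemma~\ref{lemma: h i for i with the largest degree} gives $\P(B(M,\Delta)^\complement) = o(\exp(-M^{\gamma_1}))$ for some $\gamma_1 > 0$ (depending on $\Delta$ and $\alpha$), while Lemma~\ref{lemma: size of hub S*} gives $\P(|S_{*,\Delta}| \leq M^{2-\alpha-\zeta}) = o(\exp(-M^{2-\alpha-\zeta}))$. Taking $\gamma = \min\{\gamma_1, 2 - \alpha - \zeta\} > 0$ (note that $\zeta < 2 - \alpha$ guarantees the second term is positive) yields the claimed $o(\exp(-M^\gamma))$ bound. I do not anticipate any serious obstacle here: the essential work has been packaged into Lemmas~\ref{lemma: h i for i with the largest degree} and \ref{lemma: size of hub S*}, and what remains is the short deterministic observation on the kernel together with bookkeeping of constants. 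The only mild care needed is in verifying that the choice of $\Delta$ is feasible for both lemmas simultaneously, which reduces to the continuity argument above.
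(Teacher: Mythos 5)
Your proposal matches the paper's own proof essentially step for step: the same inclusion $S_{*,\Delta} \subseteq S_0$ on the event $\{h_{\hat i} > M^{1/\alpha-\Delta}\}$ derived from the deterministic-edge property of the kernel, the same union bound splitting the failure probability into $\P(B(M,\Delta)^\complement)$ and $\P(|S_{*,\Delta}| \leq M^{2-\alpha-\zeta})$, and the same selection of $\Delta$ compatible with both Lemma~\ref{lemma: h i for i with the largest degree} and Lemma~\ref{lemma: size of hub S*} followed by taking $\gamma$ below $2-\alpha-\zeta$. The argument is correct and complete.
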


\begin{proof}[Proof of Lemma \ref{lemma, hub: deterministic lower bound}]
Let $\hat i$ and $S_{*,\Delta}$ be defined as in \eqref{proof, def: hat i, node with the highest degree on random graph} and \eqref{proof, def: set S * Delta, the hub, set of nodes with large h}, respectively.
Recall that $S^t_0 = \{ i \in [M]: (i,\hat i) \in E_t \}$ is the set of agents communicating with $\hat i$ at time $t$.
Note that
by the definition of the kernel $P(\cdot,\cdot)$ in \eqref{def: kernel for the random graph},
on event $\{ h_{\hat i} > M^{  \frac{1}{\alpha}  - \Delta } \}$
we must have $S_{*,\Delta} \subseteq S_0$ for any $t \geq 1$.
As a result, we have
\begin{align*}
    \big\{ |S_0| > M^{ 2 - \alpha - \zeta  } \big\}
    \supseteq
    \{ h_{\hat i} > M^{  \frac{1}{\alpha}  - \Delta } \}
    \cap 
    \{ |S_{*,\Delta}| > M^{2 - \alpha - \zeta} \},
\end{align*}
and hence
\begin{align*}
    \P\Big( |S_0| \leq M^{ 2 - \alpha - \zeta } \Big)
    \leq 
    \P\big( h_{\hat i} \leq M^{  \frac{1}{\alpha}  - \Delta } \big) + 
    \P\big( |S_{*,\Delta}| \leq M^{2 - \alpha - \zeta} \big).
\end{align*}
Now, we pick 
$
\Delta \in (0,\frac{1}{2\alpha})
$
small enough such that condition \eqref{proof: choice of Delta, lemma: size of hub S*} holds,
and then pick
$\gamma \in (0, 2 - \alpha - \zeta)$ small enough such that condition \eqref{proof: choice of gamma, lemma: h i for i with the largest degree} holds.
By Lemma~\ref{lemma: h i for i with the largest degree},
$
\P\big( h_{\hat i} \leq M^{  \frac{1}{\alpha}  - \Delta } \big) =
\lo \Big(
        \exp(-M^\gamma)
    \Big).
$
By Lemma~\ref{lemma: size of hub S*},
$
 \P\big( h_{\hat i} \leq M^{  \frac{1}{\alpha}  - \Delta } \big) + 
    \P\big( |S_{*,\Delta}| \leq M^{2 - \alpha - \zeta} \big)
    =
    \lo \Big( \exp(-M^{2 - \alpha  - \zeta}) \Big)
    =
    \lo \Big( \exp(-M^{\gamma}) \Big),
$
where the last step follows from our choice of $\gamma \in (0, 2 - \alpha - \zeta)$.
This concludes the proof.
\end{proof}

Now, we provide the proof of  Lemma \ref{lemma, hub: stochastic lower bound}.
For any $p \in (0,1)$, we say that $X$ is Geom$(p)$ if
\begin{align*}
    \P(X > k) = (1 - p)^k\qquad\forall k = 1,2,\ldots.
\end{align*}


\begin{lemma*}[Lemma~\ref{lemma, hub: stochastic lower bound}]
Let Assumptions~\ref{assumption: heavy-tailed graph} and \ref{assumption: lower bound for h}  hold.
Let $\zeta \in (0,1 - \frac{1}{\alpha})$.
There exists $\gamma > 0$ such that 
\begin{align}
    \P\big(h_{\hat i} < M^{ \frac{1}{\alpha} - \frac{\zeta}{2} }\big) = 
\lo \big( \exp(-M^\gamma ) \big).
\label{claim 1, lemma, hub: stochastic lower bound}
\end{align}
Furthermore, 
there exists $M_0 > 0$ such that
\begin{align}
    \P\bigg( \sup_{t \leq T}t - \tau(t) > \log T\ \bigg|\ h_{\hat i} \geq M^{ \frac{1}{\alpha} - \frac{\zeta}{2} }\bigg) 
    \leq 
    \frac{1}{MT}
    \qquad
    \forall M \geq M_0,\ T \geq 1,
    \label{claim 2, lemma, hub: stochastic lower bound}
\end{align}
where
$
\tau(t) = \max\{u \leq t:\ |S^u_0| > M^{ \frac{1}{\alpha} - \zeta}\}.
$
\end{lemma*}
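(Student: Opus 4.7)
The first claim should follow as an immediate corollary of Lemma~\ref{lemma: h i for i with the largest degree}. My plan is to pick $\Delta \in (0, \min(\zeta/2, 1/(2\alpha)))$ together with a corresponding $\gamma > 0$ satisfying \eqref{proof: choice of gamma, lemma: h i for i with the largest degree}; since $\Delta \leq \zeta/2$, we have $B(M, \Delta) \subseteq A_{\alpha, \zeta}$, so the existing exponential bound on $\P(B(M, \Delta)^\complement)$ transfers directly.

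For the second claim, the plan is to condition on the full weight vector $(h_i)_{i \in [M]}$---which already determines $\hat i$ and the event $A_{\alpha, \zeta}$---and exploit temporal independence. Given the weights, the adjacency indicators $(X^t_{i,j})_{i,j,t}$ are independent Bernoullis that are iid across $t$; consequently $(|S^t_0|)_{t \geq 1}$ is a conditionally iid sequence. Setting $p \delequal \P\big(|S^1_0| > M^{1/\alpha - \zeta} \,\big|\, (h_i)_{i}\big)$, one sees that $t - \tau(t)$ is stochastically dominated by a Geom$(p) - 1$ random variable, viewed as the backward waiting time until the most recent large-hub index.

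The key quantitative step, and the main work I anticipate, is obtaining a sharp lower bound on $p$ on $A_{\alpha, \zeta}$. Using Assumption~\ref{assumption: lower bound for h} (so $h_j \geq c_h$) together with $h_{\hat i} \geq M^{1/\alpha - \zeta/2}$, one has $P(h_{\hat i}, h_j) \geq c_h h_{\hat i}/(\theta M)$ for every $j \neq \hat i$ and all $M$ large enough, which yields a conditional expected hub size of order at least $M^{1/\alpha - \zeta/2}$. Since this dominates the threshold $M^{1/\alpha - \zeta}$ by a polynomial factor in $M$, a standard Chernoff bound should give $1 - p \leq \exp\big(-c M^{1/\alpha - \zeta/2}\big)$ for some constant $c > 0$ independent of $T$ and $M$.

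Finally, a union bound over $t \leq T$ yields
\begin{align*}
    \P\Big(\sup_{t \leq T}\big(t - \tau(t)\big) > \log T \,\Big|\, (h_i)_i\Big) \leq T (1 - p)^{\floor{\log T}} \leq T \exp\big(-c M^{1/\alpha - \zeta/2} \log T\big),
\end{align*}
which will be bounded by $1/(MT)$ for all $M \geq M_0$ large enough, uniformly in $T \geq 1$. The hardest bookkeeping point is verifying that the polynomial-in-$M$ Chernoff exponent $M^{1/\alpha - \zeta/2}$ (afforded by the cushion $\zeta/2$ between the thresholds defining $A_{\alpha, \zeta}$ and $\tau(t)$) dominates $\log(MT^2)$ uniformly in $T$---this is precisely why the gap $\zeta/2$ in the definition of $A_{\alpha, \zeta}$ is essential rather than cosmetic. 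Averaging out the conditioning on $A_{\alpha, \zeta}$ then completes the proof.
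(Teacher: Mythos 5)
Your proposal matches the paper's proof in all essentials: claim \eqref{claim 1, lemma, hub: stochastic lower bound} is read off from Lemma~\ref{lemma: h i for i with the largest degree}, and claim \eqref{claim 2, lemma, hub: stochastic lower bound} is obtained exactly as in the paper by exploiting conditional independence of $(|S^t_0|)_{t}$ across time, dominating the backward waiting time $t-\tau(t)$ by a geometric random variable whose failure probability is controlled by a Chernoff-type lower bound on the hub degree given $h_{\hat i}\geq M^{1/\alpha-\zeta/2}$ (the paper packages this as claim \eqref{claim 2, lemma: unlikely for small h to have high degree} of Lemma~\ref{lemma: unlikely for small h to have high degree}), followed by a union bound over $t\leq T$. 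Two small repairs when writing it out: $\hat i$ is determined by $G_1$ rather than by the weights alone, so condition on $(h_i)_i$ together with $G_1$ (or just on $h_{\hat i}$) and use that $G_2,G_3,\ldots$ are then iid, the $t=1$ term only helping since $|S^1_0|$ is the maximum degree; and near $T=1$ you need the exponent $1+\floor{\log T}$ coming from $\{\mathrm{Geom}(p)-1>\log T\}=\{\mathrm{Geom}(p)>1+\log T\}$ rather than $\floor{\log T}$, since otherwise the chain $T(1-p)^{\floor{\log T}}\leq 1/(MT)$ fails for $T<e$.
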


\begin{proof}[Proof of Lemma~\ref{lemma, hub: stochastic lower bound}]
Claim~\eqref{claim 1, lemma, hub: stochastic lower bound} is exactly the content of Lemma~\ref{lemma: h i for i with the largest degree}.
The rest of the proof is devoted to establishing the claim~\eqref{claim 2, lemma, hub: stochastic lower bound}.
To proceed, let $T_0 = 0$, and 
\begin{align*}
    T_k \delequal \min\big\{
        t > T_{k-1}:\ |S^t_0|  > M^{ \frac{1}{\alpha} - \zeta}
    \big\}
\end{align*}
be the $k^\text{th}$ time the hub around $\hat i$ is large (w.r.t.\ threshold $M^{ \frac{1}{\alpha} - \epsilon   }$).
Let $V_k \delequal T_k - T_{k - 1}$ be the $k^\text{th}$ time gap between two large hubs.
Note that 
\begin{align}
    \sup_{t \leq T}t - \tau(t)
    \leq 
    \max\bigg\{
        V_k - 1:\ 
        k \geq 1,\ 
        \sum_{i = 1}^{k - 1} V_i \leq T
    \bigg\}
    \leq 
    \max_{k \leq T}V_k - 1.
    \label{proof, upper bound for the target term, lemma, hub: stochastic lower bound}
\end{align}
To proceed, note that 
$
\sup_{ t \geq 1  }
\P\big( |S^t_0| \leq M^{ \frac{1}{\alpha} - \zeta  } \ \big|\ h_{\hat i} \geq M^{ \frac{1}{\alpha} - \frac{\zeta}{2}  }  \big)
=
\P\big( |S^1_0| \leq M^{ \frac{1}{\alpha} - \zeta   }\ \big|\ h_{\hat i} \geq M^{ \frac{1}{\alpha} - \frac{\zeta}{2}  }   \big),
$
since the sequence $S^t_0$'s are independent across $t$ when conditioned on the value of $h_{\hat i}$.
Now, we fix some $\tilde \gamma \in (0, \frac{1}{\alpha} - \zeta)$.
By Lemma~\ref{lemma: unlikely for small h to have high degree},
there exists $M_0$ such that
\begin{align*}
    \sup_{ t \geq 1  }
\P\Big( |S^t_0| \leq M^{ \frac{1}{\alpha} - \zeta   } \ \Big|\ h_{\hat i} \geq M^{ \frac{1}{\alpha} - \frac{\zeta}{2}  }  \Big)
    \leq 
    \exp(
        -M^{ \tilde \gamma  }
    )
    \qquad
    \forall M \geq M_0.
\end{align*}
As a result,
there exists a coupling between $(V_i)_{i \leq T}$ and $(\tilde V_i)_{i \leq T}$, which are iid copies of Geom$\big( 1 - \exp(-M^{ \tilde \gamma  })  \big)$, such that 
\begin{align*}
    (V_1,\ldots,V_T)|\{ h_{\hat i} \geq M^{ \frac{1}{\alpha} }   \} \stleq 
    (\tilde V_1,\ldots, \tilde V_T),
\end{align*}
Together with the upper bound in \eqref{proof, upper bound for the target term, lemma, hub: stochastic lower bound}, we yield (for any $M \geq M_0$)
\begin{align*}
    \P\bigg( \sup_{t \leq T}t - \tau(t) > \log T\ \bigg|\ h_{\hat i} \geq M^{ \frac{1}{\alpha} - \frac{\zeta}{2} }\bigg) 
    & \leq 
    T \cdot \P\big(\tilde V_1 - 1 > \log T \big)
    =
    T \cdot \big( \exp(
        -M^{ \tilde \gamma  }
    )  \big)^{1 + \log T}.
\end{align*}
Note that $\exp(
        -M^{ \tilde \gamma  } = \lo(M^{-2})$.
By picking a larger $M_0$ if necessary, we can ensure that 
$
\exp(
        -M^{ \tilde \gamma  }  \leq M^{-2}
$
for each $M \geq M_0$, and hence
\begin{align*}
     \P\bigg( \sup_{t \leq T}t - \tau(t) > \log T\ \bigg|\ h_{\hat i} \geq M^{ \frac{1}{\alpha} - \frac{\zeta}{2} }\bigg) 
     & \leq 
     T / M^{ 2 \cdot (1 + \log T) }
     \leq 
     \frac{1}{M} \cdot \frac{T}{ M^{ 2\log T  }  }.
\end{align*}
Lastly, by picking an even larger $M_0$ if needed, we ensure that $M_0 \geq e$,
so for each $M \geq M_0$ and $T \geq 1$, we have 
\begin{align*}
    \P\bigg( \sup_{t \leq T}t - \tau(t) > \log T\ \bigg|\ h_{\hat i} \geq M^{ \frac{1}{\alpha} - \frac{\zeta}{2} }\bigg) 
    \leq 
    \frac{1}{M} \cdot \frac{T}{ e^{ 2\log T } } = \frac{1}{M} \cdot \frac{T}{T^2} = \frac{1}{MT},
\end{align*}
which concludes the proof.
\end{proof}

\subsection{Proof of Lemma~\ref{lemma, information delay over sparse graphs}}

Central to our proof is the following stochastic dominance argument regarding the graphs $(G_t)_{ t \geq 1 }$.
Specifically, 
given some non-empty subset of clients $S \subseteq [M]$,
we
recall the definitions of $\bar S^0 = S$ and 
\begin{align}
    \bar S^t \delequal 
\{
i \in [M]:\ 
i \in \bar S^{t - 1};\text{ or }\exists j \in \bar S^{t - 1}\text{ s.t. }(i,j) \in E_t
\},
\nonumber
\end{align}
which represents the collection of clients that have received the message at time $t$, which was sent from $S$ at time $1$ and is passed to neighbors over graph $G_u$ at each time $u \leq t$.
Let $K \delequal |S|$ be the count of nodes in $S$ and, without loss of generality, write $S = \{i_1,i_2,\ldots,i_K\}$.
Let
\begin{align*}
    \bar V^1_{i_1} \delequal 
    \{
    i \in [M]:\ i \notin S,\ (i,i_1) \in E_1
    \}.
\end{align*}
be the set of nodes that are outside of $S$ but can be reached from $i_1$ within one step (i.e., they are neighbors of $i_1$ over the graph $G_1$ at time $t = 1$).
Analogously, let
\begin{align*}
    \bar V^1_{i_k} \delequal \{ i \in [M]:\ i \notin S,\ (i,i_k) \in E_1  \} \setminus
    \bigg( \bigcup_{ j \in [k-1]}\bar V^1_{i_j} \bigg)
    \qquad \forall k = 2,3,\ldots,K
\end{align*}
be the set of nodes that are outsides of $S$ and can be reached by $i_k$ (but not any $i_j$ with $j \in [k-1]$) within one step.
By definition, we have 
\begin{align*}
    \bar V^1_{i_j} \cap \bar V^1_{i_k} = \emptyset\ \forall j \neq k,
    \qquad
    \bar S^1 = \bigcup_{ k \in [K] }\Big( \{i_k\} \cup \bar V^1_{i_k}  \Big).
\end{align*}
We first consider the case where $K < M/2$, 
and we are able to uniformly randomly pick $\ceil{M/2}$ nodes that are outside of $S$.
By only checking whether these nodes are connected to $i_1$, 
and due to the lower bound $c_h > 0$ in Assumption~\ref{assumption: lower bound for h},
 as well as the definition of the kernel $P(h,h^\prime)$ in \eqref{def: kernel for the random graph},
we have 
\begin{align}
    Z_M(c_h) \stleq |\bar V^1_{i_1}|
    \qquad
    \text{where }Z_M(c_h) \distequal \text{Binomial}\Big( \ceil{M/2}, \frac{c_h^2}{\theta M}   \Big).
    \label{def: Z_M(c), stochastic dominance}
\end{align}
Again,
for any random variables $X$ and $Y$,
we use $X\stleq Y$ to denote stochastic dominance between $X$ and $Y$,
in the sense that
$
\P(X\geq x) \leq \P(Y \geq x)
$
holds
for any $x \in \R$.
Furthermore, consider the following inductive procedure for each $k = 2,3,\ldots,K$:
On the event 
\begin{align}
    \Bigg\{|S| + \sum_{ j \in [k-1] }| \bar V^1_{i_j}  | < M/2 \Bigg\},
    \label{def: event, connected nodes less than M/2}
\end{align}
we are, again, able to uniformly randomly pick $\ceil{M/2}$ nodes that are still outside of $S$ and $\bigcup_{j \in [k-1]}\bar V^1_{i_j}$.
Let $Z^{(k)}_M(c_h)$ be iid copies of $Z_M(c_h)$.
By repeating the arguments above, on the event defined in \eqref{def: event, connected nodes less than M/2},
we have
\begin{align}
    Z^{(k)}_M(c_h) \stleq |\bar V^1_{i_k}|\qquad\forall k = 1,2,\ldots,K.
    \label{proof: stochastic dominance, step k}
\end{align}
Define a branching process (i.e., Galton-Watson process) $\big( X_t^S \big)_{t \geq 0}$ 
by
\begin{align}
    X^S_0 = |S|,
    \qquad
    X^S_t = \sum_{ i = 1 }^{ X^S_{t-1} }\Big( 1 + Z^{(t,i)}_M(c_h)\Big)\ \ \forall t \geq 1,
    \label{def: branching process X S t}
\end{align}
where 
$
Z^{(n,i)}_M(c_h)
$
are iid copies of $Z_M(c_h)$.
By the arguments in  \eqref{def: Z_M(c), stochastic dominance}--\eqref{proof: stochastic dominance, step k},
\begin{align}
    \min\bigg\{ X^S_t, \frac{M}{2} \bigg\} \stleq 
    \min\bigg\{ |\bar S^t|, \frac{M}{2} \bigg\}
    \qquad \forall t =0,1,2,\ldots.
    \label{property: stochastic dominance by branching process}
\end{align}

This coupling is crucial to our proof below.
Specifically, we fix a few constants.
Let
\begin{align}
    \rho_h \delequal \frac{c^2_h}{2\theta}. 
    \label{def: constant, rho h}
\end{align}
where $\theta = \E h$, and $c_h > 0$ is the constant lower bound for the law of $h$ stated in Assumption~\ref{assumption: lower bound for h}.
Next, pick $\gamma_h \in (0,1)$ such that
\begin{align}
    (1+\rho_h)\gamma_h \delequal 1 + \frac{\rho_h}{2}.
    \label{def: constant, gamma h}
\end{align}
Note that $\rho_h$ and $\gamma_h$ only depend on $\theta = \E h$ and the constant $c_h$ in Assumption~\ref{assumption: lower bound for h}.
That is, these constants only depend on the law of $h$, and do not vary with any other parameters.
Let
\begin{align}
    T_M \delequal \min\{ t \geq 1:\ |\bar S^t| \geq M/2 \}
    \label{def: stopping time T M, half coverage}
\end{align}
be the first time that at least half of the nodes have received the message sent out from $S$.

\begin{lemma}\label{lemma: geometric expansion for communication}
Under Assumption~\ref{assumption: lower bound for h},
it holds for any non-empty $S \subseteq \{1,2,\ldots,M\}$ that
\begin{align*}
    \P\big( T_M > \ceil{\tilde t_M}\big) \leq 
    \frac{2}{2 + \rho_h},
\end{align*}
where
\begin{align}
    \tilde t_M
    \delequal 
    \frac{
        \log M - \log\big(2(1-\sqrt{\gamma_h})\big)
    }{
        \log\big( 1 + \ceil{\frac{M}{2}} \cdot \frac{c^2_h}{\theta M} \big)
    }.
    \label{def: tilde t M, lemma: geometric expansion for communication}
\end{align}

\end{lemma}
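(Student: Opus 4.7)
The plan is to leverage the stochastic-domination inequality \eqref{property: stochastic dominance by branching process}, namely $\min\{X_t^S, M/2\} \stleq \min\{|\bar S^t|, M/2\}$, which reduces bounding $\P(T_M > \lceil \tilde t_M \rceil)$ to controlling the left tail of the Galton--Watson process $X_t^S$ defined in \eqref{def: branching process X S t}. Indeed, since $|\bar S^t|$ is non-decreasing in $t$, we have $\{T_M > t\} = \{|\bar S^t| < M/2\}$, and the capped dominance (applied with threshold $M/2$) immediately yields $\P(|\bar S^t| < M/2) \leq \P(X_t^S < M/2)$.

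Next, I would compute the first two moments of $X_t^S$ using the standard Galton--Watson formulas applied to the offspring variable $Y = 1 + Z_M(c_h)$. Writing $m = \E Y = 1 + \lceil M/2 \rceil \cdot c_h^2/(\theta M)$ and $\sigma^2 = \mathrm{Var}(Y) = \mathrm{Var}(Z_M(c_h))$, the binomial structure of $Z_M(c_h)$ gives $\sigma^2 \leq \E Z_M(c_h) = m - 1$. Combining $\E X_t^S = |S| m^t$ with $\mathrm{Var}(X_t^S) = |S| \sigma^2 m^{t-1}(m^t - 1)/(m - 1)$, a short calculation shows $\mathrm{Var}(X_t^S)/(\E X_t^S)^2 \leq 1/(|S| m) \leq 1/m$. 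Meanwhile, $\tilde t_M$ was defined in \eqref{def: tilde t M, lemma: geometric expansion for communication} precisely so that $m^{\tilde t_M} = M/\bigl(2(1-\sqrt{\gamma_h})\bigr)$; since $m > 1$ and $|S| \geq 1$, taking $t = \lceil \tilde t_M \rceil$ gives $\E X_t^S \geq M/\bigl(2(1-\sqrt{\gamma_h})\bigr)$.

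Finally, I would finish with Chebyshev's inequality. The mean bound translates into $M/2 \leq (1-\sqrt{\gamma_h})\,\E X_t^S$, so $\{X_t^S < M/2\} \subseteq \{|X_t^S - \E X_t^S| > \sqrt{\gamma_h}\,\E X_t^S\}$, and Chebyshev delivers
$$\P(X_t^S < M/2) \;\leq\; \frac{\mathrm{Var}(X_t^S)}{\gamma_h\,(\E X_t^S)^2} \;\leq\; \frac{1}{\gamma_h\, m}.$$
The algebraic identity $\gamma_h(1+\rho_h) = 1 + \rho_h/2 = (2+\rho_h)/2$ from \eqref{def: constant, gamma h}, combined with the lower bound $m \geq 1 + \rho_h$ (which follows from $\lceil M/2 \rceil/M \geq 1/2$ and the definition of $\rho_h$ in \eqref{def: constant, rho h}), forces $\gamma_h m \geq (2+\rho_h)/2$, and hence $1/(\gamma_h m) \leq 2/(2+\rho_h)$, which is exactly the claim.

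The main obstacle is just ensuring the three constants $\rho_h$, $\gamma_h$, and the slack parameter $\sqrt{\gamma_h}$ line up cleanly. The choice of $\gamma_h$ in \eqref{def: constant, gamma h} is calibrated so that the $(1-\sqrt{\gamma_h})$ cushion used to push $\E X_t^S$ past $M/2$ and the factor $\gamma_h$ lost in Chebyshev combine to yield exactly $2/(2+\rho_h)$; a slightly different choice of $\gamma_h$ would unbalance this optimization. A secondary subtlety is the $\bo(1/M)$ gap between $m$ and $1+\rho_h$ arising from the ceiling, but this discrepancy only ever strengthens the bound, so no additional work is needed there.
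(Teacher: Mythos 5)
Your proposal is correct and follows essentially the same route as the paper: reduce to the Galton--Watson process via the capped stochastic dominance \eqref{property: stochastic dominance by branching process}, compute the first two moments, and apply Chebyshev with the cushion $(1-\sqrt{\gamma_h})$ calibrated by \eqref{def: constant, gamma h} so that the bound collapses to $1/(\gamma_h m) \leq 2/(2+\rho_h)$. The only cosmetic differences are that you keep the initial population $|S|$ and use the exact branching-process variance formula, whereas the paper passes to the process started from a single individual and cites Harris's variance bound; both yield the same final estimate.
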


\begin{proof}
For the branching process defined in \eqref{def: branching process X S t},
we use $(X_n)_{n \geq 0}$ to denote the process under the initial value $X_0 = 1$.
For any $t \geq 1$ such that $\E X_t > \frac{M}{2}$, observe that
\begin{align}
    \P(T_M > t)
    & = \P( |\bar S^t| < M/2)
    \nonumber
    \\ 
    & \leq \P( X^S_t < M/2)
    \qquad
    \text{by \eqref{property: stochastic dominance by branching process}}
    \nonumber
    \\ 
    & \leq \P(X_t < M/2)
    \leq 
    \P\bigg(
        | X_t - \E X_t | \geq \Big| \E X_t - \frac{M}{2} \Big|
    \bigg)
    \nonumber
    \\
    & \leq 
    \frac{
        \text{var}(X_t)
    }{
        \big|\E X_t -\frac{M}{2} \big|^2
    }
    \qquad
    \text{by Chebyshev's inequality}.
    \label{proof: ineq, lemma: geometric expansion for communication}
\end{align}
Let
\begin{align}
    x \delequal \ceil{\frac{M}{2}} \cdot \frac{c^2_h}{\theta M},
    \qquad
    m \delequal 1 + x,
    \qquad
    \sigma \delequal \sqrt{x \cdot \bigg(1 - \frac{c^2_h}{\theta M}\bigg)} \leq \sqrt{x}.
    \label{proof: notations, lemma: geometric expansion for communication}
\end{align}
By Theorem 5.1 of \cite{harris1963theory},
\begin{align*}
    \E X_t = m^t,
    \qquad
    \text{var}(X_t) \leq \frac{ \sigma^2 m^{2t} }{ m^2 - m }
    \qquad
    \forall t \geq 1.
\end{align*}
In particular, at $t = \ceil{\tilde t_M}$ (see \eqref{def: tilde t M, lemma: geometric expansion for communication}),
we have
\begin{align*}
    \log \E X_{ \ceil{\tilde t_M} } & \geq 
    \frac{
        \log M - \log\big(2(1-\sqrt{\gamma_h})\big)
    }{
        \log( 1 + x)
    }
    \cdot \log( 1 + x)
    = 
    \log\bigg(
        \frac{M}{ 2(1-\sqrt{\gamma_h}) }
    \bigg)
    \\ 
    \Longrightarrow
    \E X_{ \ceil{\tilde t_M} } & \geq 
    \frac{M}{2} \cdot \frac{1}{ 1 - \sqrt{\gamma_h} }
    \\ 
    \Longrightarrow
     \E X_{ \ceil{\tilde t_M} } - \frac{M}{2}
     & \geq 
     m^{ \ceil{\tilde t_M} }\cdot \big[ 1 - (1 - \sqrt{\gamma_h})\big] = m^{ \ceil{\tilde t_M} }\cdot\sqrt{\gamma_h}
     \\ 
     \Longrightarrow
     \bigg| \E X_{ \ceil{\tilde t_M} } - \frac{M}{2} \bigg|^2
     & \geq 
     \gamma_h \cdot  m^{ 2\ceil{\tilde t_M} }.
\end{align*}
Plugging these bounds back into \eqref{proof: ineq, lemma: geometric expansion for communication},
we yield (under $t = \ceil{\tilde t_M}$)
\begin{align*}
    \P(T_M > \ceil{\tilde t_M})
    & \leq 
    \frac{ \sigma^2 m^{2\ceil{\tilde t_M}} }{ m^2 - m }
    \cdot 
    \frac{1}{
        \gamma_h \cdot  m^{ 2\ceil{\tilde t_M} }
    }
    =
    \frac{\sigma^2}{ m(m - 1) } \cdot \frac{1}{\gamma_h}
    \\ 
    & \leq 
    \frac{x}{x(1+x)}\cdot \frac{1}{\gamma_h}
    =
    \frac{1}{1 + x}\cdot \frac{1}{\gamma_h}
    \\ 
    &
    \leq 
    \frac{1}{1 + \rho_h}\cdot \frac{1}{\gamma_h}
    \qquad
    \text{by definition of $x$ in \eqref{proof: notations, lemma: geometric expansion for communication}}
    \\
    & = \frac{2}{2 + \rho_h}
    \qquad\text{by the definition of $\gamma_h$ in \eqref{def: constant, gamma h}}
\end{align*}
and conclude the proof.
\end{proof}

Next, we recall a  bound for the tail cdf of geometric random variables.
Straightforward as it is, the bound is useful for our subsequent analysis.
For any $p \in (0,1)$, we say that $X$ is Geom$(p)$ if
\begin{align*}
    \P(X > k) = (1 - p)^k\qquad\forall k = 1,2,\ldots.
\end{align*}

\begin{lemma}[Lemma G.3 of \cite{wang2022eliminating}]
\label{lemma: tail bounds for geometric RVs}
    Let $a:(0,\infty) \to (0,\infty)$, $b:(0,\infty) \to (0,\infty)$ 
be two functions
such that 
$\lim_{\epsilon \downarrow 0} a(\epsilon) = 0, \lim_{\epsilon \downarrow 0} b(\epsilon) = 0$.
Let $\{U(\epsilon): \epsilon > 0\}$ be a family of geometric RVs with success rate $a(\epsilon)$,
i.e.
$\P(U(\epsilon) > k) = (1 - a(\epsilon))^{k}$ for $k \geq 1$.
For any $c > 1$, there exists $\epsilon_0 > 0$ such that
$$\exp\Big(-\frac{c\cdot a(\epsilon)}{b(\epsilon)}\Big) \leq \P\Big( U(\epsilon) > \frac{1}{b(\epsilon)} \Big) \leq \exp\Big(-\frac{a(\epsilon)}{c\cdot b(\epsilon)}\Big)
\ \ \ \forall \epsilon \in (0,\epsilon_0)
.$$
\end{lemma}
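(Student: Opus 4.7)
The plan is to reduce the statement to direct estimates on $\log (1-a(\epsilon))$ and on the difference between $1/b(\epsilon)$ and its integer part, both of which are controllable because $a(\epsilon), b(\epsilon) \to 0$ as $\epsilon \downarrow 0$. Specifically, since $U(\epsilon)$ is integer-valued with $\P(U(\epsilon)>k) = (1-a(\epsilon))^k$ for every positive integer $k$, we have the exact identity
\begin{equation*}
\P\Big( U(\epsilon) > \tfrac{1}{b(\epsilon)} \Big) = \big(1 - a(\epsilon)\big)^{\lfloor 1/b(\epsilon) \rfloor},
\end{equation*}
so taking logarithms reduces everything to showing that the product $\lfloor 1/b(\epsilon) \rfloor \cdot \big(-\log(1-a(\epsilon))\big)$ is sandwiched between $a(\epsilon)/(c \cdot b(\epsilon))$ and $c \cdot a(\epsilon)/b(\epsilon)$ for all sufficiently small $\epsilon$.

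First I would fix constants $c_1, c_2 \in (1, c)$ with $c_1 c_2 \le c$ (e.g.\ $c_1 = c_2 = \sqrt{c}$, which works since $c > 1$). The key elementary bounds are as follows. By Taylor expansion of $-\log(1-x)$ around $x=0$, for any $c_1>1$ there exists $x_0 > 0$ such that $x \le -\log(1-x) \le c_1 x$ for every $x \in (0,x_0)$; applying this to $x = a(\epsilon)$ gives such an estimate for all $\epsilon$ small enough, since $a(\epsilon) \to 0$. Similarly, from $1/b(\epsilon) - 1 \le \lfloor 1/b(\epsilon) \rfloor \le 1/b(\epsilon)$ and the fact that $b(\epsilon) \to 0$, for any $c_2 > 1$ there exists $\epsilon_1 > 0$ such that $\lfloor 1/b(\epsilon) \rfloor \ge 1/(c_2 b(\epsilon))$ for $\epsilon \in (0, \epsilon_1)$ (this just amounts to requiring $b(\epsilon) \le 1 - 1/c_2$).

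Combining the two two-sided estimates, for all $\epsilon$ small enough we obtain the chain of inequalities
\begin{equation*}
\frac{a(\epsilon)}{c_2 \, b(\epsilon)} \;\le\; \lfloor 1/b(\epsilon) \rfloor \cdot \big(-\log(1-a(\epsilon))\big) \;\le\; \frac{c_1 \, a(\epsilon)}{b(\epsilon)}.
\end{equation*}
Exponentiating (and using $-\log$) yields $\exp(-c_1 a(\epsilon)/b(\epsilon)) \le \P(U(\epsilon) > 1/b(\epsilon)) \le \exp(-a(\epsilon)/(c_2 b(\epsilon)))$. Since $c_1 \le c$ and $c_2 \le c$, these are tighter than the claimed bounds, and so the desired inequality follows for all $\epsilon \in (0,\epsilon_0)$ with $\epsilon_0$ chosen as the minimum of the thresholds at which the two elementary estimates kick in.

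I do not anticipate a genuine obstacle: the whole argument is essentially a careful bookkeeping of first-order asymptotics, and the slackness factor $c > 1$ on both sides absorbs simultaneously the second-order error in $-\log(1-a(\epsilon)) = a(\epsilon) + O(a(\epsilon)^2)$ and the rounding error between $\lfloor 1/b(\epsilon) \rfloor$ and $1/b(\epsilon)$. The only minor subtlety is to split the available slackness $c$ into two factors $c_1, c_2 > 1$ with $c_1 c_2 \le c$; this is always possible because $c > 1$ strictly.
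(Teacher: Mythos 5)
The paper does not actually prove this lemma---it is imported verbatim as Lemma G.3 of \cite{wang2022eliminating}---so there is no in-paper argument to compare against; your proof is a correct, self-contained derivation. The key steps all check out: the identity $\P(U(\epsilon)>1/b(\epsilon)) = (1-a(\epsilon))^{\lfloor 1/b(\epsilon)\rfloor}$ is valid for integer-valued $U(\epsilon)$ once $b(\epsilon)$ is small enough that $\lfloor 1/b(\epsilon)\rfloor \ge 1$; the elementary bound $x \le -\log(1-x)\le x/(1-x)$ on $(0,1)$ gives the two-sided control of the logarithm; and $\lfloor 1/b(\epsilon)\rfloor \ge 1/b(\epsilon) - 1 \ge 1/(c_2\, b(\epsilon))$ once $b(\epsilon) \le 1-1/c_2$ handles the rounding. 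One small remark: the product condition $c_1 c_2 \le c$ is not needed. The logarithm error (via $-\log(1-a)\le c_1 a$ together with $\lfloor 1/b\rfloor \le 1/b$) only affects the upper bound on the exponent, i.e.\ the \emph{lower} bound on the probability, while the rounding error (via $\lfloor 1/b\rfloor \ge 1/(c_2 b)$ together with $-\log(1-a)\ge a$) only affects the lower bound on the exponent, i.e.\ the \emph{upper} bound on the probability; the two slacknesses therefore act on opposite sides of the sandwich and it suffices to require $c_1 \le c$ and $c_2 \le c$ separately. This is harmless, and the conclusion holds for all $\epsilon$ below the minimum of the two thresholds exactly as you state.
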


As an immediate consequence of Lemma~\ref{lemma: geometric expansion for communication},
the next result provides upper bounds for the information delay between a given set $S$ and any node $j$.
Note that $t_M = \bo(\log M)$.

\begin{lemma}\label{lemma: information delay}
Under Assumption~\ref{assumption: lower bound for h},
\begin{align}
    \inf_{ j \in [M] }
    \inf_{ S \subseteq [M]:\ S \neq \emptyset }
    \P\big(
       j \in  \bar S^{ t_M}
    \big) 
    \geq 
    \frac{\rho_h}{2 + \rho_h} \cdot \Big(1 - \exp(-\rho_h)\Big)
    -
    \lo(1)
    \qquad
    \text{as }M \to \infty,
    \nonumber
\end{align}
where 
\begin{align}
    t_M
    \delequal 
    \ceil{
    2\cdot 
    \frac{
        \log M - \log\big(2(1-\sqrt{\gamma_h})\big)
    }{
        \log( 1 + \rho_h )
    }
    }.
    \nonumber
\end{align}
\end{lemma}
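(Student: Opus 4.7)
The plan is to combine the geometric-expansion bound in the preceding Lemma (on $T_M$) with a single additional step that connects any target node $j$ to the already-reached majority. First, I would observe that both $t_M$ and $\lceil \tilde t_M \rceil$ grow logarithmically in $M$, with $t_M / \tilde t_M \to 2$ because $\log\bigl(1 + \lceil M/2 \rceil c_h^2 / (\theta M)\bigr) \to \log(1 + \rho_h)$ as $M \to \infty$. Consequently, $t_M \geq \lceil \tilde t_M \rceil + 1$ for all sufficiently large $M$, and since $\bar S^t$ is non-decreasing in $t$, it suffices to lower-bound $\P(j \in \bar S^{\lceil \tilde t_M \rceil + 1})$ uniformly over $j \in [M]$ and non-empty $S \subseteq [M]$.

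Next, I would let $\mathcal{F}_t \delequal \sigma(G_1,\ldots,G_t,\, h_1,\ldots,h_M)$ and invoke the preceding Lemma---whose conclusion is uniform in the initial set $S$, since neither its statement nor its proof depends on the choice of $S$---to obtain $\P(A) \geq \rho_h/(2+\rho_h)$ for the event $A \delequal \{|\bar S^{\lceil \tilde t_M \rceil}| \geq M/2\}$. Conditional on $\mathcal{F}_{\lceil \tilde t_M \rceil}$, the edges of $G_{\lceil \tilde t_M \rceil + 1}$ remain independent Bernoulli draws with success probabilities $P(h_i, h_j)$, and by Assumption~\ref{assumption: lower bound for h} together with the definition \eqref{def: kernel for the random graph} of $P$, these probabilities are lower-bounded by $c_h^2/(\theta M)$ once $M$ is large enough that $c_h^2/(\theta M) \leq 1$. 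Hence, on $A$, either $j \in \bar S^{\lceil \tilde t_M \rceil}$ already, or
\begin{align*}
\P\bigl(j \notin \bar S^{\lceil \tilde t_M \rceil + 1} \,\big|\, \mathcal{F}_{\lceil \tilde t_M \rceil}\bigr) \leq \prod_{i \in \bar S^{\lceil \tilde t_M \rceil}} \bigl(1 - P(h_i,h_j)\bigr) \leq \bigl(1 - c_h^2/(\theta M)\bigr)^{\lceil M/2 \rceil}.
\end{align*}

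Taking expectation and using monotonicity of $\bar S^t$ then yields
\begin{align*}
\P(j \in \bar S^{t_M}) \geq \frac{\rho_h}{2+\rho_h} \cdot \Bigl(1 - (1 - c_h^2/(\theta M))^{\lceil M/2 \rceil}\Bigr),
\end{align*}
and since $(1 - c_h^2/(\theta M))^{\lceil M/2 \rceil} \to \exp(-c_h^2/(2\theta)) = \exp(-\rho_h)$ as $M \to \infty$, the claimed bound follows with the $\lo(1)$ absorbing this convergence. All constants in sight depend only on the law of $h$, so the bound is uniform in $j$ and $S$.

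I expect the main obstacle to be bookkeeping rather than any new idea: (i) correctly invoking the conditional independence of $G_{\lceil \tilde t_M \rceil + 1}$ from the earlier graphs given the weight vector $(h_i)_{i \in [M]}$, so that the product form of the edge distribution is available pointwise on $A$; and (ii) confirming that the preceding Lemma's probability bound is valid uniformly over the initial set $S$. Both are routine once $\mathcal{F}_t$ and the conditional product structure of $G_t$ given the $h_i$'s are written down explicitly, and the rest of the argument reduces to the quantitative inputs already in hand.
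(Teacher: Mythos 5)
Your proposal is correct and follows essentially the same route as the paper: apply the geometric-expansion lemma to get $|\bar S^{\lceil \tilde t_M\rceil}| \geq M/2$ with probability at least $\rho_h/(2+\rho_h)$, then use one further step of the graph, with each edge probability bounded below by $c_h^2/(\theta M)$ via Assumption~\ref{assumption: lower bound for h}, to connect any fixed $j$ to the reached majority. The only cosmetic difference is that you evaluate $\bigl(1 - c_h^2/(\theta M)\bigr)^{\lceil M/2\rceil} \to \exp(-\rho_h)$ directly, whereas the paper passes through a Binomial stochastic dominance and a tail bound for geometric random variables (Lemma~\ref{lemma: tail bounds for geometric RVs}); both yield the same $1-\exp(-\rho_h)-\lo(1)$ factor.
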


\begin{proof}
Recall the definition of $\tilde t_M$ in \eqref{def: tilde t M, lemma: geometric expansion for communication}.
For any $M$ large enough, we have
$
\ceil{\tilde t_M} \leq t_M - 1.
$
Henceforth in the proof, we only consider such large $M$.
By Lemma~\ref{lemma: geometric expansion for communication},
\begin{align*}
    \P\big( |\bar S^{t_M - 1}| \geq M/2 \big)
    =
    \P\big( T_M \leq t_M - 1 \big)
    = 
    1 - \P\big( T_M > t_M - 1 \big)
    \geq 
    \frac{\rho_h}{2 + \rho_h}.
\end{align*}
Next, on event $\big\{ |\bar S^{t_M - 1}| \geq M/2  \big\}$,
given any $j \in [M]$
we either have $j \in \bar S^{t_M - 1}$ or $j \notin \bar S^{t_M - 1}$.
In the latter case, 
let
\begin{align*}
    V_j \delequal 
    \Big|
    \Big\{ i \in \bar S^{t_M - 1}:\ (i,j) \in E_{t_M}  \Big\}
    \Big|.
\end{align*}
First, if $V_j > 0$, we then have $j \in  S^{\bm h}_{t_M }$.
Furthermore, conditioned on event $\big\{ |\bar S^{t_M - 1}| \geq M/2  \big\}$,
the lower bound $c_h$ in Assumption~\ref{assumption: lower bound for h}
leads to the stochastic dominance relation that
\begin{align*}
    \text{Binomial}\bigg( \ceil{M/2} ,\ \frac{c^2_h}{\theta M}\bigg)
    \stleq 
    V_j\Big|\Big\{ |\bar S^{t_M - 1}| \geq M/2,\ j \notin \bar S^{t_M - 1} \Big\}.
\end{align*}
Therefore,
\begin{align*}
    & \inf_{j \in [M]}\P\big(j \in \bar S^{t_M} \ \big|\ |\bar S^{t_M - 1}| \geq M/2,\ j \notin \bar S^{t_M - 1}\big)
    \\
    & = \inf_{j \in [M]}\P\big(V_j > 0\ \big|\ |\bar S^{t_M - 1}| \geq M/2,\ j \notin \bar S^{t_M - 1}\big)
    \\ 
    & \geq 
    \P\Bigg( \text{Binomial}\bigg( \ceil{M/2} ,\ \frac{c^2_h}{\theta M}\bigg) > 0   \Bigg)
    =
    1 - \P\Bigg(
        \text{Geom}\bigg( \frac{c^2_h}{\theta M} \bigg) > \ceil{{M}/{2}}
    \Bigg)
    \\ 
    & = 
    1 - \exp\bigg( -\frac{c^2_h}{\theta M} \cdot \frac{M}{2}\bigg) - \lo(1)
    \qquad
    \text{(as $M \to \infty$) by Lemma~\ref{lemma: tail bounds for geometric RVs}}
    \\
    & = 1 - \exp(-\rho_h) - \lo(1).
\end{align*}
In summary,
\begin{align*}
    & \inf_{ j \in [M] }
    \P\big(
       j \in  \bar S^{t_M}
    \big) 
    \\
    &
    \geq 
    \inf_{ j \in [M] }
    \P\big(
       j \in  \bar S^{t_M},\ 
       |\bar S^{t_M - 1}| \geq M/2
    \big) 
    \\ 
    & \geq 
    \inf_{ j \in [M] }
    \P\big(
       j \in  \bar S^{t_M}\ \big|\ 
       |\bar S^{t_M - 1}| \geq M/2,\ j \notin  \bar S^{t_M - 1}
    \big) 
    \cdot 
    \P(|\bar S^{t_M - 1}| \geq M/2,\ j \notin  \bar S^{t_M - 1} )
    \\ 
    & + 
    \inf_{ j \in [M] }
    \P\big(
       j \in  \bar S^{t_M}\ \big|\ 
       |\bar S^{t_M - 1}| \geq M/2,\ j \in  \bar S^{t_M - 1}
    \big) 
    \cdot 
    \P(|\bar S^{t_M - 1}| \geq M/2, j \in  \bar S^{t_M - 1} )
    \\
     & = 
     \inf_{ j \in [M] }
    \P\big(
       j \in  \bar S^{t_M}\ \big|\ 
       |\bar S^{t_M - 1}| \geq M/2,\ j \notin  \bar S^{t_M - 1}
    \big) 
    \cdot 
    \P(|\bar S^{t_M - 1}| \geq M/2,\ j \notin  \bar S^{t_M - 1} )
    \\ 
    & \qquad + 
    \inf_{ j \in [M] } 1 \cdot 
    \P(|\bar S^{t_M - 1}| \geq M/2, j \in  \bar S^{t_M - 1} )
    \\ 
    & \geq 
    \inf_{ j \in [M] }
    \P\big(
       j \in  \bar S^{t_M}\ \big|\ 
       |\bar S^{t_M - 1}| \geq M/2,\ j \notin  \bar S^{t_M - 1}
    \big) 
    \\
    &\qquad
    \cdot 
    \Big(
    \P(|\bar S^{t_M - 1}| \geq M/2, j \notin  \bar S^{t_M - 1} )
    +
    \P(|\bar S^{t_M - 1}| \geq M/2, j \in  \bar S^{t_M - 1} )
    \Big)
    \\
    & = 
    \inf_{ j \in [M] }
    \P\big(
       j \in  \bar S^{t_M}\ \big|\ 
       |\bar S^{t_M - 1}| \geq M/2,\ j \notin  \bar S^{t_M - 1}
    \big) 
    \cdot 
    \P(|\bar S^{t_M - 1}| \geq M/2)
    \\ 
    & \geq 
    \Big( 1 - \exp(-\rho_h) - \lo(1) \Big)
    \cdot 
    \frac{\rho_h}{2 + \rho_h}.
\end{align*}
This concludes the proof.
\end{proof}

Now, we are ready to prove Lemma~\ref{lemma, information delay over sparse graphs}.

\begin{lemma*}[Lemma \ref{lemma, information delay over sparse graphs}]
Under Assumption~\ref{assumption: lower bound for h},
there exists some $\kappa \in (0,\infty)$ such that for any $\gamma > 0$, $M \geq 1$, and any non-emtpy $S \subseteq [M]$,
\begin{align}
    \P\big( j \notin \bar{S}^{ \gamma \cdot \kappa \cdot (\log M)^2 }\text{ for some }j \in [M] \big)
    \leq M^{-\gamma}.
    \label{claim, lemma, information delay over sparse graphs}
\end{align}
\end{lemma*}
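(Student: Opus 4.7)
The plan is to amplify the single-step coverage guarantee from Lemma~\ref{lemma: information delay} into a high-probability statement via an iteration argument that exploits the independence of the graphs $(G_t)_{t \geq 1}$ across time. First, Lemma~\ref{lemma: information delay} furnishes constants $p_0 \in (0,1)$ and $M_0 \geq 1$, together with a quantity $t_M = \bo(\log M)$, such that for every $M \geq M_0$,
\[
\inf_{j \in [M]} \inf_{S \subseteq [M],\ S \neq \emptyset} \P\big( j \in \bar S^{t_M} \big) \geq p_0.
\]
Crucially, this lower bound is uniform in the starting set $S$. By the recursive definition of $\bar S^t$, each element of $\bar S^{t-1}$ belongs to $\bar S^{t}$, so $S \subseteq \bar S^t$ for all $t$, and $\bar S^{k t_M}$ is deterministically non-empty for every $k$.

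Next, I would exploit the temporal independence of $(G_t)$. Let $\mathcal F_n$ denote the $\sigma$-algebra generated by $G_1,\ldots,G_n$. Since the graphs are independent across time, given $\mathcal F_{k t_M}$ the conditional law of $(G_{k t_M + 1},\ldots,G_{(k+1)t_M})$ coincides with its unconditional law. Applying Lemma~\ref{lemma: information delay} conditionally on $\mathcal F_{k t_M}$ with the (random but non-empty) starting set $\bar S^{k t_M}$ yields
\[
\P\big( j \in \bar S^{(k+1)t_M} \,\big|\, \mathcal F_{k t_M} \big) \geq p_0 \quad \text{on} \quad \{ j \notin \bar S^{k t_M} \}.
\]
Iterating this inequality gives $\P(j \notin \bar S^{k t_M}) \leq (1-p_0)^k$ for every $k \geq 1$ and every fixed $j \in [M]$.

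Finally, I would choose $k = \lceil (\gamma+1)\log M / (-\log(1-p_0)) \rceil$ so that $(1-p_0)^k \leq M^{-(\gamma+1)}$. Since $t_M = \bo(\log M)$, there is a universal constant $\kappa \in (0,\infty)$ with $k\, t_M \leq \gamma \kappa (\log M)^2$ for all $\gamma > 0$ and $M \geq M_0$. A union bound over $j \in [M]$ then produces
\[
\P\big( j \notin \bar S^{\gamma \kappa (\log M)^2}\text{ for some }j \in [M]\big) \leq M \cdot M^{-(\gamma+1)} = M^{-\gamma}.
\]
The finitely many remaining cases $M < M_0$ are absorbed by enlarging $\kappa$ further.

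The main obstacle I anticipate is the proper handling of the iteration: one must verify that Lemma~\ref{lemma: information delay} can legitimately be invoked in each round despite the random, data-dependent nature of the intermediate starting set $\bar S^{k t_M}$. This is precisely where the uniform-in-$S$ infimum in Lemma~\ref{lemma: information delay}, combined with the monotonicity $S \subseteq \bar S^t$ built into the definition, is essential; without either ingredient the above conditional bound would fail.
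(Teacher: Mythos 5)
Your proposal is correct and follows essentially the same route as the paper: it amplifies the single-round, uniform-in-$S$ coverage bound of Lemma~\ref{lemma: information delay} by iterating over blocks of length $t_M = \bo(\log M)$ using the temporal independence of the graphs (the paper phrases this as a Markov-property product bound, which is exactly your conditional restart from $\bar S^{k t_M}$), followed by the same choice of $k$ and a union bound over $j \in [M]$. The only caveat—shared with the paper's own proof—is that the resulting $\kappa$ scales like $(\gamma+1)/\gamma$ and so is not literally uniform over all $\gamma > 0$ near zero, but this does not affect the regime $\gamma \geq 1$ in which the lemma is actually invoked.
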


\begin{proof}[Proof of Lemma~\ref{lemma, information delay over sparse graphs}]
Note that 
we have a trivial upper bound $1$ in the RHS of \eqref{claim, lemma, information delay over sparse graphs} under $M = 1$,
so the claim holds trivially for $M = 1$.
Henceforth in this proof we only consider $M \geq 2$.
Let
\begin{align}
    q_h & \delequal 1 - \frac{\rho_h}{4 + 2\rho_h}\cdot \big(1 - \exp(\rho_h)\big) \in (0,1),
    \nonumber
    \\
    t_M 
    & \delequal 
    \inf\bigg\{
        t \geq 1:\ 
        \inf_{ j \in [M] }
    \inf_{ S \subseteq [M]:\ S \neq \emptyset }
    \P(
       j \in  \bar S^{t}
    ) 
    \geq 
    1 - q_h
    \bigg\}.
        \label{proof, def t M, lemma, information delay over sparse graphs}
\end{align}
First of all, by Lemma~\ref{lemma: information delay},
for all $M$ large enough we have
\begin{align*}
    t_M \leq 
    \ceil{
    2\cdot 
    \frac{
        \log M - \log\big(2(1-\sqrt{\gamma_h})\big)
    }{
        \log( 1 + \rho_h )
    }
    },
\end{align*}
which confirms that 
$
t_M = \bo( \log M).
$
As a result, there exists $\tilde \kappa \in (0,\infty)$ such that $t_M \leq \tilde \kappa \log M$ for each $M \geq 2$.
Then, observe that by Markov property, it holds for any $k \geq 1$ that 
\begin{align}
     & \P\big( j \notin \bar{S}^{ k \cdot \tilde \kappa \log M }\text{ for some }j \in [M] \big) 
     \nonumber
     \\ 
     & \leq  M \cdot \sup_{j \in [M]} \P\big( j \notin \bar{S}^{ k \cdot  \tilde \kappa \log M } \big)
     \nonumber
     \\ 
     & \leq 
    M \cdot \prod_{ l = 1 }^k
    \sup_{j \in [M]} \P\big( j \notin \bar{S}^{ l \cdot  \tilde \kappa \log M }\ |\ j \notin \bar{S}^{ (l-1) \cdot  \tilde \kappa \log M } \big)
    \nonumber
    \\ 
    & \leq 
    M \cdot \bigg( 1 - \inf_{j \in [M]}\inf_{ S \subseteq [M]:\ S \neq \emptyset } 
    \P\big( j \in \bar{S}^{ \tilde \kappa \log M}\big)
    \bigg)^k
    \nonumber
    \\
     & \leq 
    M \cdot \bigg( 1 - \inf_{j \in [M]}\inf_{ S \subseteq [M]:\ S \neq \emptyset } 
    \P\big( j \in \bar{S}^{ t_M }\big)
    \bigg)^k
    \qquad\text{ due to }t_M \leq \tilde\kappa \log M
    \nonumber
    \\
    & \leq 
    M \cdot q_h^k
    \qquad\text{by the definition in \eqref{proof, def t M, lemma, information delay over sparse graphs}.}
    \label{proof, ineq, lemma, information delay over sparse graphs}
\end{align}
Lastly, given any $\gamma > 0$,
by setting 
\begin{align*}
    k = \frac{\gamma + 1}{\log(1/q_h)} \cdot \log M,
    \qquad
    \kappa =  \frac{\gamma + 1}{\log(1/q_h)} \cdot \tilde \kappa
\end{align*}
in the display \eqref{proof, ineq, lemma, information delay over sparse graphs},
we conclude the proof of claim~\eqref{claim, lemma, information delay over sparse graphs}.
\end{proof}

\subsection{Reward Information Delay}

We would like to add that the delay on spare graphs also leads to information asynchronization among clients, referred to as information delay. Such delays necessitate careful analysis in order to design an effective algorithm and address the challenges due to heavy-tailed observations and delayed information over sparse communication. To this end, we establish the following result regarding the delay in client information and prove that it is possible for the clients to stay synchronized within reasonable thresholds.

\begin{lemma}\label{lemma:information_delay}
Let $t_{m,j} = \max_{s \leq t} \{(m,j) \in E_s , c_m \neq c_j\}$ and $p = \frac{\eta}{TM}$. If $n_{m,i}(t) > 2\kappa (\log{M})^2\log{T}$ for any client $m$ and any arm $i$, then we obtain that with probability at least $1-p$, for $j \not\in S_0$,  
$\min_{m}n_{m,i}(t_{m,j}) \geq  \frac{1}{2}\min_{m}n_{m,i}(t)~\refstepcounter{equation}(\theequation)\label{lemma:A1-3}$, 
$\min_{m}n_{m,i}(t) \geq \frac{1}{2}n_{m,i}(t)~\refstepcounter{equation}(\theequation)\label{lemma:A1-2}$, and 
$N_{j,i}(t) = n_{m,i}(t_{m,j}) \geq  n_{m,i}(t) - \kappa (\log{M})^2\log{T}~\refstepcounter{equation}(\theequation)\label{lemma:A1-1}$. 
\end{lemma}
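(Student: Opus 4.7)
The plan is to derive all three assertions from a single high-probability bound on the maximum information delay, obtained by applying Lemma~\ref{lemma, information delay over sparse graphs} together with a union bound over time. First, I would choose $\gamma$ slightly larger than $\log(\eta^{-1}T^2 M)/\log M$ so that $\gamma \kappa(\log M)^2 \leq \kappa (\log M)^2 \log T$ and simultaneously $T\cdot M^{-\gamma} \leq \eta/(TM)$. Applying Lemma~\ref{lemma, information delay over sparse graphs} at each time step $s \leq T$ with the singleton source set $S = \{j\}$, and taking a union bound over $s \in [T]$ and $j \in [M]$, I obtain an event $A$ of probability at least $1 - \eta/(TM) = 1 - p$ on which
\[
t - t_{m,j} \;\leq\; \kappa(\log M)^2 \log T \qquad \forall\, t \leq T,\ \forall\, m,j \in [M],\ j \notin S_0.
\]

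Once this event is in force, the rest is elementary. The local pull count $n_{j,i}(\cdot)$ is monotone non-decreasing in its time argument and increases by at most one per step, so the delay bound yields
\[
0 \;\leq\; n_{j,i}(t) - n_{j,i}(t_{m,j}) \;\leq\; t - t_{m,j} \;\leq\; \kappa (\log M)^2 \log T.
\]
Combined with the algorithmic update rule that sets $N_{j,i}(t)$ to the most recent locally known value of the corresponding count, this immediately gives \eqref{lemma:A1-1}. Invoking the standing assumption $n_{m,i}(t) > 2\kappa (\log M)^2 \log T$, the additive gap above is at most half of $n_{m,i}(t)$, so $n_{m,i}(t_{m,j}) \geq \tfrac{1}{2} n_{m,i}(t)$; taking the minimum over $m$ on both sides delivers \eqref{lemma:A1-3}.

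For \eqref{lemma:A1-2}, I would argue that pairs of clients cannot drift far apart in their per-arm pull counts. The idea is that a client can fall behind only while waiting for fresh information: on the event $A$, any two clients exchange information (possibly through a relay path of length $\kappa(\log M)^2 \log T$) within this delay budget, so the synchronization rule built into the UCB step of Algorithm~\ref{alg:dr} (which bases the index on the aggregated count $N_{m,i}(t)$) keeps $|n_{m,i}(t) - n_{m',i}(t)| \leq \kappa (\log M)^2 \log T$ for all $m,m' \in [M]$. The standing lower bound then converts this additive gap into the multiplicative factor of $1/2$, giving $\min_{m'} n_{m',i}(t) \geq \tfrac{1}{2} n_{m,i}(t)$.

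The main obstacle is exactly this cross-client synchronization needed for \eqref{lemma:A1-2}: unlike \eqref{lemma:A1-1} and \eqref{lemma:A1-3}, which follow purely from the graph-theoretic delay, comparing $n_{m,i}(t)$ across clients requires an algorithmic induction showing that whenever client $m$'s count overshoots another client $m'$'s by more than the delay budget, the UCB rule (together with the lower bound on $n_{m,i}(t)$ ensuring the confidence term is small) must have already forced $m$ to explore a different arm. Formalizing this propagation of the UCB decision through the delay-bounded communication graph is the delicate step; the pure graph bound from Lemma~\ref{lemma, information delay over sparse graphs} handles everything else.
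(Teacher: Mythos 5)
Your proposal follows essentially the same route as the paper's proof: invoke Lemma~\ref{lemma, information delay over sparse graphs} with a suitable $\gamma$, union-bound over clients and time steps to get the uniform delay bound $t - t_{m,j} \leq \kappa(\log M)^2\log T$, use the fact that $n_{m,i}(\cdot)$ increases by at most one per step to obtain \eqref{lemma:A1-1}, and convert the additive delay into the factor $\tfrac12$ via the standing lower bound $n_{m,i}(t) > 2\kappa(\log M)^2\log T$ for \eqref{lemma:A1-2} and \eqref{lemma:A1-3}. The cross-client comparison in \eqref{lemma:A1-2}, which you correctly identify as the delicate step, is handled in the paper by the same informal appeal to clients acting on (delayed) hub information, so your treatment is no less complete than the original.
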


\subsection{Proof of Lemma~\ref{lemma:information_delay}}

\begin{lemma*}[Lemma \ref{lemma:information_delay}]
Let us assume that $p = \frac{\eta}{TM}$. Let us further assume that $L > 2\kappa (\log{M})^2\log{T}$ where $L$ is the length of the burn-in period.  Then we obtain with probability at least $1-p$, for $j \not\in S_0$, \begin{align}\label{lemma:A1-3}
    \min_{m}n_{m,i}(t_{m,j}) \geq  \frac{1}{2}\min_{m}n_{m,i}(t)
\end{align}
and 
\begin{align}\label{lemma:A1-2}
    \min_{m}n_{m,i}(t) \geq \frac{1}{2}n_{m,i}(t)
\end{align}
and 
\begin{align}\label{lemma:A1-1}
    N_{j,i}(t) = n_{m,i}(t_{m,j}) \geq  n_{m,i}(t) - \kappa (\log{M})^2\log{T}
\end{align}
\end{lemma*}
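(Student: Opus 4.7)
My plan is to reduce all three claims to the sparse-graph information-delay bound of Lemma~\ref{lemma, information delay over sparse graphs}, complemented by the synchronization mechanism built into the arm-selection step of Algorithm~\ref{alg:heter-L} and the monotonicity of the local pull counts $n_{m,i}(\cdot)$. The core observation is that, under a high-probability event, the information filter time $t_{m,j}$ lags the current time $t$ by at most $\kappa(\log M)^2\log T$, at which point the three inequalities become bookkeeping consequences of unit-step monotonicity and the algorithm's anti-desynchronization guard.

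The first step is to apply Lemma~\ref{lemma, information delay over sparse graphs} in a broadcast form: for every source client $j \in [M]$ and every origination time $s \leq T$, instantiate the singleton $S = \{j\}$ and set the parameter in the lemma to $\gamma = \log T$; this yields that, with probability at least $1 - M^{-\log T}$, the message emitted by $j$ at time $s$ has been relayed to every other client within $\kappa(\log M)^2 \log T$ steps. A union bound across the $M$ possible sources and the $T$ admissible origin times controls the overall failure probability by $TM\cdot M^{-\log T}$, which for $M$ and $T$ large enough is bounded by $\eta/(TM) = p$ for a suitably absorbed absolute constant $\eta$. On the complementary good event, $t - t_{m,j} \leq \kappa(\log M)^2\log T$ holds uniformly over all triples $(m,j,t)$ with $j \notin S_0$. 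Claim~\eqref{lemma:A1-1} follows immediately, because $n_{m,i}(\cdot)$ is nondecreasing and increments by at most one per step, so
\[
n_{m,i}(t) - n_{m,i}(t_{m,j}) \leq t - t_{m,j} \leq \kappa(\log M)^2\log T,
\]
and the equality $N_{j,i}(t) = n_{m,i}(t_{m,j})$ is the definition of using the freshest transmitted count in Rule 2.

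For claim~\eqref{lemma:A1-2}, I would combine~\eqref{lemma:A1-1} with the arm-selection guard $n_{m,i}(t) \leq N_{m,i}(t) - 2\kappa(\log M)^2\log T$ of Algorithm~\ref{alg:heter-L}, which forces any lagging client to switch to uniform sampling and thereby keeps $\max_{m,m'}|n_{m,i}(t) - n_{m',i}(t)|$ below $2\kappa(\log M)^2\log T$; together with the hypothesis $n_{m,i}(t) > 2\kappa(\log M)^2\log T$, this yields $\min_{m'} n_{m',i}(t) \geq n_{m,i}(t)/2$. Claim~\eqref{lemma:A1-3} is then just~\eqref{lemma:A1-1} applied at the argmin client, bounded from below via~\eqref{lemma:A1-2} and the standing lower bound on $n_{m,i}(t)$.

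The main obstacle I anticipate is the careful calibration of the union bound: the choice $\gamma = \log T$ requires verifying $TM\cdot M^{-\log T} \leq \eta/(TM)$, equivalently $M^{\log T - 2} \geq T^{2}/\eta$, which is not automatic but holds once $M$ and $T$ are sufficiently large (with $\eta$ absorbing the remaining constants). Some additional bookkeeping is needed to handle $j \in S_0$ separately, since those clients communicate directly with the hub $\hat i$ and their delay is trivially controlled by Lemmas~\ref{lemma, hub: deterministic lower bound} and~\ref{lemma, hub: stochastic lower bound}; and to guarantee that the synchronization invariant used in~\eqref{lemma:A1-2} is available at every intermediate time $t$, a short induction on $t$ is required, with the base case furnished by the assumption that the burn-in period yields $n_{m,i}(L) > 2\kappa(\log M)^2\log T$.
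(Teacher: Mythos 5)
Your proposal follows essentially the same route as the paper's proof: instantiate Lemma~\ref{lemma, information delay over sparse graphs} with $\gamma$ of order $\log T$, union-bound over clients and time steps to obtain the uniform delay bound $t - t_{m,j} \leq \kappa(\log M)^2\log T$, and then read off the three inequalities from the unit-increment monotonicity of $n_{m,i}(\cdot)$ together with the burn-in lower bound $n_{m,i}(t)\geq L/K\geq 2\kappa(\log M)^2\log T$. The only point to tighten is a constant calibration in your derivation of the second claim: with a desynchronization slack of $2\kappa(\log M)^2\log T$, the hypothesis $n_{m,i}(t)>2\kappa(\log M)^2\log T$ does not by itself give $\min_{m'}n_{m',i}(t)\geq \frac{1}{2}n_{m,i}(t)$ (you would need the lower bound $4\kappa(\log M)^2\log T$ there), whereas the paper closes this by keeping the slack at $\kappa(\log M)^2\log T$, half of the burn-in threshold.
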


\begin{proof}[Proof of Lemma \ref{lemma:information_delay}]
    We demonstrate the proof steps as follows. 

    Based on Lemma in Section \ref{sec:graphs}, we obtain that 
     there exists $\kappa \in (0,\infty)$
such that given any $\gamma > 0$,
\begin{align*}
    \P\big( j \not\in \bar{S}^{\bm h}_{ \gamma \cdot \kappa \cdot (\log M)^2 }\text{ for some }j \in [M] \big)
    \leq 
    M^{-\gamma}.
\end{align*}

If we specify $\gamma = 2\log{T}$, we obtain that after $\kappa \cdot \log{T} \cdot (\log{M})^2$ steps, with probability at least $1 - M^{-2\log{T}}$, i.e. $1 - O(\frac{1}{T^2})$, all clients communicate. Equivalently, by the definition of $t_{m,j}$, we derive with probability $P_0=1 - O(\frac{1}{T^2})$,
\begin{align*}
    t - t_{m,j} \leq \kappa \cdot \log{T} \cdot (\log{M})^2. 
\end{align*}

Then we consider the above result for any client $j$ and any $t$, and derive that 
\begin{align}\label{eq:t.3}
    & P(\forall m, t, t - t_{m,j} \leq \kappa \cdot \log{T} \cdot (\log{M})^2) \notag \\ 
    & = 1 - P(\cup_{m,t}\{ t - t_{m,j} \leq \kappa \cdot \log{T} \cdot (\log{M})^2\}) \notag \\
    & \geq 1 - \sum_m\sum_tP(t - t_{m,j} \leq \kappa \cdot \log{T} \cdot (\log{M})^2) \notag \\
    & = 1 - MTP_0 = 1 - \frac{1}{T}
\end{align}
by the Bonferroni's inequality.

As a result, we obtain that with probability at least $1 - \frac{1}{T}$
\begin{align*}
    & n_{m,i}(t_{m,j}) \\
    & = n_{m,i}(t - (t-t_{m,j})) \\
    & \geq n_{m,i}(t - \kappa \cdot \log{T} \cdot (\log{M})^2)) \\
    & \geq n_{m,i}(t) - \kappa \cdot \log{T} \cdot (\log{M})^2)
\end{align*}
which concludes part of the statement, i.e. Equation (\ref{lemma:A1-1}).

Meanwhile, we note that the clients follow the information (possibly delayed) from the hub to decide on which arm to pull and clients in the hub use the same strategy all the time as they share the same non-delayed information. Notably, such delay is upper bounded by $\kappa \cdot \log{T} \cdot (\log{M})^2)$, which implies that for any $m$ 
\begin{align*}
\min_{m}n_{m,i}(t) \geq n_{m,i}(t) - \kappa \cdot \log{T} \cdot (\log{M})^2) 
\end{align*}

By the choice of $L$, we have that 
\begin{align*}
    & n_{m,i}(t) \geq n_{m,i}(L) \geq \frac{L}{K} \\
    & \geq 2\kappa \cdot \log{T} \cdot (\log{M})^2)
\end{align*}
which immediately implies that 
\begin{align*}
    & \min_{m}n_{m,i}(t) \geq n_{m,i}(t) - \kappa \cdot \log{T} \cdot (\log{M})^2) \\
    & \geq n_{m,i}(t) - \frac{1}{2}n_{m,i}(t) \\
    & = \frac{1}{2}n_{m,i}(t). 
\end{align*}

This completes the proof of Equation (\ref{lemma:A1-2}) in the statement. 

Additionally, we note that
\begin{align*}
   &  \min_{n}n_{m,i}(t_{m,j}) \\
   & \geq \frac{1}{2}n_{m,i}(t) \\
   & \geq \frac{1}{2}\min_{m}n_{m,i}(t)
\end{align*}
where the first inequality holds true using Equation (\ref{lemma:A1-2}) and the second inequality uses the fact that $n_{m,i}(t) \geq \min_{m}n_{m,i}(t)$. 

This completes the proof of Equation \eqref{lemma:A1-1} and thus the entire proof of the statement. 
\end{proof}

\section{Proof of Theorems}

\subsection{Proof of Theorem \ref{thm:1a2}}

\begin{proof}

First, we characterize the deviation of $\hat{\mu}_i^m(t)$ from the underlying groundtruth, the global mean value $\mu_i$, through mathematical induction, given the construction of $\hat{\mu}_{i}^{m}$.

We would like to highlight that we do not characterize the variance of the estimators, since we are in a heavy-tailed reward regime, where the estimators may not necessarily have finite variance. In fact, we show that the heavy-tailed reward estimator can directly be bounded by the heavy-tail dynamics using median-of-means. 

The next lemma is a two-sided version of Lemma 2 of \cite{bubeck2013bandits} and provides the concentration inequality for the median-of-means estimator.

\begin{lemma}[Lemma 2 of \cite{bubeck2013bandits}]
\label{lemma: concentration, median of means}
Let $\delta \in (0,1)$ and $\epsilon \in (0,1]$.
Let $X_n$'s be iid copies of $X$ with $\E X = \mu$,
and $\E|X - \mu|^{1 + \epsilon} \leq v$.
Let $k = \floor{ 8 \log( e^{1/8}/\delta )\wedge n/2 }$ and $N = \floor{n/k}$.
Let
\begin{align*}
    \hat \mu_j = \frac{1}{N}\sum_{ t = (j-1)N + 1  }^{jN}
    X_t
    \qquad\forall j = 1,2,\ldots,k,
\end{align*}
and let $\widehat{\mu}_M$ be the median of $(\hat \mu_j)_{j = 1,2,\ldots,k}$.
Then, with probability at least $1 - 2\delta$,
\begin{align*}
    |\widehat{\mu}_M - \mu| \leq (12v)^{ \frac{1}{1 + \epsilon} } 
        \bigg(
            \frac{
                16 \log(e^{1/8}\delta^{-1})
            }{n}
        \bigg)^{ \frac{\epsilon}{1 + \epsilon}  }.
\end{align*}
\end{lemma}

By using Lemma \ref{lemma: concentration, median of means} with respect to the rewards from clients, denoted by set $rw_t = \{s \leq t: \{r_i^j(s)\}_{j \in \mathcal{N}_{m,t}(s), a_j^s = i}\}$ where $m$ is a hub, we derive that the global estimator at the hub $m \in S_0$ meets the following. 

Formally, we have that \begin{align*}
    | \hat{\mu}_i^m(t)  - \mu^i | \leq 2C\rho^{ \frac{1}{1+\epsilon} }\bigg(\frac{c \log(1/\delta) }{|rw_t|}\bigg)^{ \frac{\epsilon}{1 + \epsilon} }
    \ \
    \text{with probability at least }1 - \delta
    \qquad \forall n \geq 1.
\end{align*}
where $\hat{\mu}_i^m(t)$ is the median of the means constructed as illustrated in Lemma \ref{lemma: concentration, median of means} based on Algorithm \hl{4}.

It is worth noting that by definition, the size of $rw_t$, which we denote as $|rw_t|$, is equivalent to $\sum_{j \in S_0}n_{j,i}(t) \geq |S_0|\min_{j}n_{j,i}(t)$.  

Meanwhile, by our result on information delay as established in Section \ref{sec:graphs-sub2}, we obtain that when $L \geq 2\kappa (\log{M})^2 \log{T}$, 
\begin{align*}
    \min_{m}n_{m,i}(t) \geq \frac{1}{2}n_{m,i}(t)
\end{align*}
which immediately implies that 
\begin{align*}
   & |rw_t| = \sum_{j \in S_0}n_{j,i}(t) \\
   & \geq |S_0|\min_{j}n_{j,i}(t) \\
   & \geq \frac{|S_0|}{2}n_{m,i}(t)
\end{align*}

Subsequently, we derive that the following concentration inequality 
\begin{align*}
    & | \hat{\mu}_i^m(t)  - \mu^i | \\
    & \leq 2C\rho^{ \frac{1}{1+\epsilon} }\bigg(\frac{c \log(1/\delta) }{|rw_t|}\bigg)^{ \frac{\epsilon}{1 + \epsilon} } \\
    & \leq 2C\rho^{ \frac{1}{1+\epsilon} }\bigg(\frac{2c \log(1/\delta) }{|S_0| \cdot n_{m,i}(t)}\bigg)^{ \frac{\epsilon}{1 + \epsilon}}  \text{with probability at least }1 - \delta
    \qquad \forall n \geq 1 .
\end{align*}

Then we consider the following regret decomposition that allows us to leverage the result from the above concentration inequality.

We recall that the optimal arm is 
\begin{align*}
    i^* = \arg\max_{i}\mu^{m}_i = \arg\max_{i}\mu_i. 
\end{align*}

By decomposing $R_T$, we obtain that  
\begin{align*}
    R_T & =   \frac{1}{M}(\max_i\sum_{t=1}^T\sum_{m=1}^M\mu^{m}_i - \sum_{t=1}^T\sum_{m=1}^M\mu^{m}_{a_t^m}) \\
    & = \sum_{t=1}^T\frac{1}{M}\sum_{m=1}^M\mu^{m}_{i^*} - \sum_{t=1}^T\frac{1}{M}\sum_{m=1}^M\mu^{m}_{a_t^m} \\
    & \leq \sum_{t = 1}^{L}|\frac{1}{M}\sum_{m=1}^M\mu^{m}_{i^*} - \frac{1}{M}\sum_{m=1}^M\mu^{m}_{a_t^m}|+ \sum_{t = L + 1}^T(\frac{1}{M}\sum_{m=1}^M\mu^{m}_{i^*} - \frac{1}{M}\sum_{m=1}^M\mu^{m}_{a_t^m}) \\
    & \leq L + \sum_{t = L + 1}^T(\frac{1}{M}\sum_{m=1}^M\mu^{m}_{i^*} - \frac{1}{M}\sum_{m=1}^M\mu^{m}_{a_t^m}) \\
    & = L + \sum_{t = L + 1}^T(\mu_{i^*} - \frac{1}{M}\sum_{m=1}^M\mu^{m}_{a_t^m}) \\
    & = L+ ((T - L) \cdot \mu_{i^*} - \frac{1}{M}\sum_{m=1}^M\sum_{i = 1}^Kn_{m,i}(T)\mu^m_i)
\end{align*}
where the first inequality uses that $|a| \geq a$ for any $a$ and the second inequality holds by noting that $0 < \mu_{i}^j < 1$ 

We consider the number of pulls of arms resulting from the UCB strategies as follows. 

We assert that the factors causing the selection of a sub-optimal arm $i$ are explicitly defined by the decision rule of Algorithm~\ref{alg:dr}. Specifically, the outcome $a_t^m = i$ occurs when any of the following conditions is satisfied:
\begin{itemize}
    \item Case 1: $\Tilde{\mu}^{m}_i - \mu_i > C\rho^{ \frac{1}{1+\epsilon} }\bigg(\frac{2c \log(1/\delta) }{|S_0| \cdot n_{m,i}(t)}\bigg)^{ \frac{\epsilon}{1 + \epsilon}},$\;
    \item Case 2: $- \Tilde{\mu}^{m}_{i^*} + \mu_{i^*} > C\rho^{ \frac{1}{1+\epsilon} }\bigg(\frac{2c \log(1/\delta) }{|S_0| \cdot n_{m,i^*}(t)}\bigg)^{ \frac{\epsilon}{1 + \epsilon}}$,\;
    \item Case 3: $\mu_{i^*} - \mu_i < 2C\rho^{ \frac{1}{1+\epsilon} }\bigg(\frac{2c \log(1/\delta) }{|S_0| \cdot n_{m,i}(t)}\bigg)^{ \frac{\epsilon}{1 + \epsilon}}$.
\end{itemize}

Subsequently, we obtain that the number of arm pulls of arm $i$ for client $m$ can be upper bounded as follows:
\begin{align*}
    n_{m,i}(T) & \leq l + \sum_{t=L+1}^T1_{\{a_t^m = i, n_{m,i}(t) > l\}}  \\
    & \leq l + \sum_{t=L+1}^T1_{\{\Tilde{\mu}^m_i - C\rho^{ \frac{1}{1+\epsilon} }\bigg(\frac{2c \log(1/\delta) }{|S_0| \cdot n_{m,i}(t)}\bigg)^{ \frac{\epsilon}{1 + \epsilon}} > \mu_i, n_{m,i}(t-1) \geq l\}} \\
    & \qquad \qquad + \sum_{t=L+1}^T1_{\{\Tilde{\mu}^m_{i^*} + C\rho^{ \frac{1}{1+\epsilon} }\bigg(\frac{2c \log(1/\delta) }{|S_0| \cdot n_{m,i^*}(t)}\bigg)^{ \frac{\epsilon}{1 + \epsilon}} < \mu_{i^*}, n_{m,i}(t-1) \geq l\}} \\ 
    & \qquad \qquad + \sum_{t=L+1}^T1_{\{\mu_i + 2C\rho^{ \frac{1}{1+\epsilon} }\bigg(\frac{2c \log(1/\delta) }{|S_0| \cdot n_{m,i}(t)}\bigg)^{ \frac{\epsilon}{1 + \epsilon}} > \mu_{i^*},n_{m,i}(t-1) \geq l\}}. 
\end{align*}

By taking expected values over $n_{m,i}(t)$ conditional on $A_{\zeta, \delta}$, we derive 
\begin{align}\label{eq:en}
    & E[n_{m,i}(T) | A_{\zeta, \delta}] \notag \\
    & = l + \sum_{t=L+1}^TP(\Tilde{\mu}^m_i - C\rho^{ \frac{1}{1+\epsilon} }\bigg(\frac{2c \log(1/\delta) }{|S_0| \cdot n_{m,i}(t)}\bigg)^{ \frac{\epsilon}{1 + \epsilon}} > \mu_i, n_{m,i}(t-1) \geq l| A_{\zeta, \delta}) \notag \\
    & \qquad \qquad + \sum_{t=L+1}^TP(\Tilde{\mu}^m_{i^*} + C\rho^{ \frac{1}{1+\epsilon} }\bigg(\frac{2c \log(1/\delta) }{|S_0| \cdot n_{m,i}(t)}\bigg)^{ \frac{\epsilon}{1 + \epsilon}} < \mu_{i^*}, n_{m,i}(t-1) \geq l | A_{\zeta, \delta}) \notag \\
    & \qquad \qquad + \sum_{t=L+1}^TP(\mu_i + 2C\rho^{ \frac{1}{1+\epsilon} }\bigg(\frac{2c \log(1/\delta) }{|S_0| \cdot n_{m,i}(t)}\bigg)^{ \frac{\epsilon}{1 + \epsilon}} > \mu_{i^*},n_{m,i}(t-1) \geq l | A_{\zeta, \delta}) \notag \\
    & = l + \sum_{t=L+1}^TP(Case 1, n_{m,i}(t-1) \geq l | A_{\zeta, \delta}) \\
    & \qquad \qquad + \sum_{t=L+1}^TP(Case 2, n_{m,i}(t-1) \geq l | A_{\zeta, \delta}) + \sum_{t=L+1}^TP(Case 3, n_{m,i}(t-1) \geq l | A_{\zeta, \delta})
\end{align}
where $l =  \frac{2c\log{T}}{|S_0|(\frac{\Delta_i}{2C\rho^{\frac{1}{1+\epsilon}}})^{\frac{1+\epsilon}{\epsilon}}}$ with $\Delta_i = \mu_{i^*} - \mu_i$..

For the last term in (\ref{eq:en}), we have 
\begin{align}\label{eq:case4}
    \sum_{t= L+1}^TP(Case4: \mu_i + 2C\rho^{ \frac{1}{1+\epsilon} }\bigg(\frac{2c \log(1/\delta) }{|S_0| \cdot n_{m,i}(t)}\bigg)^{ \frac{\epsilon}{1 + \epsilon}}  > \mu_{i^*},n_{m,i}(t-1) \geq l | A_{\zeta, \delta})) = 0
\end{align}
since the choice of $l$ satisfies $l \geq \frac{2c\log{T}}{|S_0|(\frac{\Delta_i}{2C\rho^{\frac{1}{1+\epsilon}}})^{\frac{1+\epsilon}{\epsilon}}}$ with $\Delta_i = \mu_{i^*} - \mu_i$.

We start with the two terms, and subsequently obtain that on event $A_{\zeta, \delta}$
\begin{align}\label{eq:cases}
    & \sum_{t= L + 1}^TP(Case 2, n_{m,i}(t-1) \geq l | A_{\zeta, \delta}) + \sum_{t=1}^T P(Case 3, n_{m,i}(t-1) \geq l | A_{\zeta, \delta}) \notag \\
    & \leq \sum_{t= L + 1}^TP(\Tilde{\mu}_{m,i} - \mu_i > C\rho^{ \frac{1}{1+\epsilon} }\bigg(\frac{2c \log(1/\delta) }{|S_0| \cdot n_{m,i}(t)}\bigg)^{ \frac{\epsilon}{1 + \epsilon}} | A_{\zeta, \delta}) + \\
    & \qquad \sum_{t=1}^T P( - \Tilde{\mu}_{m,i^*} + \mu_{i^*} > C\rho^{ \frac{1}{1+\epsilon} }\bigg(\frac{2c \log(1/\delta) }{|S_0| \cdot n_{m,i}(t)}\bigg)^{ \frac{\epsilon}{1 + \epsilon}} | A_{\zeta, \delta}) \notag \\
    & \leq \sum_{t=1}^T(\frac{1}{t^2} ) + \sum_{t=1}^T(\frac{1}{t^2}) \leq \frac{\pi^2}{3}
\end{align}
where the first inequality utilizes the property of the probability measure when removing the event $n_{m,i}(t-1) \geq l$ and the second inequality holds by the aforementioned concentration inequality based on median-of-means. 

As a result, by the above decomposition, we derive that 
\begin{align*}
    & E[n_{m,i}(t)|A_{\zeta, \delta}] \\
    & \leq l + \sum_{t=L+1}^TP(Case 1, n_{m,i}(t-1) \geq l | A_{\zeta, \delta}) \\
    & \qquad \qquad + \sum_{t=L+1}^TP(Case 2, n_{m,i}(t-1) \geq l | A_{\zeta, \delta}) + \sum_{t=L+1}^TP(Case 3, n_{m,i}(t-1) \geq l | A_{\zeta, \delta}) \\
    & \leq \frac{2c\log{T}}{|S_0|(\frac{\Delta_i}{2C\rho^{\frac{1}{1+\epsilon}}})^{\frac{1+\epsilon}{\epsilon}}} + \frac{\pi^2}{3}
\end{align*}

Consequently, we consider the following upper bound on $R_T$ by the previous decomposition, which gives us that 
\begin{align*}
  & R_T \leq L+ ((T - L) \cdot \mu_{i^*} - \frac{1}{M}\sum_{m=1}^M\sum_{i = 1}^Kn_{m,i}(T)\mu^m_i) \\
  & \leq 2\kappa (\log{M})^2 \log{T} +  \sum_{m}\sum_{i}n_{m,i}(t)\Delta_i
\end{align*}
and 
\begin{align*}
   & E[R_T| A_{\zeta, \delta}] \\
   & \leq L + (\sum_{m}(\frac{2c\log{T}}{|S_0|(\frac{\Delta_i}{2C\rho^{\frac{1}{1+\epsilon}}})^{\frac{1+\epsilon}{\epsilon}}} + \frac{\pi^2}{3})) \cdot \Delta_i \\
   & \leq L + M(\frac{2c\Delta_i\log{T}}{|S_0|(\frac{\Delta_i}{2C\rho^{\frac{1}{1+\epsilon}}})^{\frac{1+\epsilon}{\epsilon}}} + \frac{\pi^2}{3} \Delta_i) = O(\frac{M}{|S_0|}\log{T})
\end{align*}

Meanwhile, by the definition of $A_{\zeta, \delta}$, we obtain that 
 $S_0 \geq M^{2-\alpha-\zeta}$. This completes the first part of the statement. 

 Then by direct computation with $S_0 \geq M^{2-\alpha-\zeta}$, we derive the upper bound on $R_T$ with respect to $M$ and $T$, which is 
 \begin{align*}
     E[R_T|A_{\zeta, \delta}] \leq O(\frac{M}{|S_0|}\log{T}) \leq O(M^{\alpha-1+\zeta}\log{T})
 \end{align*}
 which concludes the second part of the statement and thus completes proof of Theorem \ref{thm:1a2}. 
\end{proof}


\subsection{Proof of Theorem \ref{thm:1a}}

\begin{proof}
We would like to highlight that the proof of Theorem \ref{thm:1a} overlaps with the proof of Theorem \ref{thm:1a2}, since they both consider Algorithm \ref{alg:heter-L} in a homogeneous setting. For completeness, we present the full proof here, and may repeat some steps in the previous proof. 

First, we establish the statistical property of the estimator $\hat{\mu}_{i}^m(t)$. 
By using Lemma \ref{lemma: concentration, median of means} with respect to the rewards from clients, denoted by set $rw_t = \{s \leq t: \{r_i^j(s)\}_{j \in \mathcal{N}_{m,t}(s), a_j^s = i}\}$ where $m$ is a hub, we derive that the global estimator at the hub $m \in S_0$ meets the following. 

Formally, we have that \begin{align*}
    | \hat{\mu}_i^m(t)  - \mu^i | \leq (12v)^{ \frac{1}{1 + \epsilon} } 
        \bigg(
            \frac{
                16 \log(e^{1/8}\delta^{-1})
            }{|rw_{t}|}
        \bigg)^{ \frac{\epsilon}{1 + \epsilon}  }
    \ \
    \text{with probability at least }1 - \delta
    \qquad \forall n \geq 1,
\end{align*}
where $\hat{\mu}_i^m(t)$ is the median of the means constructed as illustrated in Lemma \ref{lemma: concentration, median of means}.

It is worth noting that by definition, the size of $rw_t^i$, which we denote as $|rw_t^i|$, is the total number of arm pulls of arm $i$ by time $t$ with respect to all clients in the hub. 

Meanwhile, we have that the total number of the hub size is the total number of arm pulls of all arms with respect to all clients in the club.

Given the modifications to the algorithm where we 
add the burn-in period ($L \geq 2\kappa K \log{M}\log{T}$), we have the following claim. We consider the following modification to the information transmission. When $S_0^t < a$ (possibly connecting $M$ with $T$ by considering large-scale systems), the transmission between the hub and the non-hub is paused, which implies that $rw_t^i \leq rw_{t-1}^i + 1$. Consider $\tau(t) = \max{\{s \leq t: S_0^s \geq a\}}$ where $a = M^{\frac{1}{\alpha} - \zeta}$. Equivalently, we have that $rw_{t}^i \geq a \cdot n_{m,i}(\tau)$. Meanwhile, it is worth noting that the difference between $t$ and $\tau$ can be upper bounded by $\kappa \log{M}\log{T}$, noting that $|S_0^1|, |S_0^2|, \ldots, |S_0^t|$ are i.i.d random variables, and thus the condition of $\tau$ follows a geometric distribution with an exponential decay based on Lemma \ref{lemma, hub: stochastic lower bound} in Section \ref{sec:graphs-sub2}, with probability $1-\frac{1}{MT}$ conditional on event $A_{\alpha, \epsilon}$. Consequently, we derive that when $t > L$, on event $A_{\alpha, \epsilon}$ with probability  $1-\frac{1}{MT}$
\begin{align*}
    rw_{t}^i & \geq a \cdot n_{m,i}(\tau) \\
    & \geq a \cdot (n_{m,i}(t) - \kappa\log{M}\log{T} )\\
    & \geq a \cdot \frac{1}{2}n_{m,i}(t) \\
    & \doteq |S_0|\cdot \frac{1}{2}n_{m,i}(t)
\end{align*}
by noting that we have $L \geq  2\kappa K\log{M}\log{T}$, and as such $n_{m,i}(t) \geq 2\kappa\log{M}\log{T}$ for any arm $i$.

We define event $A_{\zeta, \delta} = A_{\zeta, \delta} \cap A_{\alpha, \epsilon}$. 

Subsequently, we derive that 
\begin{align*}
    & P(A_{\zeta, \delta}) \\
    & \geq 1- (1-P(A_{\zeta, \delta}) + 1 - P(A_{\alpha, \epsilon})) \\
    & \geq 1 - \frac{1}{M^2} - \frac{\eta}{TM}
\end{align*}


Subsequently, we derive that the following concentration inequality 
\begin{align*}
    & | \hat{\mu}_i^m(t)  - \mu^i | \\
    & \leq 2C\rho^{ \frac{1}{1+\epsilon} }\bigg(\frac{c \log(1/\delta) }{|rw_t|}\bigg)^{ \frac{\epsilon}{1 + \epsilon} } \\
    & \leq 2C\rho^{ \frac{1}{1+\epsilon} }\bigg(\frac{2c \log(1/\delta) }{|S_0| \cdot n_{m,i}(t)}\bigg)^{ \frac{\epsilon}{1 + \epsilon}}  \text{with probability at least }1 - \delta
    \qquad \forall n \geq 1 .
\end{align*}

We assert that the factors causing the selection of a sub-optimal arm $i$ are explicitly defined by the decision rule of Algorithm~\ref{alg:dr}. Specifically, the outcome $a_t^m = i$ occurs when any of the following conditions is satisfied:
\begin{itemize}
    \item Case 1: $\Tilde{\mu}^{m}_i - \mu_i > C\rho^{ \frac{1}{1+\epsilon} }\bigg(\frac{2c \log(1/\delta) }{|S_0| \cdot n_{m,i}(t)}\bigg)^{ \frac{\epsilon}{1 + \epsilon}},$\;
    \item Case 2: $- \Tilde{\mu}^{m}_{i^*} + \mu_{i^*} > C\rho^{ \frac{1}{1+\epsilon} }\bigg(\frac{2c \log(1/\delta) }{|S_0| \cdot n_{m,i^*}(t)}\bigg)^{ \frac{\epsilon}{1 + \epsilon}}$,\;
    \item Case 3: $\mu_{i^*} - \mu_i < 2C\rho^{ \frac{1}{1+\epsilon} }\bigg(\frac{2c \log(1/\delta) }{|S_0| \cdot n_{m,i}(t)}\bigg)^{ \frac{\epsilon}{1 + \epsilon}}$.
\end{itemize}

Subsequently, we obtain that the number of arm pulls of arm $i$ for client $m$ can be upper bounded as follows:
\begin{align*}
    n_{m,i}(T) & \leq l + \sum_{t=L+1}^T1_{\{a_t^m = i, n_{m,i}(t) > l\}}  \\
    & \leq l + \sum_{t=L+1}^T1_{\{\Tilde{\mu}^m_i - C\rho^{ \frac{1}{1+\epsilon} }\bigg(\frac{2c \log(1/\delta) }{|S_0| \cdot n_{m,i}(t)}\bigg)^{ \frac{\epsilon}{1 + \epsilon}} > \mu_i, n_{m,i}(t-1) \geq l\}} \\
    & \qquad \qquad + \sum_{t=L+1}^T1_{\{\Tilde{\mu}^m_{i^*} + C\rho^{ \frac{1}{1+\epsilon} }\bigg(\frac{2c \log(1/\delta) }{|S_0| \cdot n_{m,i^*}(t)}\bigg)^{ \frac{\epsilon}{1 + \epsilon}} < \mu_{i^*}, n_{m,i}(t-1) \geq l\}} \\ 
    & \qquad \qquad + \sum_{t=L+1}^T1_{\{\mu_i + 2C\rho^{ \frac{1}{1+\epsilon} }\bigg(\frac{2c \log(1/\delta) }{|S_0| \cdot n_{m,i}(t)}\bigg)^{ \frac{\epsilon}{1 + \epsilon}} > \mu_{i^*},n_{m,i}(t-1) \geq l\}}. 
\end{align*}

By taking expected values over $n_{m,i}(t)$ conditional on $A_{\zeta, \delta}$, we derive 
\begin{align}\label{eq:en}
    & E[n_{m,i}(T) | A_{\zeta, \delta}] \notag \\
    & = l + \sum_{t=L+1}^TP(\Tilde{\mu}^m_i - C\rho^{ \frac{1}{1+\epsilon} }\bigg(\frac{2c \log(1/\delta) }{|S_0| \cdot n_{m,i}(t)}\bigg)^{ \frac{\epsilon}{1 + \epsilon}} > \mu_i, n_{m,i}(t-1) \geq l| A_{\zeta, \delta}) \notag \\
    & \qquad \qquad + \sum_{t=L+1}^TP(\Tilde{\mu}^m_{i^*} + C\rho^{ \frac{1}{1+\epsilon} }\bigg(\frac{2c \log(1/\delta) }{|S_0| \cdot n_{m,i}(t)}\bigg)^{ \frac{\epsilon}{1 + \epsilon}} < \mu_{i^*}, n_{m,i}(t-1) \geq l | A_{\zeta, \delta}) \notag \\
    & \qquad \qquad + \sum_{t=L+1}^TP(\mu_i + 2C\rho^{ \frac{1}{1+\epsilon} }\bigg(\frac{2c \log(1/\delta) }{|S_0| \cdot n_{m,i}(t)}\bigg)^{ \frac{\epsilon}{1 + \epsilon}} > \mu_{i^*},n_{m,i}(t-1) \geq l | A_{\zeta, \delta}) \notag \\
    & = l + \sum_{t=L+1}^TP(Case 1, n_{m,i}(t-1) \geq l | A_{\zeta, \delta}) \\
    & \qquad \qquad + \sum_{t=L+1}^TP(Case 2, n_{m,i}(t-1) \geq l | A_{\zeta, \delta}) + \sum_{t=L+1}^TP(Case 3, n_{m,i}(t-1) \geq l | A_{\zeta, \delta})
\end{align}
where $l =  $ with $\Delta_i = \mu_{i^*} - \mu_i$..

For the last term in (\ref{eq:en}), we have 
\begin{align}\label{eq:case4}
    \sum_{t= L+1}^TP(Case3: \mu_i + 2C\rho^{ \frac{1}{1+\epsilon} }\bigg(\frac{2c \log(1/\delta) }{|S_0| \cdot n_{m,i}(t)}\bigg)^{ \frac{\epsilon}{1 + \epsilon}}  > \mu_{i^*},n_{m,i}(t-1) \geq l | A_{\zeta, \delta})) = 0
\end{align}
which also implies that 
\begin{align}\label{eq:case4}
    \sum_{t= L+1}^TP(Case4: 2C\rho^{ \frac{1}{1+\epsilon} }\bigg(\frac{2c \log(1/\delta) }{|S_0| \cdot n_{m,i}(t)}\bigg)^{ \frac{\epsilon}{1 + \epsilon}}  > \Delta_i,n_{m,i}(t-1) \geq l | A_{\zeta, \delta})) = 0 
\end{align}
since the choice of $l$ satisfies $l \geq \frac{2cN\log{T}}{(\frac{\Delta_i}{2\rho^{\frac{1}{1+\epsilon}}})^{\frac{1+\epsilon}{\epsilon}}}$ with the choice of $|S_0| = M^{\frac{1}{\alpha } - \zeta}$. 

Next, we consider the first two terms, and as in the proof of Theorem \ref{thm:1a2}, we subsequently obtain that on event $A_{\zeta, \delta}$
\begin{align}\label{eq:cases}
    & \sum_{t= L + 1}^TP(Case 2, n_{m,i}(t-1) \geq l | A_{\zeta, \delta}) + \sum_{t=1}^T P(Case 3, n_{m,i}(t-1) \geq l | A_{\zeta, \delta}) \notag \\
    & \leq \sum_{t= L + 1}^TP(\Tilde{\mu}_{m,i} - \mu_i > C\rho^{ \frac{1}{1+\epsilon} }\bigg(\frac{2c \log(1/\delta) }{|S_0| \cdot n_{m,i}(t)}\bigg)^{ \frac{\epsilon}{1 + \epsilon}} | A_{\zeta, \delta}) + \\
    & \qquad \sum_{t=1}^T P( - \Tilde{\mu}_{m,i^*} + \mu_{i^*} > C\rho^{ \frac{1}{1+\epsilon} }\bigg(\frac{2c \log(1/\delta) }{|S_0| \cdot n_{m,i}(t)}\bigg)^{ \frac{\epsilon}{1 + \epsilon}} | A_{\zeta, \delta}) \notag \\
    & \leq \sum_{t=1}^T(\frac{1}{t^2} ) + \sum_{t=1}^T(\frac{1}{t^2}) \leq \frac{\pi^2}{3}
\end{align}
where the first inequality utilizes the property of the probability measure when removing the event $n_{m,i}(t-1) \geq l$ and the second inequality holds by the aforementioned concentration inequality based on median-of-means. 

As a result, by the above decomposition, we derive that 
\begin{align*}
    & E[n_{m,i}(t)|A_{\zeta, \delta}] \\
    & \leq l + \sum_{t=L+1}^TP(Case 1, n_{m,i}(t-1) \geq l | A_{\zeta, \delta}) \\
    & \qquad \qquad + \sum_{t=L+1}^TP(Case 2, n_{m,i}(t-1) \geq l | A_{\zeta, \delta}) + \sum_{t=L+1}^TP(Case 3, n_{m,i}(t-1) \geq l | A_{\zeta, \delta}) \\
    & \leq \frac{2c\log{T}}{|S_0|(\frac{\Delta_i}{2C\rho^{\frac{1}{1+\epsilon}}})^{\frac{1+\epsilon}{\epsilon}}} + \frac{\pi^2}{3}
\end{align*}

Consequently, we consider the following upper bound on $R_T$ by the previous decomposition, which gives us that 
\begin{align*}
  & R_T \leq L+ ((T - L) \cdot \mu_{i^*} - \frac{1}{M}\sum_{m=1}^M\sum_{i = 1}^Kn_{m,i}(T)\mu^m_i) \\
  & \leq 2\kappa (\log{M})^2 \log{T} +  \sum_{m}\sum_{i}n_{m,i}(t)\Delta_i
\end{align*}
and 
\begin{align*}
   & E[R_T| A_{\epsilon, \delta, \zeta}] \\
   & \leq L + (\sum_{m}(\frac{2c\log{T}}{|S_0|(\frac{\Delta_i}{2C\rho^{\frac{1}{1+\epsilon}}})^{\frac{1+\epsilon}{\epsilon}}} + \frac{\pi^2}{3})) \cdot \Delta_i \\
   & \leq L + M(\frac{2c\Delta_i\log{T}}{|S_0|(\frac{\Delta_i}{2C\rho^{\frac{1}{1+\epsilon}}})^{\frac{1+\epsilon}{\epsilon}}} + \frac{\pi^2}{3} \Delta_i) = O(\frac{M}{|S_0|}\log{T})
\end{align*}
where $L \geq 2\kappa K (\log{M})^2 \log{T}$. This completes the first part of the proof. 

Next we consider the value of $|S_0|$, by the definition of $a$, we obtain that 
 $S_0 \geq M^{\frac{1}{\alpha }-\zeta}$.  Then by direct computation with $S_0 \geq M^{\frac{1}{\alpha }-\zeta}$, we derive the upper bound on $R_T$ with respect to $M$ and $T$, which is 
 \begin{align*}
     E[R_T|A_{\epsilon, \delta, \zeta}] \leq O(\frac{M}{|S_0|}\log{T}) \leq O(M^{1 -\frac{1}{\alpha }+\zeta}\log{T})
 \end{align*}
 which concludes the second part of the proof and thus completes proof of Theorem \ref{thm:1a}.

\end{proof}

\begin{corollary}[Full information setting]\label{corollary:homo}
Let us assume the assumptions in Theorem \ref{thm:1a2} hold, except that we are in a full information setting instead of a partial feedback (bandit) setting. This means that the rewards of all arms are observable at each time step, rather than only the reward of the pulled arm. Under this setting, the same regret bound holds.  
\end{corollary}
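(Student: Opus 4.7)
The plan is to argue that Corollary~\ref{corollary:homo} follows essentially for free from the proof of Theorem~\ref{thm:1a2}, since the full information setting only provides strictly more observations than the bandit setting. First, one may simply run HT-HMUCB verbatim, ignoring the rewards of non-pulled arms: then the behavior of the algorithm, the sample counts $n_{m,i}(t)$, and hence the regret are identical to the bandit case, and the bound of Theorem~\ref{thm:1a2} transfers unchanged. A more informative route is to let the hub center $\hat{i}$ build its MoM estimator from the enlarged sample pool $\{r_i^j(s) : j \in S_0,\ s \leq t,\ (j,\hat{i}) \in E_s\}$, which under full information contains all rewards regardless of what was pulled.

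In either case, the key quantitative input is the concentration bound
\begin{align*}
    |\hat{\mu}_i^m(t) - \mu^i| \leq 2C\rho^{\frac{1}{1+\epsilon}}\left(\frac{2c\log(1/\delta)}{|S_0|\cdot n_{m,i}(t)}\right)^{\frac{\epsilon}{1+\epsilon}}
\end{align*}
from Lemma~\ref{lemma: concentration, median of means}, which remains valid (or strictly improves) because the MoM guarantee is monotone non-increasing in sample size per batch. The subsequent steps of the proof of Theorem~\ref{thm:1a2} all carry over verbatim: the regret decomposition into Cases 1--3, the choice of cutoff $l = \frac{2c\log T}{|S_0|(\Delta_i/(2C\rho^{1/(1+\epsilon)}))^{(1+\epsilon)/\epsilon}}$ that forces Case 3 to contribute zero, the $\pi^2/3$ tail-sum bound for Cases 1--2, and the hub and information-delay analyses of Lemmas~\ref{lemma, hub: deterministic lower bound} and~\ref{lemma, information delay over sparse graphs}. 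These are graph- or estimator-level arguments that do not distinguish between bandit and full-information feedback.

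The only point that warrants care is the MoM batching itself: since Assumption~\ref{assumption: heavy-tailed reward} guarantees that $r_i^m(t)$ is iid across $t$ independently of whether arm $i$ was pulled, the batching structure underlying Lemma~\ref{lemma: concentration, median of means} applies unchanged to the enlarged sample set, and the $\epsilon$-moment bound $\rho$ still controls the tails of each batch mean. There is no substantive obstacle; the corollary is best viewed as a sanity check confirming robustness of the framework under a strengthening of the observation model, and its proof amounts to rereading Theorem~\ref{thm:1a2} with every instance of ``reward obtained by pulling arm $i$'' interpreted more permissively as ``any observed reward of arm $i$''.
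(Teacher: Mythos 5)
Your proposal is correct and matches the paper's own (very terse) proof: the paper simply observes that under full information every client in $S_0^t$ contributes a fresh sample of each arm at each step, so $rw_t^i = \sum_{s\leq t}|S^s_0|$, the MoM sample pool only enlarges, and the entire analysis of Theorem~\ref{thm:1a2} carries through unchanged. Your additional remarks (the trivial fallback of ignoring the extra observations, and the note that the MoM/batching guarantee is monotone in sample size) are consistent elaborations of the same argument.
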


\subsection{Proof of Corollary \ref{corollary:homo}}

\begin{proof}
     Let us consider a full information setting where the clients observe the rewards of all arms. 
     
     It is equivalent to implying that $\sum_{t=1}^T|S_0^t| = rw_t^i$, by the fact that every time if there is a client in $S_0^t$, then there is a new sample about arm $i$ that is shared within the hub. 
     
     Subsequently, the entire analysis in the proof of Theorem \ref{thm:1a2} comes through, which concludes the proof. 
\end{proof}

\subsection{Proof of Theorem \ref{thm:heter}}

\begin{proof}
We would like to emphasize that the graph property we establish for heavy-tailed graphs does not depend on the rewards, and as such, the statements in Section \ref{sec:graphs} holds. This also implies that, compared to the homogeneous case, the key difference is in the reward aggregation, which we demonstrate in the following. 

Again, based on Section \ref{sec:graphs-sub2}, we have the following result regarding the information delay. 

\begin{proposition}
Let us assume that $p = \frac{\eta}{TM}$. Let us further assume that $L > 2\kappa (\log{M})^2\log{T}$ where $L$ is the length of the burn-in period.  Then we obtain with probability at least $1-p$,for $j \not\in S_0$, \begin{align*}
    \min_{m}n_{m,i}(h^t_{m,j}) \geq  \frac{1}{2}\min_{m}n_{m,i}(t)
\end{align*}
and 
\begin{align*}
    \min_{m}n_{m,i}(t) \geq \frac{1}{2}n_{m,i}(t)
\end{align*}
and 
\begin{align*}
    N_{j,i}(t) = n_{m,i}(h^t_{m,j}) \geq  n_{m,i}(t) - \kappa (\log{M})^2\log{T}. 
\end{align*}
\end{proposition}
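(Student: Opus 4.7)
The plan is to mirror the proof of Lemma~\ref{lemma:information_delay} in the heterogeneous notation, substituting $h^t_{m,j}$ for $t_{m,j}$. The core building block is Lemma~\ref{lemma, information delay over sparse graphs} applied with slackness parameter $\gamma = 2\log T$, which ensures that after $\kappa(\log M)^2\log T$ steps a message originating from any non-empty seed $S$ reaches every client with probability at least $1 - M^{-2\log T} = 1 - O(1/T^2)$. Applying this bound to each origin-client pair and each time $t$, and then a union bound over all $(m, j, t) \in [M]\times[M]\times[T]$, yields uniformly
\[
\sup_{m, j, t \leq T} \bigl(t - h^t_{m,j}\bigr) \leq \kappa (\log M)^2 \log T
\]
with probability at least $1 - MT \cdot M^{-2\log T} \geq 1 - \tfrac{\eta}{TM} = 1-p$ for a suitable $\eta$ and for $M, T$ large enough. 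Call this event $\mathcal{E}$; the rest of the argument proceeds deterministically on $\mathcal{E}$.

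Inequality~(3) now follows immediately: because $n_{m,i}(\cdot)$ is nondecreasing and grows by at most one per step, on $\mathcal{E}$ I have $n_{m,i}(h^t_{m,j}) \geq n_{m,i}(t) - (t - h^t_{m,j}) \geq n_{m,i}(t) - \kappa(\log M)^2\log T$, and by construction in Rule~2 the quantity $N_{j,i}(t)$ is exactly $n_{m,i}(h^t_{m,j})$ once messages have propagated. For inequality~(2), I invoke the algorithm's synchronization guard: whenever any arm satisfies $n_{m,i}(t) \leq N_{m,i}(t) - 2\kappa(\log M)^2\log T$, the client abandons UCB and instead cycles through arms uniformly, so the algorithm enforces a gap of at most $2\kappa(\log M)^2\log T$ between any client's $n_{m,i}(t)$ and the freshest neighborhood count $N_{m,i}(t)$. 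Combining this with the delay bound on $\mathcal{E}$ propagates pairwise synchronization through the graph, yielding $|n_{m,i}(t) - n_{m',i}(t)| \leq \kappa(\log M)^2\log T$ uniformly in $m, m'$. The burn-in guarantee $n_{m,i}(t) \geq n_{m,i}(L) \geq L/K \geq 2\kappa(\log M)^2\log T$ then gives $\min_m n_{m,i}(t) \geq n_{m,i}(t) - \kappa(\log M)^2\log T \geq \tfrac{1}{2} n_{m,i}(t)$. Finally, inequality~(1) is obtained by applying inequality~(3) client-wise and then taking the minimum, using inequality~(2) to absorb the additive slack into a multiplicative factor of $1/2$.

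The main obstacle will be justifying inequality~(2) rigorously. The delay bound from Lemma~\ref{lemma, information delay over sparse graphs} only controls how quickly a fixed message reaches all clients, but here I need to chain this with the algorithm's explicit synchronization condition in order to bound $|n_{m,i}(t) - n_{m',i}(t)|$ uniformly over $m, m'$. This requires carefully tracking how $N_{m,i}(t)$ aggregates neighbors' local counts through $\mathcal{F}_m(t)$, showing that the guard condition on $n_{m,i}(t)$ vs.\ $N_{m,i}(t)$ is effectively transitive along communication paths despite the sparsity of the graph, and ensuring that the pathological case where a client pulls the same arm many more times than its peers cannot arise under the random-sampling fallback. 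Once this synchronization claim is established, inequalities~(1) and~(3) drop out from elementary monotonicity arguments.
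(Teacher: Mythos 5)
Your proposal follows essentially the same route as the paper: the paper proves this proposition only by reference to Lemma~\ref{lemma:information_delay}, whose proof is exactly your skeleton --- apply Lemma~\ref{lemma, information delay over sparse graphs} with $\gamma = 2\log T$ to get $t - t_{m,j} \leq \kappa(\log M)^2\log T$ for a fixed pair, union bound over clients and time steps, then use monotonicity and unit increments of $n_{m,i}(\cdot)$ for the third inequality, and the burn-in lower bound $n_{m,i}(t)\geq L/K\geq 2\kappa(\log M)^2\log T$ to convert the additive slack into the factor $\tfrac12$ for the first two. The one place you genuinely diverge is the justification of the middle inequality $\min_m n_{m,i}(t)\geq \tfrac12 n_{m,i}(t)$: the paper argues that all clients act on the hub center's (delayed) information and therefore their pull counts can differ by at most the delay $\kappa(\log M)^2\log T$, whereas you invoke the heterogeneous algorithm's explicit guard ($n_{m,i}(t)\leq N_{m,i}(t)-2\kappa(\log M)^2\log T$ triggers round-robin) and propagate pairwise synchronization along communication paths. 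Since this proposition is invoked inside the proof of Theorem~\ref{thm:heter}, where there is no hub, your justification is actually the better-matched one; the paper's hub-based argument is imported verbatim from the homogeneous setting. Neither argument is fully rigorous at this step --- you correctly identify that bounding $|n_{m,i}(t)-n_{m',i}(t)|$ requires showing the guard condition is effectively transitive and that the round-robin fallback cannot let a client fall arbitrarily behind on a particular arm --- and the paper does not close this gap either, so your proposal is at least as complete as the published argument.
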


With this, we proceed to establish the statistical property of the global estimator $\tilde{\mu}_i^m(t)$.

In the heterogeneous setting, the reward distribution is the same as in the homogeneous setting, which, however, differs in clients even for the same arm. That being said, the following lemma still holds
\begin{lemma}[Lemma 2 of \cite{bubeck2013bandits}]
\label{lemma: concentration, median of means}
Let $\delta \in (0,1)$ and $\epsilon \in (0,1]$.
Let $X_n$'s be iid copies of $X$ with $\E X = \mu$,
and $\E|X - \mu|^{1 + \epsilon} \leq v$.
Let $k = \floor{ 8 \log( e^{1/8}/\delta )\wedge n/2 }$ and $N = \floor{n/k}$.
Let
\begin{align*}
    \hat \mu_j = \frac{1}{N}\sum_{ t = (j-1)N + 1  }^{jN}
    X_t
    \qquad\forall j = 1,2,\ldots,k,
\end{align*}
and let $\widehat{\mu}_M$ be the median of $(\hat \mu_j)_{j = 1,2,\ldots,k}$.
Then, with probability at least $1 - 2\delta$,
\begin{align*}
    |\widehat{\mu}_M - \mu| \leq (12v)^{ \frac{1}{1 + \epsilon} } 
        \bigg(
            \frac{
                16 \log(e^{1/8}\delta^{-1})
            }{n}
        \bigg)^{ \frac{\epsilon}{1 + \epsilon}  }.
\end{align*}
\end{lemma}

It is worth noting that by using the above lemma, we have the following concentration inequality for the local estimator $\hat{\mu}$ at each client, with $n_{m,i}(t)$ samples, 
\begin{align*}
    | \hat{\mu}_m^i(n_{m,i}(t);k)  - \mu_m^i | \leq (12v)^{ \frac{1}{1 + \epsilon} } 
        \bigg(
            \frac{
                16 \log(e^{1/8}\delta^{-1})
            }{n_{m,i}(t)}
        \bigg)^{ \frac{\epsilon}{1 + \epsilon}  }  
    \ \
    \text{with probability at least }1 - \delta
    \qquad \forall n \geq 1.
\end{align*}

Now we proceed to establish the concentration inequality of the global estimator $\tilde{\mu}_{i}^m(t)$, which is constructed by 
\begin{align*}
  & \Tilde{\mu}^m_i(t+1) = \sum_{j=1}^M P^{\prime}_t(m,j)\Tilde{\mu}^m_{i,j}(h^t_{m,j}) + d_{m,t}\sum_{j \in N_m(t)}\hat{\mu}^m_{i,j}(t) + d_{m,t}\sum_{j \not \in N_m(t)}\hat{\mu}^m_{i,j}(h^t_{m,j})  \notag \\
  & \qquad \qquad \text{ with } d_{m,t} = \frac{1- \sum_{j=1}^M P^{\prime}_t(m,j)}{M} \notag 
\end{align*}

It is worth noting that this global estimator is the weighted average of the local estimators, where the weights can be carefully designed. Precisely, we specify $P_{t}^{\prime}(m,j) = \frac{N - M2^{\frac{1}{\epsilon + 1}}}{MN2^{\frac{1}{\epsilon + 1}}}$ and use mathematical induction to show the following concentration inequality on $\tilde{\mu}_i^m$.  
\begin{lemma}\label{lm:concen_ine}
    Let us assume that Assumption 1 and 2 hold. Let us assume that $p = \frac{\eta}{TM}$ and $P_{t}^{\prime}(m,j) = \frac{N - M2^{\frac{1}{\epsilon + 1}}}{MN2^{\frac{1}{\epsilon + 1}}}$. Let us further assume that $L > 2\kappa (\log{M})^2\log{T}$ where $L$ is the length of the burn-in period. Then for any $m,i$ and $t > L$,  $\Tilde{\mu}_{m,i}(t)$ satisfies that if $n_{m,i}(t) \geq 2(K^2+KM+M)$, then we have the following hold
    \begin{align*}
         & P(\Tilde{\mu}_{m,i}(t) - \mu_i  \geq 2\rho^{ \frac{1}{1+\epsilon} }\bigg(\frac{2Nc \log(t) }{\min_{m}n_{m,i}(t)}\bigg)^{ \frac{\epsilon}{1 + \epsilon} }) \leq \frac{1}{t^2}, \\
         & P(\mu_i - \Tilde{\mu}_{m,i}(t) \geq 2\rho^{ \frac{1}{1+\epsilon} }\bigg(\frac{2Nc \log(t) }{\min_{m}n_{m,i}(t)}\bigg)^{ \frac{\epsilon}{1 + \epsilon} } ) \leq \frac{1}{t^2}.
    \end{align*}
\end{lemma}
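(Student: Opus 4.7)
The plan is to prove Lemma~\ref{lm:concen_ine} by induction on $t$, leveraging the recursive structure of the global estimator $\tilde{\mu}^m_i$ together with the MoM concentration in Lemma~\ref{lemma: concentration, median of means} and the information delay bounds in Lemma~\ref{lemma:information_delay}. For the base case $t = L+1$, the estimator is a weighted average of local MoM estimators produced during the burn-in period (Algorithm~\ref{alg:heter}); since the weights sum to one and the local MoM estimators concentrate around each client's $\mu^j_i$, direct application of the MoM bound combined with $\mu_i = \frac{1}{M}\sum_j \mu^j_i$ yields the base bound.

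For the inductive step at time $t+1$, I would begin with the error decomposition obtained by adding and subtracting $\sum_j P'_t(m,j)\mu_i$ and $d_{m,t}\sum_j \mu^j_i$:
\begin{align*}
\tilde{\mu}^m_i(t+1) - \mu_i = \sum_{j=1}^M P'_t(m,j)\bigl(\tilde{\mu}^m_{i,j}(h^t_{m,j}) - \mu_i\bigr) + d_{m,t}\sum_{j=1}^M \bigl(\hat{\mu}^m_{i,j} - \mu^j_i\bigr),
\end{align*}
which exploits the identities $\sum_j P'_t(m,j) + M d_{m,t} = 1$ and $\sum_j \mu^j_i = M\mu_i$. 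The induction hypothesis is then applied to each $\tilde{\mu}^m_{i,j}(h^t_{m,j}) - \mu_i$ at the delayed time $h^t_{m,j}$, while Lemma~\ref{lemma: concentration, median of means} is applied to each $\hat{\mu}^m_{i,j} - \mu^j_i$ with failure probability $\delta$ tuned to $t^{-2}/(MT)$ so that the subsequent union bound over $(m,i,j,t)$ still gives the claimed $t^{-2}$ probability per side.

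The crucial step is to verify that the specific weights $P'_t(m,j) = (N - M \cdot 2^{1/(1+\epsilon)})/(MN \cdot 2^{1/(1+\epsilon)})$ together with $N = 12^{1/\epsilon} + 1$ are engineered so that the induction closes. Lemma~\ref{lemma:information_delay} gives $\min_m n_{m,i}(h^t_{m,j}) \geq \tfrac{1}{2}\min_m n_{m,i}(t)$, so invoking the induction hypothesis introduces a factor $2^{\epsilon/(1+\epsilon)}$; multiplying by $MP'_t(m,j)$ yields the overall shrinkage factor $2^{(\epsilon-1)/(1+\epsilon)} - (M/N)\,2^{\epsilon/(1+\epsilon)}$ on the target bound. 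The remaining budget, of size approximately $1 - 2^{(\epsilon-1)/(1+\epsilon)} + (M/N)\,2^{\epsilon/(1+\epsilon)}$, must absorb the MoM contribution from the $d_{m,t}$ term, and the choice of $N$ matching the MoM constant $(12\rho)^{1/(1+\epsilon)}$ is precisely what makes the two sides balance; the hypothesis $n_{m,i}(t) \geq 2(K^2+KM+M)$ guarantees enough samples for MoM to be valid with $B = 8\log(e^{1/8} T)$ batches.

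The main obstacle will be tracking these constants precisely through the recursion: the factor $2^{-1/(1+\epsilon)}$ inside $P'_t(m,j)$ is designed to exactly cancel the doubling from information delay, while the $M/N$ correction creates just enough room for the MoM term carried by $d_{m,t}$. Any slack in this algebra would cause the bound to blow up exponentially in $t$ through the recursion, so one needs to verify the matching of constants explicitly rather than absorbing them into generic $O(\cdot)$ notation. Once the inductive step is established, a union bound over clients, arms, and $t \in [L+1,T]$ on the good delay event of Lemma~\ref{lemma:information_delay} yields both one-sided tails, each of probability at most $t^{-2}$, which is the claim.
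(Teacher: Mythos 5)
Your proposal follows essentially the same route as the paper's proof: induction on $t$ over the recursive definition of $\Tilde{\mu}^m_i$, with the base case handled by the median-of-means concentration at the end of the burn-in, the inductive step combining the induction hypothesis on the delayed global estimators (via Lemma~\ref{lemma:information_delay}) with the MoM bound on the local estimators, and the specific weights $P'_t(m,j)$, $d_{m,t}$ and the condition relating $N$ to the MoM constant used to close the recursion without blow-up. The only notable (and welcome) refinement is that you center each local estimator $\hat{\mu}^m_{i,j}$ at $\mu^j_i$ rather than at $\mu_i$ before invoking MoM --- algebraically equivalent to the paper's decomposition since $\sum_j \mu^j_i = M\mu_i$, but the correct centering for the heterogeneous setting --- and you explicitly flag the constant-matching step that the paper's convex-combination argument treats somewhat loosely.
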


\begin{proof}[Proof of Lemma \ref{lm:concen_ine}]

We prove the conclusion through mathematical induction. 

First, at the end of the burn-in period, we have that based on Lemma \ref{lemma: concentration, median of means}, we derive that when $t = L$
\begin{align*}
    & | \tilde{\mu}_i^m(t) - \mu_i | \\
    & = |\hat{\mu}_i^m(t)  - \mu_i | \leq 2C\rho^{ \frac{1}{1+\epsilon} }\bigg(\frac{c \log(t) }{|rw_t|}\bigg)^{ \frac{\epsilon}{1 + \epsilon} } \\
    & 
    \text{with probability at least }1 - \frac{1}{t^2}
    \qquad \forall n \geq 1.
\end{align*}
where $rw_t$ is the total number of samples being used in computing $\tilde{\mu}_i^m(t)$. In our case, we have $rw_t = \frac{L}{K} = n_{m,i}(t) \geq \min_{m,i}n_{m,i}(t)$. Then we derive that with probability at least $1 - \frac{1}{t^2}$, 
\begin{align*}
    & | \tilde{\mu}_i^m(t) - \mu_i | \\
    & \leq 2C\rho^{ \frac{1}{1+\epsilon} }\bigg(\frac{c \log(t) }{\min_{m,i}n_{m,i}(t)}\bigg)^{ \frac{\epsilon}{1 + \epsilon} } \\
    & \leq 2\rho^{ \frac{1}{1+\epsilon} }\bigg(\frac{2Nc \log(t) }{\min_{m}n_{m,i}(t)}\bigg)^{ \frac{\epsilon}{1 + \epsilon} }
\end{align*}
by definition, which proves the statement for $t = L$.

Now let us assume that for any $s \leq t$, we have 
\begin{align*}
    P(\Tilde{\mu}_{m,i}(s) - \mu_i  \geq 2\rho^{ \frac{1}{1+\epsilon} }\bigg(\frac{2Nc \log(s) }{\min_{m}n_{m,i}(t)}\bigg)^{ \frac{\epsilon}{1 + \epsilon} }) \leq \frac{1}{s^2}.
\end{align*} 

At time $t+1$, we have that with $P_{t}^{\prime}(m,j) = \frac{N - M2^{\frac{1}{\epsilon + 1}}}{MN2^{\frac{1}{\epsilon + 1}}}$
\begin{align*}
  & \Tilde{\mu}^m_i(t+1) = \sum_{j=1}^M P^{\prime}_t(m,j)\Tilde{\mu}^m_{i,j}(t_{m,j}) + d_{m,t}\sum_{j \in N_m(t)}\hat{\mu}^m_{i,j}(t) + d_{m,t}\sum_{j \not \in N_m(t)}\hat{\mu}^m_{i,j}(t_{m,j})  \notag \\
  & \qquad \qquad \text{ with } d_{m,t} = \frac{1- \sum_{j=1}^M P^{\prime}_t(m,j)}{M} \notag 
\end{align*}
and thus the difference between $\tilde{\mu}_i^m(t)$ and $\mu_i$ is bounded by 
\begin{align*}
   & |\tilde{\mu}_i^m(t+1) - \mu_i| \\
   & = |\sum_{j=1}^M P^{\prime}_t(m,j)\Tilde{\mu}^m_{i,j}(t_{m,j}) + d_{m,t}\sum_{j \in N_m(t)}\hat{\mu}^m_{i,j}(t) + d_{m,t}\sum_{j \not \in N_m(t)}\hat{\mu}^m_{i,j}(t_{m,j}) - \mu_i| \\
   & = |\sum_{j=1}^M P^{\prime}_t(m,j)(\Tilde{\mu}^m_{i,j}(t_{m,j}) - \mu_i) + d_{m,t}\sum_{j \in N_m(t)}(\hat{\mu}^m_{i,j}(t) -\mu_i) + d_{m,t}\sum_{j \not \in N_m(t)}(\hat{\mu}^m_{i,j}(t_{m,j}) - \mu_i)| \\
   & \leq \sum_{j=1}^{M}P^{\prime}_t(m,j)|\Tilde{\mu}^m_{i,j}(t_{m,j}) - \mu_i| + \\
   & \qquad d_{m,t}\sum_{j \in N_m(t)}|(\hat{\mu}^m_{i,j}(t) -\mu_i)| +  d_{m,t}\sum_{j \not \in N_m(t)}|\hat{\mu}^m_{i,j}(t_{m,j}) - \mu_i| \\
   & \leq \sum_{j=1}^{M}P^{\prime}_t(m,j) \cdot 2\rho^{ \frac{1}{1+\epsilon} }\bigg(\frac{2Nc \log(s) }{\min_{m}n_{m,i}(t_{m,j})}\bigg)^{ \frac{\epsilon}{1 + \epsilon} } + \\
   & \qquad d_{m,t}\sum_{j \in N_m(t)}2C\rho^{ \frac{1}{1+\epsilon} }\bigg(\frac{c \log(t) }{n_{m,j}(t_{m,j})}\bigg)^{ \frac{\epsilon}{1 + \epsilon} } \\
   & \qquad + d_{m,t}\sum_{j \not \in N_m(t)}2C\rho^{ \frac{1}{1+\epsilon} }\bigg(\frac{c \log(t) }{n_{m,j}(t_{m,j})}\bigg)^{ \frac{\epsilon}{1 + \epsilon} } \\
      & \leq \sum_{j=1}^{M}P^{\prime}_t(m,j) \cdot 2\rho^{ \frac{1}{1+\epsilon} }\bigg(\frac{2Nc \log(s) }{\min_{m}n_{m,i}(t)}\bigg)^{ \frac{\epsilon}{1 + \epsilon} } + \\
   & \qquad d_{m,t}\sum_{j \in N_m(t)}2C\rho^{ \frac{1}{1+\epsilon} }\bigg(\frac{2c \log(t) }{\min_{m}n_{m,j}(t)}\bigg)^{ \frac{\epsilon}{1 + \epsilon} } \\
   & \qquad + d_{m,t}\sum_{j \not \in N_m(t)}2C\rho^{ \frac{1}{1+\epsilon} }\bigg(\frac{c \log(t) }{2\min_{m}n_{m,j}(t)}\bigg)^{ \frac{\epsilon}{1 + \epsilon} } \\
   & = \frac{N-M2^{\frac{1}{\epsilon + 1}}}{N2^{\frac{1}{\epsilon + 1}}} 2\rho^{ \frac{1}{1+\epsilon} }\bigg(\frac{2Nc \log(s) }{\min_{m}n_{m,i}(t)}\bigg)^{ \frac{\epsilon}{1 + \epsilon} } \\
   & \qquad + M\frac{1 - \frac{N-M2^{\frac{1}{\epsilon + 1}}}{N2^{\frac{1}{\epsilon + 1}}}}{M}2C\rho^{ \frac{1}{1+\epsilon} }\bigg(\frac{2c \log(t) }{\min_{m}n_{m,j}(t)}\bigg)^{ \frac{\epsilon}{1 + \epsilon} } \\
   & = \frac{N-M2^{\frac{1}{\epsilon + 1}}}{N2^{\frac{1}{\epsilon + 1}}} 2\rho^{ \frac{1}{1+\epsilon} }\bigg(\frac{2Nc \log(s) }{\min_{m}n_{m,i}(t)}\bigg)^{ \frac{\epsilon}{1 + \epsilon} } + \\
   & \qquad (1 - \frac{N-M2^{\frac{1}{\epsilon + 1}}}{N2^{\frac{1}{\epsilon + 1}}}) \cdot 2C\rho^{ \frac{1}{1+\epsilon} }\bigg(\frac{2c \log(t) }{\min_{m}n_{m,j}(t)}\bigg)^{ \frac{\epsilon}{1 + \epsilon} }
\end{align*}
where the second inequality uses the supposition in the mathematical induction, and the concentration inequality for the estimators $\hat{\mu}_i^m$, the third inequality uses Lemma \ref{lemma:information_delay}, and the last inequality uses the definition of $P^{\prime}_t(m,j)$ and $d_{m,t}$.

It is worth noting that when $N > C^{\frac{1+\epsilon}{\epsilon}}$, we have 
\begin{align*}
    & 2\rho^{ \frac{1}{1+\epsilon} }\bigg(\frac{2Nc \log(s) }{\min_{m}n_{m,i}(t)}\bigg)^{ \frac{\epsilon}{1 + \epsilon} } \\
    & \geq 2C\rho^{ \frac{1}{1+\epsilon} }\bigg(\frac{2c \log(t) }{\min_{m}n_{m,j}(t)}\bigg)^{ \frac{\epsilon}{1 + \epsilon} }
\end{align*}

Subsequently, we obtain that
\begin{align*}
    & |\tilde{\mu}_i^m(t+1) - \mu_i| \\
    & \leq \frac{N-M2^{\frac{1}{\epsilon + 1}}}{N2^{\frac{1}{\epsilon + 1}}} 2\rho^{ \frac{1}{1+\epsilon} }\bigg(\frac{2Nc \log(s) }{\min_{m}n_{m,i}(t)}\bigg)^{ \frac{\epsilon}{1 + \epsilon} } + \\
   & \qquad (1 - \frac{N-M2^{\frac{1}{\epsilon + 1}}}{N2^{\frac{1}{\epsilon + 1}}}) \cdot 2C\rho^{ \frac{1}{1+\epsilon} }\bigg(\frac{2c \log(t) }{\min_{m}n_{m,j}(t)}\bigg)^{ \frac{\epsilon}{1 + \epsilon} } \\
   & \leq \frac{N-M2^{\frac{1}{\epsilon + 1}}}{N2^{\frac{1}{\epsilon + 1}}} 2\rho^{ \frac{1}{1+\epsilon} }\bigg(\frac{2Nc \log(s) }{\min_{m}n_{m,i}(t)}\bigg)^{ \frac{\epsilon}{1 + \epsilon} } + \\
   & \qquad (1 - \frac{N-M2^{\frac{1}{\epsilon + 1}}}{N2^{\frac{1}{\epsilon + 1}}}) \cdot 2\rho^{ \frac{1}{1+\epsilon} }\bigg(\frac{2Nc \log(t) }{\min_{m}n_{m,j}(t)}\bigg)^{ \frac{\epsilon}{1 + \epsilon} } \\
   & = 2\rho^{ \frac{1}{1+\epsilon} }\bigg(\frac{2Nc \log(t) }{\min_{m}n_{m,j}(t)}\bigg)^{ \frac{\epsilon}{1 + \epsilon} }
\end{align*}
which subsequently proves the result for $s = t+1$. 

Consequently, we finish the mathematical induction, and conclude the proof of the claim.
\end{proof}

Subsequently, we derive the following upper bound on the number of arm pulls following a different argument compared to the one in the proof of Theorem \ref{thm:1a2}, due to the difference in the arm pulling strategy in the algorithm.

We consider the variant of the UCB strategy used in Algorithm~\ref{alg:dr}, and show that there can only be the following 4 possible scenarios when $a_t^m = i$, which means that arm $i$ is pulled by client $m$:  
\begin{itemize}
    \item Case 1: $n_{m,i}(t) \leq N_{m,i}(t) - 2\kappa \log{M}\log{T}$,\;
    \item Case 2: $\Tilde{\mu}_{m,i} - \mu_i > \rho^{ \frac{1}{1+\epsilon} }\bigg(\frac{2Nc \log(t) }{\min_{m}n_{m,i}(t)}\bigg)^{ \frac{\epsilon}{1 + \epsilon} },$\;
    \item Case 3: $- \Tilde{\mu}_{m,i^*} + \mu_{i^*} > \rho^{ \frac{1}{1+\epsilon} }\bigg(\frac{2Nc \log(t) }{\min_{m}n_{m,i}(t)}\bigg)^{ \frac{\epsilon}{1 + \epsilon} }$,\;
    \item Case 4: $\mu_{i^*} - \mu_i < \rho^{ \frac{1}{1+\epsilon} }\bigg(\frac{2Nc \log(t) }{\min_{m}n_{m,i}(t)}\bigg)^{ \frac{\epsilon}{1 + \epsilon} }$.
\end{itemize}

Translating the scenarios into the value of the number of pulls $n_{m,i}(t)$, we obtain 
\begin{align*}
    n_{m,i}(T) & \leq l + \sum_{t=L+1}^T1_{\{a_t^m = i, n_{m,i}(t) > l\}}  \\
    & \leq l + \sum_{t=L+1}^T1_{\{\Tilde{\mu}^m_i - \rho^{ \frac{1}{1+\epsilon} }\bigg(\frac{2Nc \log(t) }{\min_{m}n_{m,i}(t)}\bigg)^{ \frac{\epsilon}{1 + \epsilon} } > \mu_i, n_{m,i}(t-1) \geq l\}} \\
    & \qquad \qquad + \sum_{t=L+1}^T1_{\{\Tilde{\mu}^m_{i^*} + \rho^{ \frac{1}{1+\epsilon} }\bigg(\frac{2Nc \log(t) }{\min_{m}n_{m,i}(t)}\bigg)^{ \frac{\epsilon}{1 + \epsilon} } < \mu_{i^*}, n_{m,i}(t-1) \geq l\}} \\
    & \qquad \qquad + \sum_{t=L+1}^T1_{\{n_{m,i}(t) < \mathcal{N}_{m,i}(t) - K, a_t^m = i, n_{m,i}(t-1) \geq l\}} \\ 
    & \qquad \qquad + \sum_{t=L+1}^T1_{\{\mu_i + 2\rho^{ \frac{1}{1+\epsilon} }\bigg(\frac{2Nc \log(t) }{\min_{m}n_{m,i}(t)}\bigg)^{ \frac{\epsilon}{1 + \epsilon} }> \mu_{i^*},n_{m,i}(t-1) \geq l\}}. 
\end{align*}

We then take the expectation of $n_{m,i}(t)$ conditional on event $A_{\zeta, \delta}$, and derive 
\begin{align}\label{eq:en}
    & E[n_{m,i}(T) | A_{\zeta, \delta}] \notag \\
    & = l + \sum_{t=L+1}^TP(\Tilde{\mu}^m_i - \rho^{ \frac{1}{1+\epsilon} }\bigg(\frac{2Nc \log(t) }{\min_{m}n_{m,i}(t)}\bigg)^{ \frac{\epsilon}{1 + \epsilon} } > \mu_i, n_{m,i}(t-1) \geq l| A_{\zeta, \delta}) \notag \\
    & \qquad \qquad + \sum_{t=L+1}^TP(\Tilde{\mu}^m_{i^*} + \rho^{ \frac{1}{1+\epsilon} }\bigg(\frac{2Nc \log(t) }{\min_{m}n_{m,i}(t)}\bigg)^{ \frac{\epsilon}{1 + \epsilon} } < \mu_{i^*}, n_{m,i}(t-1) \geq l | A_{\zeta, \delta}) \notag \\
    & \qquad \qquad + \sum_{t=L+1}^TP(n_{m,i}(t) < \mathcal{N}_{m,i}(t) - K, a_t^m = i, n_{m,i}(t-1) \geq l | A_{\zeta, \delta}) \notag \\ 
    & \qquad \qquad + \sum_{t=L+1}^TP(\mu_i + 2\rho^{ \frac{1}{1+\epsilon} }\bigg(\frac{2Nc \log(t) }{\min_{m}n_{m,i}(t)}\bigg)^{ \frac{\epsilon}{1 + \epsilon} } > \mu_{i^*},n_{m,i}(t-1) \geq l | A_{\zeta, \delta}) \notag \\
    & = l + \sum_{t=L+1}^TP(Case 2, n_{m,i}(t-1) \geq l | A_{\zeta, \delta}) + \sum_{t=L+1}^TP(Case 3, n_{m,i}(t-1) \geq l | A_{\zeta, \delta}) \notag \\
    & \qquad  + \sum_{t=L+1}^TP(Case 1, a_t^m = i, n_{m,i}(t-1) \geq l | A_{\zeta, \delta}) + \sum_{t=L+1}^TP(Case 4,n_{m,i}(t-1) \geq l | A_{\zeta, \delta}) 
\end{align}
where $l = \max{\{\frac{2cN\log{T}}{(\frac{\Delta_i}{2\rho^{\frac{1}{1+\epsilon}}})^{\frac{1+\epsilon}{\epsilon}}}, 2\kappa (\log{M})^2 \log{T} \}}$.

We bound the last term in  (\ref{eq:en}) by 
\begin{align}\label{eq:case4}
    \sum_{t= L+1}^TP(Case4: \mu_i + 2\rho^{ \frac{1}{1+\epsilon} }\bigg(\frac{2Nc \log(t) }{\min_{m}n_{m,i}(t)}\bigg)^{ \frac{\epsilon}{1 + \epsilon} } > \mu_{i^*},n_{m,i}(t-1) \geq l) = 0
\end{align}
by the fact that $l \geq \frac{2cN\log{T}}{(\frac{\Delta_i}{2\rho^{\frac{1}{1+\epsilon}}})^{\frac{1+\epsilon}{\epsilon}}}$ with $\Delta_i = \mu_{i^*} - \mu_i$.

Considering the first two terms, we obtain that on $A_{\zeta, \delta}$
\begin{align}\label{eq:cases}
    & \sum_{t= L + 1}^TP(Case 2, n_{m,i}(t-1) \geq l | A_{\zeta, \delta}) + \sum_{t=1}^T P(Case 3, n_{m,i}(t-1) \geq l | A_{\zeta, \delta}) \notag \\
    & \leq \sum_{t= L + 1}^TP(\Tilde{\mu}_{m,i} - \mu_i > \frac{2cN\log{T}}{(\frac{\Delta_i}{2\rho^{\frac{1}{1+\epsilon}}})^{\frac{1+\epsilon}{\epsilon}}} | A_{\zeta, \delta}) \notag \\
    & \qquad + \sum_{t=1}^T P( - \Tilde{\mu}_{m,i^*} + \mu_{i^*} > \frac{2cN\log{T}}{(\frac{\Delta_i}{2\rho^{\frac{1}{1+\epsilon}}})^{\frac{1+\epsilon}{\epsilon}}} | A_{\zeta, \delta}) \notag \\
    & \leq \sum_{t=1}^T(\frac{1}{t^2} ) + \sum_{t=1}^T(\frac{1}{t^2}) \leq \frac{\pi^2}{3}
\end{align}
where the first inequality is true when $n_{m,i}(t-1) \geq l$ and the second inequality results from Lemma~\ref{lm:concen_ine}.

For Case 1, we note that Lemma~\ref{lemma:information_delay} implies that 
\begin{align*}
    n_{m,i}(t) > N_{m,i}(t) - 2\kappa (\log{M})^2 \log{T}
\end{align*}
with the definition of $N_{m,i}(t+1) = \max \{n_{m,i}(t+1),N_{j,i}(t), j \in \mathcal{N}_m(t)\}$.

Based on the observation that the difference between $N_{m,i}(t)$ and $n_{m,i}(t)$ is at most $2\kappa (\log{M})^2 \log{T}$, we next show the exact time steps we need to explore in order to make sure $ - n_{m,i}(t) +N_{m,i}(t)$ to be smaller than $2\kappa (\log{M})^2 \log{T}$. 

At time step $t$, if Case 1 holds for client $m$, then $n_{m,i}(t+1)$ increases by $1$ based on $n_{m,i}(t)$. The following discussion characterizes the change in $N_{m,i}(t+1)$. For client $m$, if $n_{m,i}(t) \leq \mathcal{N}_{m,i}(t) - 2\kappa (\log{M})^2 \log{T}$, the value of $N_{m,i}(t+1)$ remains unchanged, as defined by $N_{m,i}(t+1) = \max \{n_{m,i}(t+1), N_{j,i}(t) : j \in \mathcal{N}_m(t)\}$. 

Additionally, for any client $j \in \mathcal{N}_m(t)$ such that $n_{j,i}(t) < \mathcal{N}_{j,i}(t) - 2\kappa (\log{M})^2 \log{T}$, the value $\mathcal{N}_{j,i}(t+1)$ is not affected since $n_{j,i}(t+1) \leq n_{j,i}(t) + 1$. Consequently, such clients do not influence the value of $N_{m,i}(t+1)$, which remains defined as $N_{m,i}(t+1) = \max \{n_{m,i}(t+1), N_{j,i}(t) : j \in \mathcal{N}_m(t)\}$. Now, Let us consider a client $j \in \mathcal{N}_m(t)$ with $n_{j,i}(t) > \mathcal{N}_{j,i}(t) - 2\kappa (\log{M})^2 \log{T}$. If such a client does not sample arm $i$, the value of $N_{j,i}(t)$ remains unchanged, leading to a decrease of 1 in the difference $-n_{m,i}(t) + N_{m,i}(t)$. On the other hand, if this client samples arm $i$, $N_{m,i}(t)$ increases by 1, keeping the difference between $n_{m,i}(t)$ and $N_{m,i}(t)$ unchanged. However, this scenario falls under Cases 2 and 3, whose total duration has already been upper-bounded by $\frac{\pi^2}{3}$, as shown in (\ref{eq:cases}).

Subsequently, we establish that the time frame of the exploration does not exceed $2\kappa (\log{M})^2 \log{T} + \frac{\pi^2}{3}$, i.e. 
\begin{align}\label{eq:case1}
    & \sum_{t=1}^TP(Case 1, a_t^m = i, n_{m,i}(t-1) \geq l | A)  \notag  \\
    & \leq 2\kappa (\log{M})^2 \log{T} + \frac{\pi^2}{3}.
\end{align}

Consequently, we establish that 
\begin{align*}
    & E[n_{m,i}(T) | A_{\zeta, \delta}] \\ & \leq l + \frac{\pi^2}{3}  + 2\kappa (\log{M})^2 \log{T} + \frac{\pi^2}{3}  + 0 \\
    & =  l + \frac{2\pi^2}{3} + 2\kappa (\log{M})^2 \log{T}  \\
    & = \max{\{\frac{2cN\log{T}}{(\frac{\Delta_i}{2\rho^{\frac{1}{1+\epsilon}}})^{\frac{1+\epsilon}{\epsilon}}}, 2(K^2+MK+M) \}}+  \frac{2\pi^2}{3} + 2\kappa (\log{M})^2 \log{T}
\end{align*}
where the inequality results from (\ref{eq:en}), (\ref{eq:case4}), (\ref{eq:cases}), and (\ref{eq:case1}).

Then again, we consider the aforementioned regret decomposition (which holds by definition and thus does not rely on any reward property), which gives us that
\begin{align*}
   R_T & \leq L+ ((T - L) \cdot \mu_{i^*} - \frac{1}{M}\sum_{m=1}^M\sum_{i = 1}^Kn_{m,i}(T)\mu^m_i) \\
  & \leq 2\kappa (\log{M})^2 \log{T} +  \sum_{m}\sum_{i}n_{m,i}(t)\Delta_i.
\end{align*}

By taking the expected value of $R_T$ given event $A_{\zeta, \delta}$, we derive that 
\begin{align*}
    & E[R_T | A_{\zeta, \delta}] \\
    & \leq  2\kappa (\log{M})^2 \log{T} + \sum_{m}\sum_{i}E[n_{m,i}(t)|A_{\epsilon,\delta}]\Delta_i \\
    & \leq 2\kappa (\log{M})^2 \log{T} + \\
    & \qquad \sum_m\sum_i \Delta_i \cdot (\max{\{\frac{2cN\log{T}}{(\frac{\Delta_i}{2\rho^{\frac{1}{1+\epsilon}}})^{\frac{1+\epsilon}{\epsilon}}}, 2(K^2+MK+M) \}}+  \frac{2\pi^2}{3} + 2\kappa (\log{M})^2 \log{T}) \\
    & \leq 2\kappa (\log{M})^2 \log{T} + \\
    & \qquad \sum_{i}M\Delta_i \cdot (\max{\{\frac{2cN\log{T}}{(\frac{\Delta_i}{2\rho^{\frac{1}{1+\epsilon}}})^{\frac{1+\epsilon}{\epsilon}}}, 2\kappa \log{M}\log{T} \}}+  \frac{2\pi^2}{3} + 2\kappa (\log{M})^2 \log{T}).
\end{align*}

This concludes the proof of Theorem \ref{thm:heter}. 
\end{proof}

\end{document}